\documentclass{article}


\usepackage[final, nonatbib]{styles/neurips_2024}




\usepackage{fix-cm}
\usepackage[utf8]{inputenc} 
\usepackage[T1]{fontenc}    
\usepackage[backref=page]{hyperref}       
\usepackage{url}            
\usepackage{times}
\usepackage{microtype}      
\usepackage{graphicx}
\usepackage{xcolor}         

\usepackage{amsmath}
\usepackage{amsfonts}       
\usepackage{amsthm}         
\usepackage{mathtools}
\usepackage{nicefrac}       
\usepackage[super]{nth}
\usepackage{siunitx}
\sisetup{output-exponent-marker=\ensuremath{\mathrm{e}}}
\theoremstyle{plain}
\newtheorem{Theorem}{Theorem}

\newtheorem{Lemma}{Lemma}


\usepackage{booktabs}       
\usepackage{tabularx}
\usepackage{makecell}

\usepackage{adjustbox}
\usepackage{svg}
\usepackage{microtype}
\usepackage{graphicx}
\usepackage[font=small]{caption}
\usepackage{subfigure}

\usepackage{algorithm}
\usepackage{algpseudocode}

\setlength{\belowcaptionskip}{0pt}

\usepackage{acro}
\newcommand{\newac}[2]{\DeclareAcronym{#1}{short=#1,long=#2}}
\newac{CC}    {Constant Curvature}
\newac{CFA-CON}{Closed-Form Approximation of the Coupled Oscillator Network}
\newac{CFA-UDCON}{Closed-Form Approximation of the Underdamped Coupled Oscillator Network}
\newac{CON}    {Coupled Oscillator Network}
\newac{coRNN}  {Coupled Oscillatory Recurrent Neural Network}
\newac{CNN}    {Convolutional Neural Network}
\newac{DOF}    {Degree of Freedom}
\newac{EOM}    {Equation of Motion}
\newac{GAS}    {Global Asymptotic Stability}
\newac{GRU}    {Gated Recurrent Unit}
\newac{HNN}    {Hamiltonian Neural Network}
\newac{ISS}    {Input-to-State Stability}
\newac{LQR}    {Linear-quadratic Regulator}
\newac{LNN}    {Lagrangian Neural Network}
\newac{MLP}    {Multilayer Perceptron}
\newac{MPC}    {Model Predictive Control}
\newac{MSE}    {Mean Squared Error}
\newac{NODE}   {Neural ODE}
\newac{ODE}    {Ordinary Differential Equation}
\newac{RL}     {Reinforcement Learning}
\newac{RMSE}   {Root Mean Squared Error}
\newac{RNN}    {Recurrent Neural Network}
\newac{PCC}    {Piecewise Constant Curvature}
\newac{PCS}    {Piecewise Constant Strain}
\newac{PDE}    {Partial Differential Equation}
\newac{PSNR}   {Peak Signal-to-Noise Ratio}
\newac{SSIM}   {Structural Similarity Index Measure}
\newac{VAE}    {Variational Autoencoder}

\title{Input-to-State Stable Coupled Oscillator Networks for Closed-form Model-based Control in Latent Space}

\author{%
  Maximilian Stölzle\\
  Department of Cognitive Robotics\\
  Delft University of Technology\\
  \texttt{M.W.Stolzle@tudelft.nl}
  \And
  Cosimo Della Santina\\
  Department of Cognitive Robotics\\
  Delft University of Technology\\
  \texttt{C.DellaSantina@tudelft.nl}
}

\begin{document}

\maketitle

\begin{abstract}
Even though a variety of methods have been proposed in the literature, efficient and effective latent-space control (i.e., control in a learned low-dimensional space) of physical systems remains an open challenge.
We argue that a promising avenue is to leverage powerful and well-understood closed-form strategies from control theory literature in combination with learned dynamics, such as potential-energy shaping.
We identify three fundamental shortcomings in existing latent-space models that have so far prevented this powerful combination: (i) they lack the mathematical structure of a physical system, (ii) they do not inherently conserve the stability properties of the real systems, (iii) these methods do not have an invertible mapping between input and latent-space forcing.
This work proposes a novel Coupled Oscillator Network (CON) model that simultaneously tackles all these issues. 
More specifically, (i) we show analytically that CON is a Lagrangian system - i.e., it possesses well-defined potential and kinetic energy terms. Then, (ii) we provide formal proof of global Input-to-State stability using Lyapunov arguments.
Moving to the experimental side, we demonstrate that CON reaches SoA performance when learning complex nonlinear dynamics of mechanical systems directly from images.
An additional methodological innovation contributing to achieving this third goal is an approximated closed-form solution for efficient integration of network dynamics, which eases efficient training.
We tackle (iii) by approximating the forcing-to-input mapping with a decoder that is trained to reconstruct the input based on the encoded latent space force.
Finally, we leverage these three properties and show that they enable latent-space control. We use an integral-saturated PID with potential force compensation and demonstrate high-quality performance on a soft robot using raw pixels as the only feedback information.
\end{abstract}

\begin{figure}[ht]
    \centering
    \subfigure[Coupled Oscillator Network (CON)]{\includegraphics[width=0.45\columnwidth]{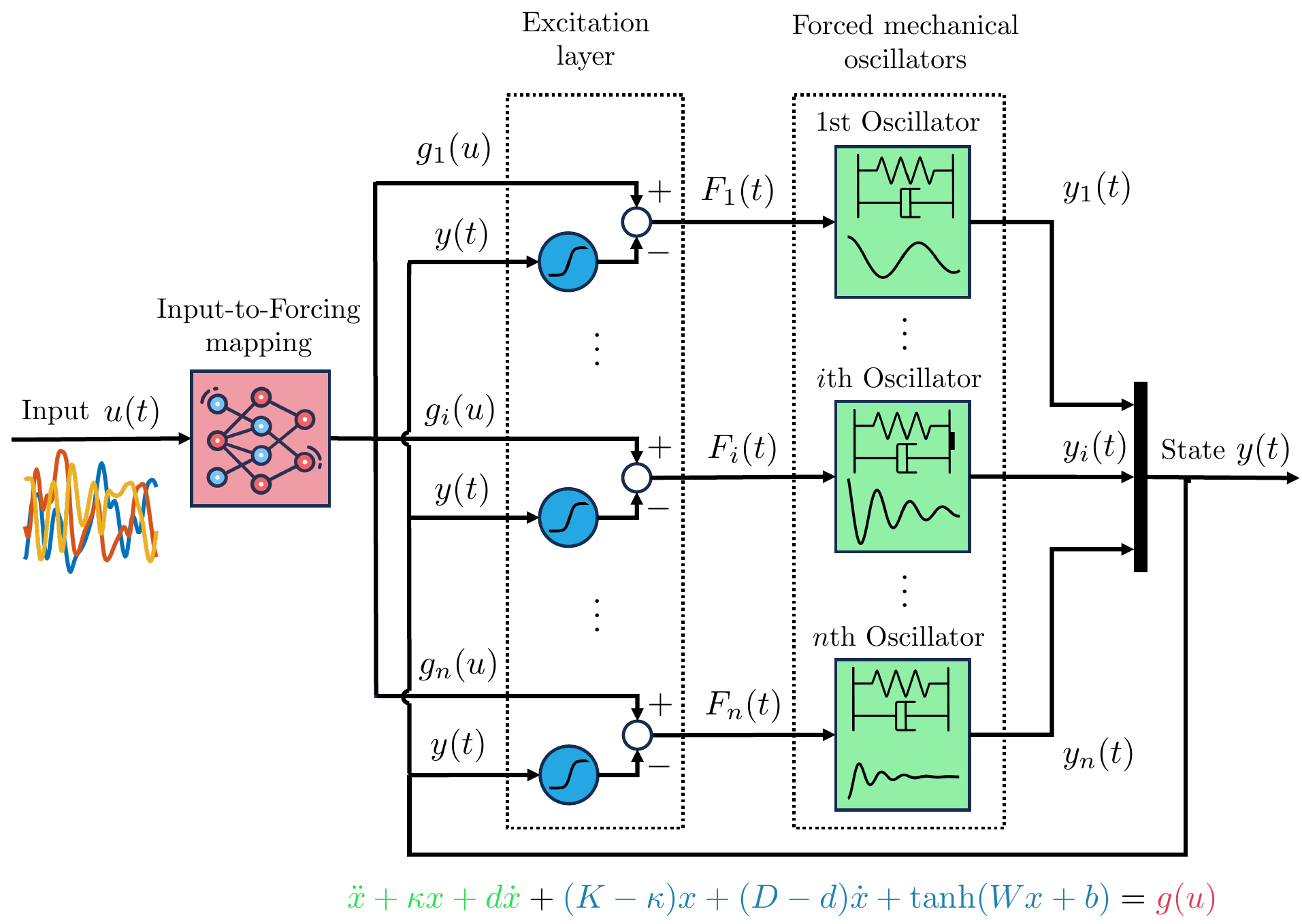}\label{fig:con}}
    \subfigure[Learning latent-space dynamics with CON]{\includegraphics[width=0.53\columnwidth]{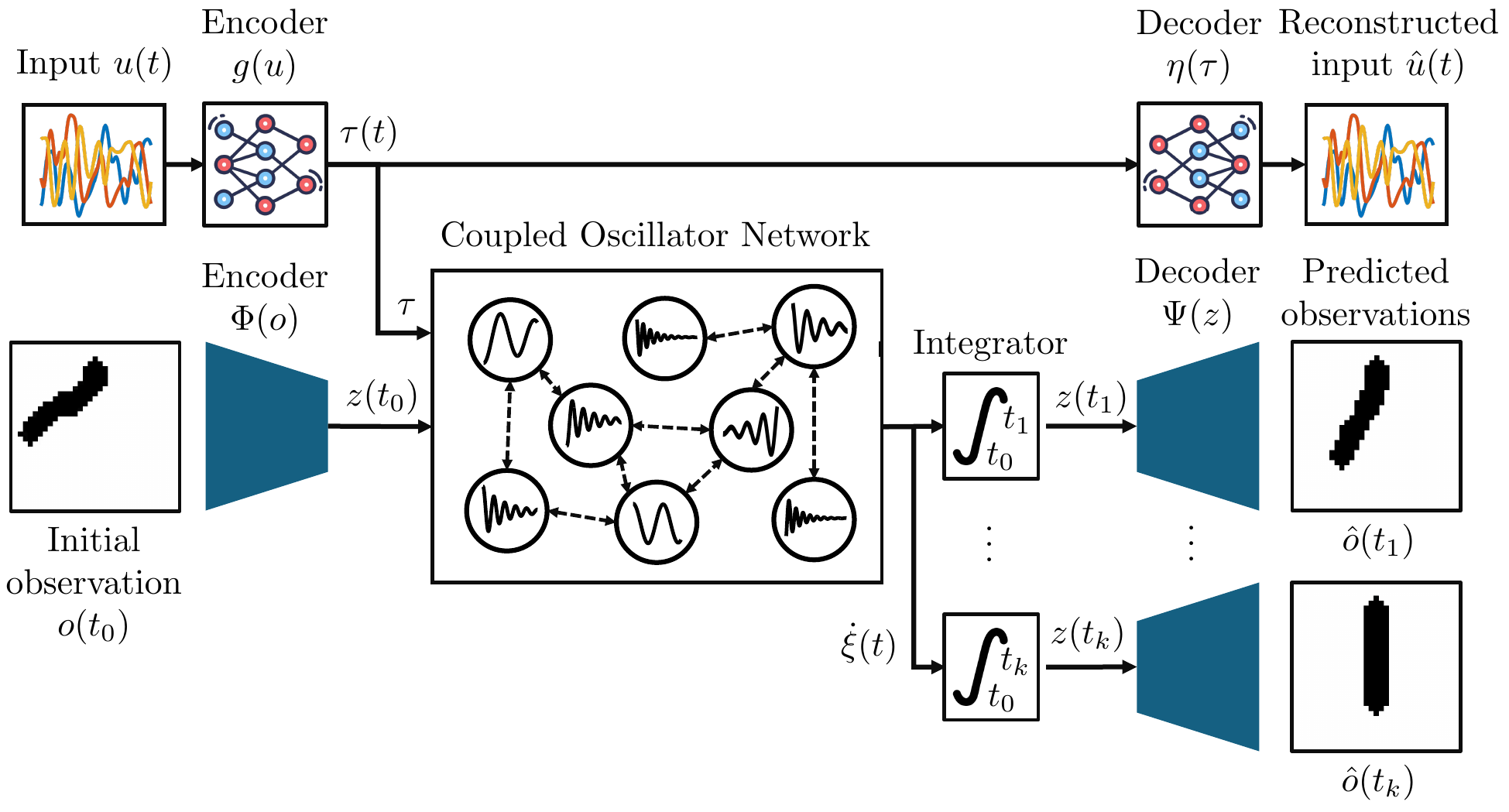}\label{fig:blockdiagram_autoencoder}}
    \caption{\textbf{Panel (a)}: The proposed CON network consists of $n$ damped harmonic oscillators that are coupled through the neuron-like connection $\tanh(Wx+b)$ and the non-diagonal stiffness $K-k$ and damping coefficients $D-d$, respectively. The state of the network is captured by the positions $x(t)$ and velocities $\dot{x}(t)$ of the oscillators. The time-dependent input is mapped through the (possibly nonlinear) function $g(u)$ to a forcing $\tau$ acting on the oscillators. 
    \textbf{Panel (b):} Exploiting \acp{CON} for learning latent dynamics from pixels: We encode the initial observation $o(t_0)$ and the input $u(t)$ into latent space where we leverage the \ac{CON} to predict future latent states. Finally, we decode both the latent-space torques $\tau(t)$ and the predicted latent states $z(t)$.
    }
    \vspace{-0.5cm}
\end{figure}

\section{Introduction}
Learning how the environment evolves around us from high-dimensional observations (i.e., world models~\cite{ha2018world}) is essential for achieving both artificial and physical intelligence~\cite{hafner2023mastering}.
For example, world models are required for effectively planning an artificial/robotic agent's actions in complex and unstructured environments~\cite{matsuo2022deep}. 
However, learning such dynamics directly in high-dimensional observation space is usually intractable. Seminal works have shown that we can leverage autoencoders to compress the state information into a low-dimensional latent space~\cite{liou2014autoencoder, kingma2014auto} in which it is much more feasible to learn the dynamics~\cite{watter2015embed, lenz2015deepmpc, wahlstrom2015learning, champion2019data, zhong2020unsupervised}.
However, strong limitations still persist when it comes to using these learned models to generate low-level intelligence.

One outstanding challenge is how to perform closed-loop control in the learned latent space - i.e., how to generate control inputs based on a high dimensional sensory input such that a desired movement is generated. Prior works have explored, among other approaches, \ac{RL}~\cite{van2016stable, gelada2019deepmdp, hafner2019dream, schwarting2021deep}, \ac{MPC}~\cite{lenz2015deepmpc, hafner2019learning, hewing2020learning, alora2023robust}, \acp{LQR}~\cite{brunton2016koopman, mamakoukas2019local, haggerty2023control} and gradient-based optimization~\cite{yamada2023leveraging} for planning and control towards a target evolution that is given in observation space.
However, all existing latent-space control strategies have shortcomings, such as a limited planning horizon and slow control rates (\ac{MPC} and gradient-based approaches), sample inefficiency (RL), or they pose a requirement for learning linear dynamics~\cite{watter2015embed} (\ac{LQR}), which is not even possible for systems that are inherently non-linearizable~\cite{cenedese2022data}.
One interesting avenue is to leverage model-based control approaches, such as potential shaping~\cite{bloch2001controlled, ortega2021pid, zhong2020unsupervised}, for effective and computationally efficient control in latent space~\cite{lepri2023neural}. 
For these techniques to be feasible, the dynamical model needs to fulfill four characteristics: (i) the dynamics need to have the mathematical structure of physical systems, (ii) conserve the stability properties of real systems, (iii) the latent state needs to be relatively low-dimensional, and (iv) there needs to exist a well-defined, invertible mapping between the input and the forcing in latent space.
However, existing model structures that are used for learning latent dynamics~\cite{botev2021priors} do not meet all of these criteria. Relevant examples are \acp{MLP}, \acp{NODE}~\cite{chen2018neural, sholokhov2023physics}, many variants of \acp{RNN} (e.g., LSTMs~\cite{hochreiter1997long}, \acp{GRU}~\cite{cho2014learning}, etc.), and physics-informed neural networks (e.g., \acp{LNN}~\cite{lutter2019deep, cranmer2020lagrangian, zhong2020unsupervised}, \acp{HNN})~\cite{greydanus2019hamiltonian}. For example, \acp{MLP} do not have a physical interpretation and do not provide an invertible mapping of the forcing generated by the input, \acp{NODE} are usually not easily stabilizable~\cite{white2023stabilized}, most \acp{RNN} require a relatively high-dimensional latent space (i.e., many hidden states), and energy-shaping control approaches based on \acp{LNN}~\cite{zhong2020unsupervised} do not come with any formal stability guarantees. 

In recent years, oscillatory networks~\cite{rusch2020coupled, rusch2021unicornn, ceni2024random, lanthaler2024neural, rusch2024oscillatory} have been shown to exhibit state-of-the-art performance on time sequence modeling tasks while being parameter-efficient, thus fulfilling our requirement (iii).
%
Consequently, we believe that they are a promising option for control-oriented dynamics learning in latent space.
Still, these models do not fulfill the remaining requirements that we have listed above. 
Despite being an interpretable combination of harmonic oscillators, they do not have the structure of a physical system - i.e., they do not possess a well-defined energy function.
Moreover, only local stability~\cite{rusch2020coupled, ceni2024random} has been shown, with sufficient conditions that appear to be very stringent. 
Finally, in addition to training an encoder that maps inputs to latent-space forcing, we propose also training a decoder that learns to reconstruct inputs based on latent-space forcing. This enables us to easily switch between inputs and forcing, which is essential when implementing control strategies.

We resolve all the above-mentioned challenges by proposing \acp{CON}, a new formulation of a coupled oscillator network that is inherently \ac{ISS} stable, for learning the dynamics of physical systems and subsequently exploiting its structure for model-based control in latent space.
The network consists of damped, harmonic oscillators connected through elastic springs, damping elements, and a neuron-like coupling force and can be excited by a nonlinear actuation term.
We identify a transformation into a set of coordinates from which we can derive the networks' kinetic and potential energy.
This allows us to leverage Lyapunov arguments~\cite{khalil2002nonlinear} for proving the global asymptotic stability of the unforced system and \ac{ISS} stability for the forced system under relatively mild assumptions on the network parameters.
Even though we constrain the dynamics to a very specific structure, we demonstrate (a) the CON network achieves similar performance as \acp{NODE} when learning the dynamics of unactuated, mechanical systems with two orders of magnitude fewer parameters and (b) that the proposed model achieves, for the complex task of learning the actuated, highly nonlinear dynamics of continuum soft robots~\cite{alora2023data, stolzle2021piston} directly from pixels, a \SI{60}{\percent} lower prediction error than \ac{coRNN}~\cite{rusch2020coupled} and reaches the SoA performance across all techniques that we tested.
Finally, we show some initial results that the proposed \ac{CON} model is also able to learn the latent dynamics of \acp{PDE}, in this case containing reaction-diffusion~\cite{champion2019data, epstein2016reaction} dynamics.

Subsequently, we derive an approximate closed-form solution, that is, in parameter regimes in which the linear, decoupled dynamics dominate transient, more accurate than numerical integrators with comparable computational requirements and which increases training speed by 2x with a small decrease in prediction accuracy.
Finally, as we can derive the system's potential energy, we can leverage potential shaping~\cite{bloch2001controlled, ortega2021pid} to derive a controller that combines an integral-saturated PID controller with a feedforward term compensating potential forces.
As the feedback acts on a well-shaped potential field, tuning the feedback gains becomes very simple and out-of-the-box, and the controller exhibits a faster response time and a \SI{26}{\percent} lower trajectory tracking \ac{RMSE} than a pure feedback controller based on a latent \ac{NODE}~\cite{chen2018neural} model.

The proposed methodology is particularly well-suited for learning the latent dynamics of mechanical systems with continuous dynamics, dissipation, and a single, attractive equilibrium point. Examples of such systems include many soft robots, deformable objects with dominant elastic behavior, Lagrangian systems immersed in a dominant potential field, or locally other mechanical systems such as robotic manipulators, legged robots, etc. For these systems, we can fully leverage the structural prior of the proposed latent dynamics, including the integrated stability guarantees. If the system is actuated, the learned dynamics can be subsequently exploited for model-based control, as demonstrated in Sec.~\ref{sec:latent_space_control}.

The code associated with this paper is available on GitHub\footnote{\scriptsize \url{https://github.com/tud-phi/uncovering-iss-coupled-oscillator-networks-from-pixels}}.


\section{Input-to-State Stable (ISS) Coupled Oscillator Networks (CONs)}\label{sec:con}

\paragraph{Formulation.} The integral component to (coupled) oscillatory \acp{RNN}~\cite{rusch2020coupled, rusch2021unicornn, ceni2024random, lanthaler2024neural} are one-dimensional, potentially damped, harmonic oscillators, which are described by their state $y_i = \begin{bmatrix}
    x_i & \dot{x}_i
\end{bmatrix}^\mathrm{T} \in \mathbb{R}^2$, where $x_i$ and $\dot{x}_i$ are the position and velocity of the oscillator, respectively. Then, the oscillator's dynamics are defined by the following \ac{EOM}
\begin{equation}\label{eq:harmonic_oscillator}
    m_i \, \ddot{x}_i(t) + d_i \, \dot{x}_i(t) + \kappa_i \, x_i(t) = F_i(t),
    \qquad
    \text{with } m_i, \kappa_i, d_i \in \mathbb{R}^+.
\end{equation}
Here, $m_i$ is the mass, $\kappa_i$ is the stiffness, and $d_i$ is the damping coefficient of the damped harmonic oscillator. $F_i(t) \in \mathbb{R}$ is a (possibly time-dependent) external forcing term acting on the mass.

Even though the state is extremely low dimensional and the number of parameters is small, this single, damped harmonic oscillator can already exhibit a variety of (designable) behaviors:
The expressions $\omega_{\mathrm{n},i} = \sqrt{\frac{\kappa_i}{m_i}}$ and $\zeta_i = \frac{d_i}{2 \, \sqrt{\kappa_i \, m_i}}$ let us determine the natural frequency and the damping factor, respectively and allow us to design the transient behavior. For example, $\omega_{\mathrm{n},i}$ lets us isolate a spectrum of the input signal $F_i(t)$~\cite{ceni2024random} and $\zeta_i$ determines the damping regime: underdamped ($\omega_{\mathrm{n},i} < 1$), critically damped ($\omega_{\mathrm{n},i} = 1$), overdamped ($\omega_{\mathrm{n},i} > 1$). 
Furthermore, as (damped) harmonic oscillators are omnipresent in nature (and especially in physical systems), they have been intensively studied and are well understood (e.g., characteristics, closed-form solutions, etc.). 
In this work, we will exploit some of these properties and knowledge to learn stable (latent) dynamics efficiently.

By intercoupling damped harmonic oscillators, we can drastically increase the expressiveness of the dynamical system~\cite{rusch2020coupled, ceni2024random, lanthaler2024neural} while preserving some of the intuition and understanding we have for these systems. In this work, we propose a \ac{ISS}-stable \ac{CON} consisting of $n$ damped harmonic oscillators that are coupled through both linear and nonlinear terms. The networks' state is defined as $y = \begin{bmatrix}
    x^\mathrm{T} & \dot{x}^\mathrm{T}
\end{bmatrix}^\mathrm{T} \in \mathbb{R}^{2n}$ and its dynamics can be formulated as a \nth{2}-order \ac{ODE}
\small
\begin{equation}\label{eq:con_dynamics}
    \dot{y}(t) = \begin{bmatrix}
        \frac{\mathrm{d}x}{\mathrm{d}t}\\
        \frac{\mathrm{d}\dot{x}}{\mathrm{d}t}
    \end{bmatrix} = f(y(t), u(t)) = \begin{bmatrix}
        \dot{x}(t)\\
        g(u(t)) -K x(t) - D \, \dot{x}(t) - \tanh(W \, x(t) + b)
    \end{bmatrix},
\end{equation}
where $K, D \in \mathbb{R}^{n \times n}$ are the linear stiffness and damping matrices, respectively. The neuron-inspired term $\tanh(W \, x(t) + b)$ with $W \in \mathbb{R}^{n \times n}$, $b \in \mathbb{R}^n$ provides nonlinear coupling between the harmonic oscillators.
The network is excited by the time-dependent input $u(t) \in \mathbb{R}^m$ through the possibly nonlinear mapping $g: \mathbb{R}^m \to \mathbb{R}^n$.
Specifically, we consider in this work a formulation where an input-dependent matrix $B(u) \in \mathbb{R}^{n \times m}$ projects the input $u(t)$ to a time-dependent forcing on the oscillators: $\tau = g(u) = B(u) \, u$. Here $B(u)$ could, for example, be parametrized by a \ac{MLP}.

We specifically designed the network architecture such that (i) the system exhibits a unique and isolated equilibrium and (ii) we can derive expressions for the kinetic and potential energies. These two features allow us to (a) prove \ac{GAS} and \ac{ISS} stability using an established procedure based on strict Lyapunov arguments~\cite{calzolari2020exponential, wu2022passive}, and (b) implement model-based controller based on potential shaping.

One key insight of this work is that in the coordinates $x(t), \dot{x}(t)$, we cannot derive a potential 
as the hyperbolic force $\tanh(W x(t) + b)$ is not symmetric. Therefore, we propose a coordinate transformation into $\mathcal{W}$-coordinates: $y_\mathrm{w}(t) = \begin{bmatrix}
    x_\mathrm{w}(t)\\
    \dot{x}_\mathrm{w}(t)\\
\end{bmatrix} = \begin{bmatrix}
    W \, x(t)\\
    W \, \dot{x}(t)
\end{bmatrix} \in \mathbb{R}^{2n}$. The coordinate transformation is valid if its Jacobian is full-rank, which is the case if $\mathrm{rank}(W) = n$.
In $\mathcal{W}$-coordinates, the dynamics can be rewritten as
\begin{equation}\label{eq:conw_dynamics}
    \dot{y}_\mathrm{w}(t) = \begin{bmatrix}
        \frac{\mathrm{d}x_\mathrm{w}}{\mathrm{d}t}\\
        \frac{\mathrm{d}\dot{x}_\mathrm{w}}{\mathrm{d}t}
    \end{bmatrix} = f_\mathrm{w}(y(t), u(t)) = \begin{bmatrix}
        \dot{x}_\mathrm{w}(t)\\
        M_\mathrm{w}^{-1} \, \left (g(u(t)) -K_\mathrm{w} x_\mathrm{w}(t) - D_\mathrm{w} \, \dot{x}_\mathrm{w}(t) - \tanh(x_\mathrm{w}(t) + b) \right )
    \end{bmatrix},
\end{equation}
with $K_\mathrm{w} = K \, W^{-1}$, $D_\mathrm{w} = D \, W^{-1}$ and $M_\mathrm{w} = W^{-1}$. 

A difference of this formulation compared to prior work~\cite{rusch2020coupled, rusch2021unicornn, ceni2024random, lanthaler2024neural} is that (i) the forcing produced by the input term $\tau = g(u)$ is fully separated from the forcing produced by the elastic coupling terms $K_\mathrm{w}$, and (ii) the generalized force is symmetric, which we prove in Appendix~\ref{apx:sub:conw_symmetric_potential}, allowing us to define a potential energy expression, which we can later on leverage for stability analysis and control.

The equilibria $\bar{y}_\mathrm{w} = \begin{bmatrix}
    \bar{x}_\mathrm{w}^\mathrm{T} & 0^\mathrm{T}
\end{bmatrix}^\mathrm{T} \in \mathbb{R}^{2n}$ of the unforced network are given by the roots of the characteristic equation $\tanh(\bar{x}_\mathrm{w} + b) + K_\mathrm{w} \, \bar{x}_\mathrm{w} = 0$.
\small
\begin{Lemma}\label{lemma:conw_single_equilbrium}
    Let $K_\mathrm{w} \succ 0$. Then, the dynamics defined in \eqref{eq:conw_dynamics} have a single, isolated equilibrium $\bar{y}_\mathrm{w} = \begin{bmatrix}
        \bar{x}_\mathrm{w}^\mathrm{T} & 0^\mathrm{T}
    \end{bmatrix}^\mathrm{T}$.
\end{Lemma}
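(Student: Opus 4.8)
The plan is to reduce the equilibrium question to the algebraic system $h(\bar x_\mathrm{w}) = 0$, where $h:\mathbb{R}^n\to\mathbb{R}^n$ is defined by $h(x) = K_\mathrm{w}\,x + \tanh(x+b)$. First I would note that at any equilibrium the velocity block vanishes, $\dot{\bar x}_\mathrm{w}=0$, and since $M_\mathrm{w}=W^{-1}$ is invertible the second block of \eqref{eq:conw_dynamics} with the input term set to zero forces exactly $h(\bar x_\mathrm{w})=0$ (this is the characteristic equation recalled immediately before the statement); conversely, every root of $h$ yields an equilibrium. So it suffices to show that $h$ has exactly one root and that it is isolated. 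Throughout I use that $K_\mathrm{w}\succ 0$ in particular makes $K_\mathrm{w}$ nonsingular.

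\emph{Existence.} I would rewrite $h(x)=0$ as the fixed-point equation $x=\Phi(x):=-K_\mathrm{w}^{-1}\tanh(x+b)$. Since every entry of $\tanh(\cdot)$ lies in $(-1,1)$, we have $\|\Phi(x)\|\le \sqrt{n}\,\|K_\mathrm{w}^{-1}\|=:r$ for all $x$, where $\|\cdot\|$ on the matrix denotes the induced operator norm. Hence $\Phi$ is a continuous self-map of the nonempty compact convex ball $\{x:\|x\|\le r\}$, and Brouwer's fixed-point theorem produces at least one root $\bar x_\mathrm{w}$.

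\emph{Uniqueness.} I would exploit strict monotonicity of $h$. Its Jacobian is $Dh(x)=K_\mathrm{w}+\operatorname{diag}\!\big(1-\tanh^2(x_i+b_i)\big)$, whose symmetric part $\tfrac12\big(K_\mathrm{w}+K_\mathrm{w}^\mathrm{T}\big)+\operatorname{diag}\!\big(1-\tanh^2(x_i+b_i)\big)$ is positive definite everywhere, because $K_\mathrm{w}\succ 0$ and $1-\tanh^2(\cdot)\in(0,1]$. Therefore, for any $x\neq x'$, the mean-value identity $h(x)-h(x')=\big(\int_0^1 Dh\big(x'+t(x-x')\big)\,\mathrm{d}t\big)(x-x')$ gives $(x-x')^\mathrm{T}\big(h(x)-h(x')\big)>0$, so $h$ is injective. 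Combined with existence, $h$ has exactly one root, hence the dynamics have the unique equilibrium $\bar y_\mathrm{w}=\begin{bmatrix}\bar x_\mathrm{w}^\mathrm{T}&0^\mathrm{T}\end{bmatrix}^\mathrm{T}$. Isolation is then immediate from uniqueness; alternatively, $Dh(\bar x_\mathrm{w})$ is nonsingular, so by the inverse function theorem $h$ is a local diffeomorphism near $\bar x_\mathrm{w}$ and the root is isolated.

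I expect the only genuine care point to be the bookkeeping around the non-symmetry of $K_\mathrm{w}$: both the invertibility used in the fixed-point step and the positivity used in the monotonicity step rely on reading $K_\mathrm{w}\succ 0$ as positive definiteness of its symmetric part $\tfrac12(K_\mathrm{w}+K_\mathrm{w}^\mathrm{T})$, so I would state that convention explicitly; the remaining estimates are routine.
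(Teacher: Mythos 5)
Your proof is correct, and its uniqueness argument is at heart the same as the paper's: the paper also considers $h_{\mathrm{eq}}(x_\mathrm{w}) = K_\mathrm{w}x_\mathrm{w} + \tanh(x_\mathrm{w}+b)$, notes that its Jacobian $K_\mathrm{w} + S_{\mathrm{sech}}^2(x_\mathrm{w})$ is positive definite, and concludes that $h_{\mathrm{eq}}$ is ``continuously increasing and can only cross the zero line once.'' Where you go further is in making that conclusion rigorous and in closing a gap the paper leaves open. First, the paper's phrasing transplants a scalar intuition to a map $\mathbb{R}^n\to\mathbb{R}^n$; your mean-value identity $(x-x')^\mathrm{T}\bigl(h(x)-h(x')\bigr)=\int_0^1 (x-x')^\mathrm{T}Dh\bigl(x'+t(x-x')\bigr)(x-x')\,\mathrm{d}t>0$ is exactly the strict-monotone-operator argument needed to turn that intuition into injectivity in several variables. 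Second, the paper only shows that $h_{\mathrm{eq}}$ has \emph{at most} one root; it never establishes that a root exists, which the lemma's claim of ``a single, isolated equilibrium'' requires. Your Brouwer fixed-point step, using that $\Phi(x)=-K_\mathrm{w}^{-1}\tanh(x+b)$ maps the ball of radius $\sqrt{n}\,\lVert K_\mathrm{w}^{-1}\rVert$ into itself, supplies that missing existence argument cleanly. Your explicit remark about reading $K_\mathrm{w}\succ 0$ through its symmetric part is also well taken, since $K_\mathrm{w}=KW$ need not be symmetric a priori (elsewhere the paper implicitly treats $K_\mathrm{w}\succ 0$ as entailing symmetry). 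In short: same key idea for uniqueness, but your write-up is the more complete proof of the stated lemma.
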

\begin{proof}
    The proof is straightforward and provided in Appendix~\ref{apx:sub:proof_conw_single_equilbrium}.
\end{proof}
Next, we introduce a mapping into the $\tilde{\text{tilde}}$ coordinates $\tilde{y}_\mathrm{w} = y_\mathrm{w} - \bar{y}_\mathrm{w}$. The residual dynamics (w.r.t. the equilibrium $\bar{y}_\mathrm{w}$) can now be stated as
\small
\begin{equation}\label{eq:conw_residual_dynamics}
    \dot{\tilde{y}}_\mathrm{w}(t) = \tilde{f}_\mathrm{w}(y, u) = \begin{bmatrix}
        \dot{\tilde{x}}_\mathrm{w}(t)\\
        M_\mathrm{w}^{-1} \, \left (g(u(t)) -K_\mathrm{w} \left (\bar{x}_\mathrm{w} + \tilde{x}_\mathrm{w}(t) \right ) - D_\mathrm{w} \, \dot{\tilde{x}}_\mathrm{w}(t) - \tanh(\bar{x}_\mathrm{w} + \tilde{x}_\mathrm{w}(t) + b) \right )
    \end{bmatrix}.
\end{equation}

In the following, we will write $\lVert A \rVert$ to denote the induced norm of matrix $A$ and $\lambda_\mathrm{m}(A)$, $\lambda_\mathrm{M}(A)$ to refer to its minimum and maximum Eigenvalue respectively.

\paragraph{Global Asymptotic Stability (GAS) for the unforced system.} We first consider the unforced system with $\tau = g(u) = 0, \: \forall t \in [t_0, t_\infty)$ and strive to prove global asymptotic stability~\cite{khalil2002nonlinear} for the attractor $\bar{x}$. We propose a strict Lyapunov candidate with skewed level sets~\cite{wu2022passive}
\small
\begin{equation}\label{eq:conw_lyapunov_function}
\begin{split}
    V_\mu(\tilde{y}_\mathrm{w}) =& \: \frac{1}{2} \, \tilde{y}_\mathrm{w}^\mathrm{T} \, P_\mathrm{V} \, \tilde{y}_\mathrm{w} + \sum_{i=1}^n \int_{0}^{\tilde{x}_{\mathrm{w},i}} \tanh(\bar{x}_{\mathrm{w},i}+\sigma+b_i) \, \mathrm{d} \sigma - \sum_{i=1}^n \int_{0}^{\tilde{x}_{\mathrm{w},i}} \tanh(\bar{x}_{\mathrm{w},i}+b_i) \, \mathrm{d} \sigma,\\
    =& \: \frac{1}{2} \, \tilde{y}_\mathrm{w}^\mathrm{T} \, P_\mathrm{V} \, \tilde{y}_\mathrm{w} + \sum_{i=1}^n \left ( \mathrm{lcosh}(\bar{x}_{\mathrm{w},i} + \tilde{x}_{\mathrm{w},i}+b_i)-\mathrm{lcosh}(\bar{x}_{\mathrm{w},i}+b_i)-\tanh(\bar{x}_{\mathrm{w},i}+b_i) \, \tilde{x}_{\mathrm{w},i} \right ),\\
    \text{with } P_\mathrm{V} =& \: \begin{bmatrix}
        K_\mathrm{w} & \mu \, M_\mathrm{w}\\
        \mu \, M_\mathrm{w}^\mathrm{T} & M_\mathrm{w}
    \end{bmatrix} \in \mathbb{R}^{2n \times 2n}, \qquad \mathrm{lcosh}(\cdot) = \log(\cosh(\cdot)), \qquad \text{and } \mu > 0.
\end{split}
\end{equation}

\begin{Lemma}\label{lemma:conw_lyapunov_candidate}
    The scalar function $V_\mu(\tilde{y}_\mathrm{w})$ defined in \eqref{eq:conw_lyapunov_function} is continuously differentiable and verifies the condition $V_\mu(0) = 0$.
    Furthermore, let $M_\mathrm{w}, K_\mathrm{w} \succ 0$.
    Now, if we choose $0 < \mu < \frac{\sqrt{\lambda_\mathrm{m}(M_\mathrm{w}) \, \lambda_\mathrm{m}\left(K_\mathrm{w}\right)}}{\lVert M_\mathrm{w} \rVert} \coloneqq \mu_\mathrm{V}$, then $V_\mu(\tilde{y}_\mathrm{w}) > 0 \: \forall \: \tilde{y}_\mathrm{w} \in \mathbb{R}^{2n} \setminus \{0 \}$. Additionally, then $V_\mu(\tilde{y}_\mathrm{w})$ is radially unbounded as $\lVert \tilde{y}_\mathrm{w} \rVert \rightarrow \infty \Rightarrow V_\mu(\tilde{y}_\mathrm{w}) \rightarrow \infty$.
\end{Lemma}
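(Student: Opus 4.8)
The plan is to check the three assertions in turn, reducing everything to a single scalar quadratic form in $\lVert \tilde{x}_\mathrm{w} \rVert$ and $\lVert \dot{\tilde{x}}_\mathrm{w} \rVert$ plus the observation that the ``potential'' part of $V_\mu$ is nonnegative. For the regularity claim: since $\cosh(\cdot)>0$ on $\mathbb{R}$, $\mathrm{lcosh}=\log\circ\cosh$ is $C^\infty$; equivalently, each integrand $\sigma \mapsto \tanh(\bar{x}_{\mathrm{w},i}+\sigma+b_i)$ and the constant $\tanh(\bar{x}_{\mathrm{w},i}+b_i)$ are continuous, so by the fundamental theorem of calculus each integral term is $C^1$ in $\tilde{x}_{\mathrm{w},i}$. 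Together with the polynomial term $\tfrac12\tilde{y}_\mathrm{w}^\mathrm{T} P_\mathrm{V}\tilde{y}_\mathrm{w}$ this gives $V_\mu \in C^1(\mathbb{R}^{2n})$. At $\tilde{y}_\mathrm{w}=0$ the quadratic term vanishes and each summand collapses to $\mathrm{lcosh}(\bar{x}_{\mathrm{w},i}+b_i)-\mathrm{lcosh}(\bar{x}_{\mathrm{w},i}+b_i)-0=0$, hence $V_\mu(0)=0$.

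For positive definiteness I would first dispose of the potential part $U(\tilde{x}_\mathrm{w}) := \sum_i \int_0^{\tilde{x}_{\mathrm{w},i}}\bigl(\tanh(\bar{x}_{\mathrm{w},i}+\sigma+b_i)-\tanh(\bar{x}_{\mathrm{w},i}+b_i)\bigr)\,\mathrm{d}\sigma$: since $\tanh$ is strictly increasing, an elementary sign argument shows each scalar integral is strictly positive whenever $\tilde{x}_{\mathrm{w},i}\neq0$, so $U\ge0$ always (equivalently, $s\mapsto\mathrm{lcosh}(\bar{x}_{\mathrm{w},i}+b_i+s)$ is strictly convex and $U$ is a sum of Bregman divergences from $s=0$). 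It therefore suffices to show the quadratic part alone is positive definite under the stated bound on $\mu$. Expanding, $\tfrac12\tilde{y}_\mathrm{w}^\mathrm{T} P_\mathrm{V}\tilde{y}_\mathrm{w}=\tfrac12\tilde{x}_\mathrm{w}^\mathrm{T} K_\mathrm{w}\tilde{x}_\mathrm{w}+\mu\,\tilde{x}_\mathrm{w}^\mathrm{T} M_\mathrm{w}\dot{\tilde{x}}_\mathrm{w}+\tfrac12\dot{\tilde{x}}_\mathrm{w}^\mathrm{T} M_\mathrm{w}\dot{\tilde{x}}_\mathrm{w}$; bounding the two sign-definite terms below by $\lambda_\mathrm{m}(K_\mathrm{w})\lVert\tilde{x}_\mathrm{w}\rVert^2$ and $\lambda_\mathrm{m}(M_\mathrm{w})\lVert\dot{\tilde{x}}_\mathrm{w}\rVert^2$ (using $K_\mathrm{w},M_\mathrm{w}\succ0$) and the cross term above via $|\tilde{x}_\mathrm{w}^\mathrm{T} M_\mathrm{w}\dot{\tilde{x}}_\mathrm{w}|\le\lVert M_\mathrm{w}\rVert\,\lVert\tilde{x}_\mathrm{w}\rVert\,\lVert\dot{\tilde{x}}_\mathrm{w}\rVert$ yields
\[
\tfrac12\,\tilde{y}_\mathrm{w}^\mathrm{T} P_\mathrm{V}\,\tilde{y}_\mathrm{w}\ \ge\ \tfrac12\begin{bmatrix}\lVert\tilde{x}_\mathrm{w}\rVert & \lVert\dot{\tilde{x}}_\mathrm{w}\rVert\end{bmatrix}\begin{bmatrix}\lambda_\mathrm{m}(K_\mathrm{w}) & -\mu\lVert M_\mathrm{w}\rVert\\ -\mu\lVert M_\mathrm{w}\rVert & \lambda_\mathrm{m}(M_\mathrm{w})\end{bmatrix}\begin{bmatrix}\lVert\tilde{x}_\mathrm{w}\rVert\\ \lVert\dot{\tilde{x}}_\mathrm{w}\rVert\end{bmatrix}\ =:\ \tfrac12\,v^\mathrm{T} Q_\mu\,v .
\]
The $2\times2$ matrix $Q_\mu$ has positive $(1,1)$ entry and $\det Q_\mu=\lambda_\mathrm{m}(K_\mathrm{w})\lambda_\mathrm{m}(M_\mathrm{w})-\mu^2\lVert M_\mathrm{w}\rVert^2$, which is strictly positive exactly when $\mu<\mu_\mathrm{V}$; hence $Q_\mu\succ0$ and the right-hand side is at least $\tfrac12\lambda_\mathrm{m}(Q_\mu)\lVert v\rVert^2=\tfrac12\lambda_\mathrm{m}(Q_\mu)\lVert\tilde{y}_\mathrm{w}\rVert^2$. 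Adding $U\ge0$ gives $V_\mu(\tilde{y}_\mathrm{w})\ge\tfrac12\lambda_\mathrm{m}(Q_\mu)\lVert\tilde{y}_\mathrm{w}\rVert^2>0$ for all $\tilde{y}_\mathrm{w}\neq0$. Radial unboundedness is then immediate from the same lower bound, since $\lambda_\mathrm{m}(Q_\mu)>0$ forces $V_\mu(\tilde{y}_\mathrm{w})\to\infty$ as $\lVert\tilde{y}_\mathrm{w}\rVert\to\infty$.

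I expect the positive-definiteness step to be the crux: the argument hinges on reducing the $2n$-dimensional form to the scalar $2\times2$ form and on the fact that $\mu_\mathrm{V}$ is precisely the determinant threshold of $Q_\mu$, so any looser bookkeeping of the cross term would produce a different, non-matching constant. A secondary point worth stating explicitly is that the lower bounds through $\lambda_\mathrm{m}(K_\mathrm{w})$ and $\lambda_\mathrm{m}(M_\mathrm{w})$ presuppose $K_\mathrm{w},M_\mathrm{w}$ are symmetric positive definite (or that $\lambda_\mathrm{m}(\cdot)$ denotes the smallest eigenvalue of the symmetric part), whereas the cross-term bound via the induced norm $\lVert M_\mathrm{w}\rVert$ requires no symmetry; the regularity and $V_\mu(0)=0$ claims are routine.
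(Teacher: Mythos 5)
Your proof is correct, and its central step differs from the paper's. For positive definiteness the paper argues via convexity: it computes the gradient of $V_\mu$ (zero at the origin) and the Hessian $H_\mathrm{V} = P_\mathrm{V} + \mathrm{blkdiag}(S^2_{\mathrm{sech}},0)$, shows $P_\mathrm{V} \succ 0$ through the Schur complement $M_\mathrm{w} - \mu^2 M_\mathrm{w}^\mathrm{T} K_\mathrm{w}^{-1} M_\mathrm{w}$ (Lemmas~\ref{lemma:apx:P_V_Schur_complement_positive_definite} and \ref{lemma:apx:P_V_positive_definite}), and concludes that $V_\mu$ is convex with a unique global minimum of value $0$ at the origin; the nonnegativity of the $\tanh$-integral term (your $U$, the paper's Lemma~\ref{lemma:apx:V_tanh_term_lower_term}) is invoked only afterwards, for radial unboundedness. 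You instead dispose of $U \geq 0$ up front and then lower-bound the quadratic form $\tfrac12 \tilde{y}_\mathrm{w}^\mathrm{T} P_\mathrm{V} \tilde{y}_\mathrm{w}$ by collapsing it to a scalar $2\times 2$ form $Q_\mu$ in $(\lVert \tilde{x}_\mathrm{w}\rVert, \lVert \dot{\tilde{x}}_\mathrm{w}\rVert)$, whose determinant condition reproduces exactly the threshold $\mu_\mathrm{V}$; a single inequality $V_\mu \geq \tfrac12 \lambda_\mathrm{m}(Q_\mu)\lVert\tilde{y}_\mathrm{w}\rVert^2$ then delivers both positive definiteness and radial unboundedness. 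Your route is more elementary and self-contained (no Hessian, no Schur complement, and it reuses $U\geq 0$ which is needed for radial unboundedness anyway); what the paper's route buys is the explicit statement $P_\mathrm{V}\succ 0$ together with $\lambda_\mathrm{m}(P_\mathrm{V})$ and the everywhere-positive Hessian, both of which are reused downstream (in the $\mathcal{K}_\infty$ bounds of Lemma~\ref{lemma:iss_lyapunov_bounds} and in the ISS gain $\gamma$ of Theorem~\ref{theorem:iss}), whereas your constant $\lambda_\mathrm{m}(Q_\mu)$ is generally a looser surrogate for $\lambda_\mathrm{m}(P_\mathrm{V})$. Your closing caveat about symmetry is moot here since the paper takes $K_\mathrm{w}\succ 0 \Rightarrow K_\mathrm{w}=K_\mathrm{w}^\mathrm{T}$.
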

\begin{proof}
    We provide the proof in Appendix~\ref{apx:sub:proof_lyapunov_candidate_conditions} and demonstrate that the bounds on $\mu$ are required for the Lyapunov candidate to be positive-definite.
\end{proof}

\begin{Theorem}\label{theorem:global_asymptotic_stability}
    Let $M_\mathrm{w}, K_\mathrm{w}$ and $D_\mathrm{w}$ be positive definite and suppose the system be unforced: $g(u(t)) = 0$.
    Then, $\tilde{y}_\mathrm{w} = 0$ is globally asymptotically stable for the system dynamics defined \eqref{eq:conw_residual_dynamics} such that $\dot{V}_\mu (\tilde{y}_\mathrm{w}) < 0, \quad \forall \: \tilde{y}_\mathrm{w} \in \mathbb{R}^{2n} \setminus \{0 \}$.
    
\end{Theorem}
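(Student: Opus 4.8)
The plan is to verify that $V_\mu$ from \eqref{eq:conw_lyapunov_function} is a \emph{strict} Lyapunov function for the unforced residual dynamics \eqref{eq:conw_residual_dynamics} (i.e.\ with $g(u(t))\equiv 0$) and then invoke the standard global Lyapunov theorem. Lemma~\ref{lemma:conw_lyapunov_candidate} already supplies, for $0<\mu<\mu_\mathrm{V}$, that $V_\mu$ is continuously differentiable, that $V_\mu(0)=0$, that $V_\mu(\tilde{y}_\mathrm{w})>0$ on $\mathbb{R}^{2n}\setminus\{0\}$, and that $V_\mu$ is radially unbounded; so the only missing ingredient is $\dot{V}_\mu(\tilde{y}_\mathrm{w})<0$ for every $\tilde{y}_\mathrm{w}\neq 0$, which I expect to hold after imposing a tighter upper bound on $\mu$.

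First I would compute $\dot{V}_\mu$ along \eqref{eq:conw_residual_dynamics}. Differentiating the quadratic part yields $\tilde{y}_\mathrm{w}^\mathrm{T} P_\mathrm{V}\,\dot{\tilde{y}}_\mathrm{w}$, and the $\tilde{x}_\mathrm{w}$-gradient of the two integral terms is the ``nonlinear-spring residual'' $h(\tilde{x}_\mathrm{w}):=\tanh(\bar{x}_\mathrm{w}+\tilde{x}_\mathrm{w}+b)-\tanh(\bar{x}_\mathrm{w}+b)$. I would then substitute $M_\mathrm{w}\ddot{\tilde{x}}_\mathrm{w}$ from \eqref{eq:conw_residual_dynamics} and use the equilibrium identity $K_\mathrm{w}\bar{x}_\mathrm{w}+\tanh(\bar{x}_\mathrm{w}+b)=0$ from Lemma~\ref{lemma:conw_single_equilbrium} to cancel the constant forcing, giving $M_\mathrm{w}\ddot{\tilde{x}}_\mathrm{w}=-K_\mathrm{w}\tilde{x}_\mathrm{w}-D_\mathrm{w}\dot{\tilde{x}}_\mathrm{w}-h(\tilde{x}_\mathrm{w})$. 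Plugging this back in and using the symmetry of $P_\mathrm{V}$ and $K_\mathrm{w}$ — which makes the $\tilde{x}_\mathrm{w}$--$\dot{\tilde{x}}_\mathrm{w}$ cross-terms produced by $K_\mathrm{w}$ and by $h$ cancel pairwise — I expect to land on the clean form, with $\chi:=\begin{bmatrix}\tilde{x}_\mathrm{w}^\mathrm{T} & \dot{\tilde{x}}_\mathrm{w}^\mathrm{T}\end{bmatrix}^\mathrm{T}$,
\[
\dot{V}_\mu(\tilde{y}_\mathrm{w}) = -\,\chi^\mathrm{T} Q_\mu\,\chi \;-\; \mu\,\tilde{x}_\mathrm{w}^\mathrm{T} h(\tilde{x}_\mathrm{w}),
\qquad
Q_\mu := \begin{bmatrix} \mu\,K_\mathrm{w} & \tfrac{\mu}{2}\,D_\mathrm{w}\\ \tfrac{\mu}{2}\,D_\mathrm{w}^\mathrm{T} & D_\mathrm{w}-\mu\,M_\mathrm{w}\end{bmatrix}.
\]

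It then remains to show that the two summands are nonpositive and never vanish simultaneously off the origin. For the scalar term, the mean value theorem gives $\tilde{x}_{\mathrm{w},i}\,h_i(\tilde{x}_\mathrm{w})=(1-\tanh^2(\xi_i))\,\tilde{x}_{\mathrm{w},i}^2\ge 0$ componentwise, with strict inequality unless $\tilde{x}_{\mathrm{w},i}=0$, so $\tilde{x}_\mathrm{w}^\mathrm{T} h(\tilde{x}_\mathrm{w})>0$ whenever $\tilde{x}_\mathrm{w}\neq 0$. For the quadratic form I would impose a second upper bound $\mu_{\dot{V}}$ on $\mu$ guaranteeing $D_\mathrm{w}-\mu\,M_\mathrm{w}\succ 0$ and $Q_\mu\succeq 0$; since $\mu\,K_\mathrm{w}\succ 0$, a Schur-complement argument reduces the latter to $D_\mathrm{w}-\mu\,M_\mathrm{w}-\tfrac{\mu}{4}\,D_\mathrm{w}^\mathrm{T} K_\mathrm{w}^{-1}D_\mathrm{w}\succeq 0$, which holds for all sufficiently small $\mu>0$ (the left-hand side tends to $D_\mathrm{w}\succ 0$ as $\mu\to 0^+$) and produces an explicit $\mu_{\dot{V}}$ in terms of $\lambda_\mathrm{m}(D_\mathrm{w})$, $\lambda_\mathrm{m}(K_\mathrm{w})$, $\lVert M_\mathrm{w}\rVert$ and $\lVert D_\mathrm{w}\rVert$. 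Then, for $0<\mu<\min\{\mu_\mathrm{V},\mu_{\dot{V}}\}$, a short case split closes the argument: if $\tilde{x}_\mathrm{w}\neq 0$ then $-\mu\,\tilde{x}_\mathrm{w}^\mathrm{T} h(\tilde{x}_\mathrm{w})<0$ while $\chi^\mathrm{T} Q_\mu\chi\ge 0$; if $\tilde{x}_\mathrm{w}=0$ but $\dot{\tilde{x}}_\mathrm{w}\neq 0$ then $\chi^\mathrm{T} Q_\mu\chi=\dot{\tilde{x}}_\mathrm{w}^\mathrm{T}(D_\mathrm{w}-\mu\,M_\mathrm{w})\dot{\tilde{x}}_\mathrm{w}>0$. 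Hence $\dot{V}_\mu(\tilde{y}_\mathrm{w})<0$ on $\mathbb{R}^{2n}\setminus\{0\}$, and combining this with Lemma~\ref{lemma:conw_lyapunov_candidate}, the standard Lyapunov theorem for global asymptotic stability~\cite{khalil2002nonlinear} yields that $\tilde{y}_\mathrm{w}=0$ is globally asymptotically stable.

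The step I expect to be the crux is controlling the indefinite cross-term $-\mu\,\tilde{x}_\mathrm{w}^\mathrm{T} D_\mathrm{w}\dot{\tilde{x}}_\mathrm{w}$ introduced by the skewed, $\mu$-coupled level sets of $V_\mu$~\cite{wu2022passive}: that coupling is precisely what supplies the strictly negative position contribution $-\mu\,\tilde{x}_\mathrm{w}^\mathrm{T} K_\mathrm{w}\tilde{x}_\mathrm{w}$ (a plain mechanical-energy candidate would only give $\dot{V}=-\dot{\tilde{x}}_\mathrm{w}^\mathrm{T} D_\mathrm{w}\dot{\tilde{x}}_\mathrm{w}$ and would need a LaSalle-type argument instead of a strict decrease), but it also destroys the effortless negativity of the damping term and is what forces the additional bound on $\mu$ and the Schur-complement estimate above. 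A secondary technical point worth stating explicitly is that the differentiation of the quadratic part and the cancellation of the cross-terms use that $M_\mathrm{w},K_\mathrm{w},D_\mathrm{w}$ are symmetric positive definite, so the hypotheses of the theorem should be understood in that (standard) sense.
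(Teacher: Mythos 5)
Your proposal is correct and follows essentially the same route as the paper: the same Lyapunov function, the same decomposition $\dot{V}_\mu=-\chi^\mathrm{T}Q_\mu\chi-\mu\,\tilde{x}_\mathrm{w}^\mathrm{T}h(\tilde{x}_\mathrm{w})$ (your $Q_\mu$ is exactly the paper's $P_{\dot{\mathrm{V}}}$), the same Schur-complement bound $\mu_{\dot{\mathrm{V}}}$, and the same nonnegativity argument for the hyperbolic-tangent term (the paper proves it by expanding $\tanh$ algebraically rather than via the mean value theorem, but both work). Your final case split is a mild refinement — it only needs $Q_\mu\succeq 0$ together with $D_\mathrm{w}-\mu M_\mathrm{w}\succ 0$, whereas the paper asks for the slightly stronger $P_{\dot{\mathrm{V}}}\succ 0$ — but this does not change the substance of the argument.
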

\begin{proof}\small
    First, we show that $\tilde{y}_\mathrm{w} = 0$ is an equilibrium of \eqref{eq:conw_residual_dynamics}:
    \begin{equation}
        \tilde{f}_\mathrm{w}(0, 0) = \begin{bmatrix}
            0\\
            M_\mathrm{w}^{-1} \, \left (-K_\mathrm{w} \, \bar{x}_\mathrm{w} - \tanh(\bar{x}_\mathrm{w} + b) \right )
        \end{bmatrix}
        = \begin{bmatrix}
            0\\
            M_\mathrm{w}^{-1} \, \left (-K_\mathrm{w} \, \bar{x}_\mathrm{w} + K_\mathrm{w} \, \bar{x}_\mathrm{w} \right )
        \end{bmatrix}
        = \begin{bmatrix}
            0\\
            0
        \end{bmatrix}.
    \end{equation}

    Lemma~\ref{lemma:conw_lyapunov_candidate} states that we can always choose $\mu$ such that \eqref{eq:conw_lyapunov_function} is strict Lyapunov function (e.g., Lipschitz continuous, zero-valued at $\tilde{y}_\mathrm{w} = 0$, positive definite, and radially unbounded)~\cite{khalil2002nonlinear, wu2022passive}. We now evaluate the time-derivative of $\dot{V}_\mu(\tilde{y}_\mathrm{w})$ in the case of an unforced system (i.e., $g(u(t)) = 0$):
    \small
    \begin{equation}\label{eq:lyapunov_function_time_derivative}
    \begin{split}
        \dot{V}_\mu(\tilde{y}_\mathrm{w}) =& \: \frac{\partial V_\mu}{\partial \tilde{y}_\mathrm{w}} \, \dot{\tilde{y}}_\mathrm{w} = \frac{\partial V_\mu}{\partial \tilde{y}_\mathrm{w}} \, \tilde{f}_\mathrm{w}(\tilde{y}_\mathrm{w}) 
        = \tilde{y}_\mathrm{w}^\mathrm{T} \, P_\mathrm{V} \, \dot{\tilde{y}}_\mathrm{w} + \left (\tanh(\bar{x}_\mathrm{w} + \tilde{x}_\mathrm{w} + b) - \tanh(\bar{x}_\mathrm{w} + b)\right )^\mathrm{T} \, \dot{\tilde{x}}_\mathrm{w}\\
        =& \: -\tilde{y}_\mathrm{w}^\mathrm{T} \, \underbrace{\begin{bmatrix}
            \mu \, K_\mathrm{w} & \frac{1}{2} \, \mu \, D_\mathrm{w}\\
            \frac{1}{2} \, \mu \, D_\mathrm{w}^\mathrm{T} & D_\mathrm{w} - \mu \, M_\mathrm{w}
        \end{bmatrix}}_{P_{\dot{\mathrm{V}}}} \, \tilde{y}_\mathrm{w} - \mu \left ( \tanh(\bar{x}_\mathrm{w} + \tilde{x}_\mathrm{w} + b)^\mathrm{T} + \tilde{x}_\mathrm{w}^\mathrm{T} \, K_\mathrm{w} \right ) \tilde{x}_\mathrm{w},\\
        =& \: -\tilde{y}_\mathrm{w}^\mathrm{T} \, P_{\dot{\mathrm{V}}} \, \tilde{y}_\mathrm{w} - \mu \, \left ( \tanh(\bar{x}_\mathrm{w} + \tilde{x}_\mathrm{w} + b) - \tanh(\bar{x}_\mathrm{w} + b) \right )^\mathrm{T} \tilde{x}_\mathrm{w},\\
        \leq& \: -\tilde{y}_\mathrm{w}^\mathrm{T} \, P_{\dot{\mathrm{V}}} \, \tilde{y}_\mathrm{w} 
        \: \leq \: -\lambda_\mathrm{m}\left(P_{\dot{\mathrm{V}}} \right) \, \lVert \tilde{y}_\mathrm{w} \rVert_2^2,
    \end{split}
    \end{equation}
    where we exploited the force balance at equilibrium $K_\mathrm{w} \, \bar{x}_\mathrm{w} = -\tanh(\bar{x}_\mathrm{w} + b)$ and Lemma~\ref{lemma:apx:V_d_tanh_term_lower_bound} for defining the upper bound on $\dot{V}_\mu(\tilde{y}_\mathrm{w})$. Lemma~\ref{lemma:apx:P_V_d_positive_definite} states that $P_{\dot{\mathrm{V}}} \succ 0$ for $0 < \mu < \mu_{\dot{\mathrm{V}}}$. Similarly, Lemma~\ref{lemma:conw_lyapunov_candidate} requests that $0 < \mu < \mu_{\mathrm{V}}$. Indeed, both conditions can always be fulfilled by choosing $\mu \in \left (0, \min \{ \mu_{\mathrm{V}}, \mu_{\dot{\mathrm{V}}} \} \right )$. 
    With $P_{\dot{\mathrm{V}}} \succ 0 \Leftrightarrow \lambda_\mathrm{m}\left(P_{\dot{\mathrm{V}}} \right) > 0$~\cite{golub2013matrix}, we can state that $\dot{V}_\mu(\tilde{y}_\mathrm{w}) < 0 \: \forall \: \tilde{y}_\mathrm{w} \in \mathbb{R}^{2n} \setminus \{0 \}$ and conclude that the unforced system is globally asymptotically stable around $\tilde{y}_\mathrm{w} = 0$.
\end{proof}

\paragraph{Global Input-to-State Stability (ISS) for the forced system.} We now take the forcing $g(u)$ into account again and demonstrate that the system states remain proportionally bounded to the initial conditions and as a function of the supremum of the input forcing.
\begin{Theorem}\label{theorem:iss}
    Suppose $M_\mathrm{w}, K_\mathrm{w}, D_\mathrm{w} \succ 0$, $0 < \theta < 1$, and that we choose $0 < \mu < \min \{ \mu_{\mathrm{V}}, \mu_{\dot{\mathrm{V}}} \}$. Then,
    \eqref{eq:conw_residual_dynamics} is globally Input-to-State Stable (ISS)
    such that the solution $\tilde{y}_\mathrm{w}(t)$ verifies
    \small
    \begin{equation}\label{eq:input_to_state_stability}
        \lVert \tilde{y}_\mathrm{w} \rVert_2 \leq \beta \left (\lVert \tilde{y}_\mathrm{w}(t_0) \rVert_2, t-t_0 \right ) + \gamma \left ( \sup_{t_0 \leq t' \leq t} \lVert g(u(t')) \rVert_2 \right ), \qquad \forall \: t \geq t_0,
    \end{equation}
    where $\beta(r, t) \in \mathcal{KL}$, $\gamma(r) = \sqrt{\frac{(1+\mu^2) \, \lambda_\mathrm{M}(P_\mathrm{V}) \, r^2 + 4 \, \theta \, \sqrt{n} \, \sqrt{1+\mu^2} \, \lambda_\mathrm{m}(P_{\dot{\mathrm{V}}}) \, r}{\theta^2 \, \lambda_\mathrm{m}(P_\mathrm{V}) \, \lambda_\mathrm{m}^2(P_{\dot{\mathrm{V}}})}} \in \mathcal{K}$.
\end{Theorem}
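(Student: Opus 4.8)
The plan is to show that the very same Lyapunov candidate $V_\mu$ from \eqref{eq:conw_lyapunov_function} is an ISS--Lyapunov function for the forced residual dynamics \eqref{eq:conw_residual_dynamics}, and then invoke the standard ISS--Lyapunov theorem~\cite{khalil2002nonlinear} to obtain \eqref{eq:input_to_state_stability}; the explicit $\gamma$ will come out of the composition $\gamma=\alpha_1^{-1}\circ\alpha_2\circ\rho$ once the comparison functions $\alpha_1,\alpha_2,\rho$ have been pinned down. The first step is therefore to sandwich $V_\mu$ between two $\mathcal{K}_\infty$ functions. For the lower bound, I note that the three summed integral terms in \eqref{eq:conw_lyapunov_function} are exactly the Bregman divergences of the convex maps $\mathrm{lcosh}(\bar x_{\mathrm{w},i}+\,\cdot\,+b_i)$ (whose derivative is $\tanh$), hence nonnegative, so that $V_\mu(\tilde y_\mathrm{w})\ge\tfrac12\,\tilde y_\mathrm{w}^\mathrm{T}P_\mathrm{V}\tilde y_\mathrm{w}\ge\tfrac12\,\lambda_\mathrm{m}(P_\mathrm{V})\,\lVert\tilde y_\mathrm{w}\rVert_2^2=:\alpha_1(\lVert\tilde y_\mathrm{w}\rVert_2)$, where $P_\mathrm{V}\succ0$ by Lemma~\ref{lemma:conw_lyapunov_candidate} since $\mu<\mu_\mathrm{V}$. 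For the upper bound I use that $\mathrm{lcosh}$ is $1$-Lipschitz and $\lvert\tanh(\cdot)\rvert\le1$, so each summand is bounded by $2\lvert\tilde x_{\mathrm{w},i}\rvert$; summing and applying $\lVert\cdot\rVert_1\le\sqrt n\,\lVert\cdot\rVert_2$ yields $V_\mu(\tilde y_\mathrm{w})\le\tfrac12\,\lambda_\mathrm{M}(P_\mathrm{V})\,\lVert\tilde y_\mathrm{w}\rVert_2^2+2\sqrt n\,\lVert\tilde y_\mathrm{w}\rVert_2=:\alpha_2(\lVert\tilde y_\mathrm{w}\rVert_2)$, which is the origin of the $\sqrt n$ appearing in $\gamma$.

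Next I would recompute $\dot V_\mu$ along \eqref{eq:conw_residual_dynamics} with $g(u)\neq0$. This reuses the whole computation in \eqref{eq:lyapunov_function_time_derivative} verbatim and only adds the forcing contribution $\frac{\partial V_\mu}{\partial\dot{\tilde x}_\mathrm{w}}M_\mathrm{w}^{-1}g(u)=(\mu\,\tilde x_\mathrm{w}+\dot{\tilde x}_\mathrm{w})^\mathrm{T}g(u)$, where I use $M_\mathrm{w}^\mathrm{T}=M_\mathrm{w}$. Keeping the monotone $\tanh$-difference term nonpositive exactly as in Theorem~\ref{theorem:global_asymptotic_stability}, bounding the cross term by Cauchy--Schwarz, and using $\lVert\mu\,\tilde x_\mathrm{w}+\dot{\tilde x}_\mathrm{w}\rVert_2\le\sqrt{1+\mu^2}\,\lVert\tilde y_\mathrm{w}\rVert_2$, I obtain $\dot V_\mu(\tilde y_\mathrm{w})\le-\lambda_\mathrm{m}(P_{\dot{\mathrm V}})\,\lVert\tilde y_\mathrm{w}\rVert_2^2+\sqrt{1+\mu^2}\,\lVert\tilde y_\mathrm{w}\rVert_2\,\lVert g(u)\rVert_2$, which is legitimate because $P_{\dot{\mathrm V}}\succ0$ for $\mu<\mu_{\dot{\mathrm V}}$ (Lemma~\ref{lemma:apx:P_V_d_positive_definite}).

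Then I would introduce $\theta\in(0,1)$ to split the quadratic dissipation: the right-hand side above is $\le-(1-\theta)\,\lambda_\mathrm{m}(P_{\dot{\mathrm V}})\,\lVert\tilde y_\mathrm{w}\rVert_2^2$ whenever $\lVert\tilde y_\mathrm{w}\rVert_2\ge\frac{\sqrt{1+\mu^2}}{\theta\,\lambda_\mathrm{m}(P_{\dot{\mathrm V}})}\lVert g(u)\rVert_2=:\rho(\lVert g(u)\rVert_2)$. Combined with the sandwich from the first step, this certifies $V_\mu$ as an ISS--Lyapunov function with $\alpha_3(r)=(1-\theta)\,\lambda_\mathrm{m}(P_{\dot{\mathrm V}})\,r^2$ and the above $\rho\in\mathcal{K}$; the ISS--Lyapunov theorem~\cite{khalil2002nonlinear} then gives \eqref{eq:input_to_state_stability} for some $\beta\in\mathcal{KL}$ and $\gamma=\alpha_1^{-1}\circ\alpha_2\circ\rho$. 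Substituting $\alpha_1^{-1}(s)=\sqrt{2s/\lambda_\mathrm{m}(P_\mathrm{V})}$, $\alpha_2$ from the first step, and $\rho$ from above, and simplifying, reproduces exactly the stated gain $\gamma(r)=\sqrt{\tfrac{(1+\mu^2)\,\lambda_\mathrm{M}(P_\mathrm{V})\,r^2+4\,\theta\,\sqrt n\,\sqrt{1+\mu^2}\,\lambda_\mathrm{m}(P_{\dot{\mathrm V}})\,r}{\theta^2\,\lambda_\mathrm{m}(P_\mathrm{V})\,\lambda_\mathrm{m}^2(P_{\dot{\mathrm V}})}}$.

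The main obstacle is essentially bookkeeping rather than conceptual: tracking the constants through $\alpha_1,\alpha_2,\rho$ so that the composition collapses precisely to the displayed $\gamma$, and in particular justifying that the $\mathrm{lcosh}$ Lipschitz/boundedness estimate is \emph{uniform} in $\bar x_\mathrm{w}$ and $b$ so that no hidden dependence on the equilibrium survives. A secondary point to treat carefully is the implicit symmetry $M_\mathrm{w}=W^{-1}=M_\mathrm{w}^\mathrm{T}$ (and likewise $K_\mathrm{w}=K_\mathrm{w}^\mathrm{T}$), which is used both in writing $P_\mathrm{V}$ as a symmetric matrix and in collapsing the forcing term into $(\mu\,\tilde x_\mathrm{w}+\dot{\tilde x}_\mathrm{w})^\mathrm{T}g(u)$; this should be flagged as part of the hypothesis ``$M_\mathrm{w},K_\mathrm{w}\succ0$''.
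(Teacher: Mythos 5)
Your proposal is correct and follows essentially the same route as the paper: sandwiching $V_\mu$ between $\alpha_1(r)=\tfrac12\lambda_\mathrm{m}(P_\mathrm{V})r^2$ and $\alpha_2(r)=\tfrac12\lambda_\mathrm{M}(P_\mathrm{V})r^2+2\sqrt{n}\,r$ via nonnegativity and a $2\lVert\tilde{x}_\mathrm{w}\rVert_1$ bound on the $\tanh$-integral terms, reusing the unforced dissipation estimate, bounding the forcing cross term by $\sqrt{1+\mu^2}\,\lVert\tilde{y}_\mathrm{w}\rVert_2\lVert g(u)\rVert_2$, applying the $\theta$-splitting to define $\rho$, and composing $\gamma=\alpha_1^{-1}\circ\alpha_2\circ\rho$, which collapses to exactly the stated gain. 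Your remark that the symmetry of $M_\mathrm{w}$ and $K_\mathrm{w}$ is used implicitly is a fair observation, but it does not change the argument.
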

\begin{proof}
    The proof is provided in Appendix~\ref{apx:sub:proof_of_iss}. We first demonstrate that the ISS-Lyapunov function is bounded from both sides by $\mathcal{K}_\infty$ functions. Subsequently, we derive the energy dissipation of the forced system and establish attracting regions as a function of the norm of the forcing. Outside these regions, it is ensured that the system has a minimal rate of decay and, therefore, converges exponentially fast into the attracting region.
\end{proof}

\begin{figure}[ht]
    \centering
    \subfigure[Positions]{\includegraphics[width=0.49\textwidth, trim={5, 5, 5, 5}]{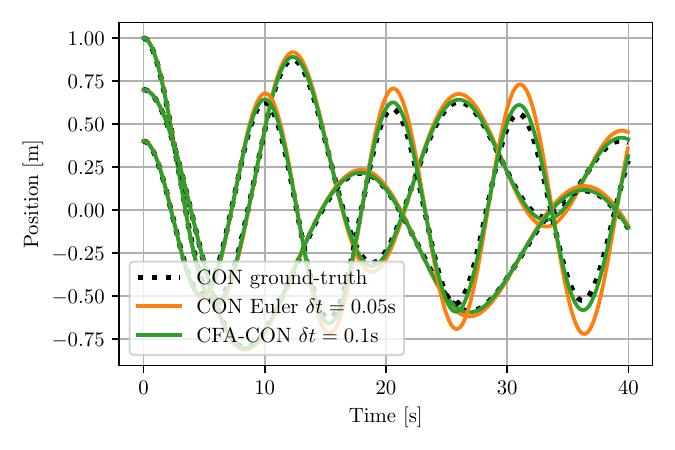}}
    \subfigure[Velocities]{\includegraphics[width=0.49\textwidth, trim={5, 5, 5, 5}]{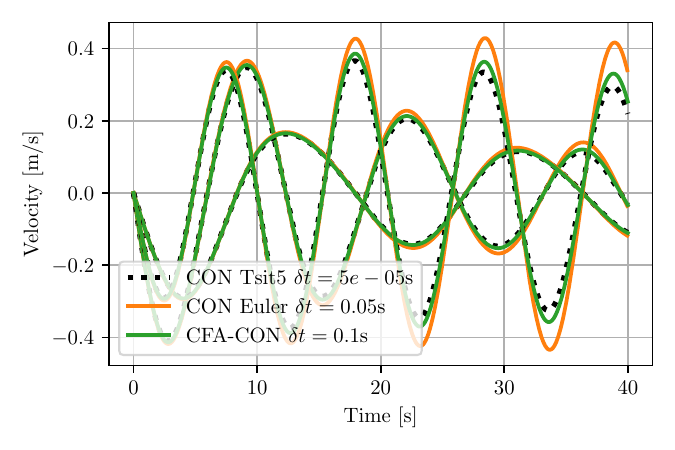}}
    \caption{Analysis of approximation error of CFA-CON: we compare the ground-truth solution of a \SI{40}{s} rollout of the \ac{CON} network consisting of three oscillators ($n=3$) with the \emph{CFA-CON} executed at a time step of $\delta t = \SI{0.1}{s}$ and a solution generated by integrating the \ac{ODE} at a time step of $\delta t = \SI{0.05}{s}$ with the Euler method.}
    \label{fig:cfa_con_comparison}
\end{figure}

\section{An approximate closed-form solution for the rollout of CON}
To predict future system states, we need to integrate the ODE in Eq.~\eqref{eq:con_dynamics}, with the solution given by $y(t_{k+1}) = y_{t_k} + \int_{t_k}^{t_{k+1}} f(y(t'), u(t')) \, \mathrm{d}t'$.
Unfortunately, a closed-form solution for the nonlinear dynamics $f(y, u)$ does not (yet) exist. Therefore, we traditionally need to revert to (high-order) numerical \ac{ODE} solvers that are computationally very expensive and introduce additional memory overhead~\cite{kidger2021neural}. This considerably increases the training time of models involving such continuous-time dynamics.
While the computational time can be reduced by increasing the (minimum) time step of the integrator, this comes at the expense of an integration error, and we lose (part of) the theoretical guarantees and practical characteristics that the nominal \ac{ODE} provides.
In this work, we take an alternative approach by splitting the problem into (i) decoupled linear dynamics that can be cheaply and precisely integrated using a closed-form solution and (ii) the residual, coupled nonlinear dynamics, which we integrate numerically at a slower time scale:
\small
\begin{equation}\label{eq:con_split_linear_terms}
\begin{split}
    \ddot{x}(t) = \underbrace{F -\kappa x(t) - d \, \dot{x}(t)}_{f_{\ddot{x}, \mathrm{ld}}(y): \, \text{decoupled, linear dynamics}} + \underbrace{g(u(t)-(K-\kappa) x(t) - (D-d) \, \dot{x}(t) - \tanh(W x(t) + b)}_{f_{\ddot{x}, \mathrm{nld}}(y, u): \: \text{coupled, nonlinear dynamics}},
\end{split}
\end{equation}
where $\kappa = \mathrm{diag}(K_{11} \dots K_{nn})$, $d = \mathrm(D_{11} \dots D_{nn})$ are the diagonal components of the stiffness and damping matrices, respectively, and $F \in \mathbb{R}^{n}$ is a constant, external forcing term on the oscillators.

For a short-time-interval $\delta t$, we now approximate \eqref{eq:con_split_linear_terms} as
\begin{equation}\label{eq:con_approx}
    \ddot{x}(t_k+\delta t) \approx f_{\ddot{x}, \mathrm{ld}}(y(t_k+\delta t), F(t_k)), 
    \qquad \text{with  }
    F(t_k) =  -f_{\ddot{x}, \mathrm{nld}}(y(t_k), u(t_k)).
\end{equation}
For a scalar \nth{2}-order, linear \ac{ODE} of form $\dot{y}_i = f_{\mathrm{ld},i}(x(t'), F(t_k))$, a well-known, closed-form solution~\cite{Pas2023damped} exists.
We exploit this characteristic by formulating the approximate solution as 
\begin{equation}\label{eq:cfa_con}
    y(t_k + \delta t) \approx \, f_\mathrm{CFA-CON}(y(t_k), u(t_k)) =  y(t_k) + \int_{t_k}^{t_k + \delta t} f_{\mathrm{ld}}(y(t'), F(t_k)) \, \mathrm{d}t',
\end{equation}
and denote $f_\mathrm{CFA-CON}: \mathbb{R}^n \times \mathbb{R}^m \to \mathbb{R}^n$ as the \ac{CFA-CON} model.
The implicit assumption behind \eqref{eq:con_approx} is that $f_{\ddot{x}, \mathrm{ld}}(y) \succ f_{\ddot{x}, \mathrm{nld}}(y, u)$ (i.e., the linear, decoupled dynamics dominate the nonlinear, coupled, time-varying dynamics).
We refer the interested reader to Appendix~\ref{apx:sec:con_cfa} for derivation and implementation details, where we summarize the integration procedure in Algorithm~\ref{alg:apx:con_cfa}. We also provide qualitative results for the integration accuracy in Fig.~\ref{fig:cfa_con_comparison} and quantitative results for the integration accuracy and computational speed-up w.r.t. numerical integrators in Appendix~\ref{apx:sec:con_cfa}.


\section{Learning control-oriented latent dynamics from pixels}\label{sec:learning_latent_space_dynamics}
We now move towards learning latent dynamical models based on \ac{CON} and \ac{CFA-CON}.
\acp{CON} are an ideal fit for learning latent dynamics as they guarantee that the latent states stay bounded. 

We assume to have access to observations in the form of images $o \in \mathbb{R}^{h_\mathrm{o} \times w_\mathrm{o} \times c_\mathrm{o}}$, where $c_\mathrm{o}$ denotes the number of channels. Please note that this could also be other high-dimensional observations such as LiDAR scans, point clouds, etc.
We now leverage an encoder-decoder architecture to map these high-dimensional observations into a compressed latent space: 
The encoder $\Phi: \mathbb{R}^{h_\mathrm{o} \times w_\mathrm{o} \times c_\mathrm{o}} \to \mathbb{R}^{n_z} $ with $n_z \ll h_\mathrm{o} \,  w_\mathrm{o}$ identifies a low-dimensional latent representation $z \in \mathbb{R}^{n_z}$ of the images. The decoder $\Psi: \mathbb{R}^{n_z} \to \mathbb{R}^{h_\mathrm{o} \times w_\mathrm{o} \times c_\mathrm{o}}$ approximates the inverse operation by reconstructing an image $\hat{o} \in \mathbb{R}^{h_\mathrm{o} \times w_\mathrm{o} \times c_\mathrm{o}}$ based on the latent representation.
To promote the learning of a smooth and monotonic mapping into latent space, we specifically choose to implement the autoencoder here as a $\beta$-\ac{VAE}~\cite{kingma2014auto, higgins2017beta}.
Instead of just statically reconstructing the image $\hat{o}(t_k)$, we are interested in predicting future observations $\hat{o}(t_{k+l})$, where $l \in 1 \dots N$. For this, we train a \nth{2}-order dynamical model that is, when integrated, able to predict future latent representations $z(t_{k+l})$.
This requires us to define a latent state $\xi(t) = \begin{bmatrix}
    z^\mathrm{T}(t) & \dot{z}^\mathrm{T}(t)
\end{bmatrix}^\mathrm{T} \in \mathbb{R}^{2n_z}$ consisting of the latent representation and latent velocity $\dot{z}(t) \in \mathbb{R}^{n_z}$.

We now rely on \ac{CON} with $n = n_z$ oscillators to provide us with the latent state derivative $\dot{\xi}= f_\mathrm{w}(y_\mathrm{w}(t), u(t))$, where we defined $\xi = y_\mathrm{w}$, and $z = x_\mathrm{w}$. To ensure stability, we make use of the Cholvesky decomposition to ensure that $M_\mathrm{w}, K_\mathrm{w}$ and $D_\mathrm{w}$ always remain positive definite (see Theorem~\ref{theorem:iss}).
It is important to note that we train the encoder, decoder, and dynamical model all jointly.
Please refer to Appendix~\ref{apx:sec:experimental_setup} for more implementation details.

\begin{table}[ht]
    \centering
    \vspace{-0.1cm}
    \begin{tiny}
    \setlength\tabcolsep{1.2pt}
    \begin{tabular}{c c c c c c c}
         \toprule
         \textbf{Model} & \textbf{RMSE M-SP+F}~\cite{botev2021priors} $\downarrow$ & \textbf{RMSE S-P+F}~\cite{botev2021priors} $\downarrow$ & \textbf{RMSE D-P+F}~\cite{botev2021priors} $\downarrow$ & \textbf{RMSE CS} $\downarrow$ & \textbf{RMSE PCC-NS-2} $\downarrow$ & \textbf{RMSE PCC-NS-3} $\downarrow$ \\
         \midrule
         RNN & $0.2739 \pm 0.0057$ & $0.2378 \pm 0.0352$ & $0.1694 \pm 0.0004$ & $\mathbf{0.1011 \pm 0.0009}$ & $0.1373 \pm 0.0185$ & $0.2232 \pm 0.0075$\\
         GRU~\cite{cho2014learning} & $0.0267 \pm 0.0033$ & $0.1457 \pm 0.0078$ & $0.1329 \pm 0.0005$ & $0.1125 \pm 0.0100$ & $\mathbf{0.0951 \pm 0.0021}$ & $0.2148 \pm 0.0196$\\
         coRNN~\cite{rusch2020coupled} & $\mathbf{0.0265 \pm 0.0002}$ & $0.1333 \pm 0.0044$ & $0.1324 \pm 0.0016$ & $0.2537 \pm 0.0018$ & $0.2504 \pm 0.0899$ & $0.2474 \pm 0.0018$\\
         NODE~\cite{chen2018neural} & $\mathbf{0.0264 \pm 0.0010}$ & $\mathbf{0.1260 \pm 0.0013}$ & $0.1324 \pm 0.0024$ & $0.2415 \pm 0.0021$ & $0.1867 \pm 0.0561$ & $0.3373 \pm 0.0565$\\
         MECH-NODE & $0.0328 \pm 0.0034$ & $0.1650 \pm 0.0205$ & $0.1710 \pm 0.0111$ & $0.2494 \pm 0.0028$ & $0.1035 \pm 0.0012$ & $0.1900 \pm 0.0024$\\
         CON-S (our) & $0.0303 \pm 0.0053$ & $0.1303 \pm 0.0064$ & $0.1323 \pm 0.0018$ & $0.1993 \pm 0.0646$ & $0.0996 \pm 0.0012$ & $0.1792 \pm 0.0038$\\
         CON-M (our) & $0.0303 \pm 0.0053$ & $0.1303 \pm 0.0064$ & $0.1323 \pm 0.0018$ & $0.1063 \pm 0.0027$ & $0.1008 \pm 0.0006$ & $\mathbf{0.1785 \pm 0.0023}$\\
         CFA-CON (our) & $0.0313 \pm 0.0026$ & $0.1352 \pm 0.0073$ & $\mathbf{0.1307 \pm 0.0012}$ & $0.1462 \pm 0.0211$ & $0.1124 \pm 0.0025$ & $0.1803 \pm 0.0003$\\
         \bottomrule
    \end{tabular}
    \end{tiny}
    \vspace{0.2cm}
    \caption{Benchmarking of \ac{CON} and \ac{CFA-CON} at learning latent dynamics against baseline methods. 
    The first three datasets, based on ~\cite{botev2021priors}, contain samples of a mass-spring with friction (\emph{M-SP + F}), a single pendulum with friction (\emph{S-P + F}), and a double pendulum with friction (\emph{D-P + F}) (all without system inputs).
    The \emph{CS} dataset considers a continuum soft robot consisting of one segment with three constant planar strains. The \emph{PCC-NS-2} and \emph{PCC-NS-3} datasets contain trajectories of a continuum soft robot made of two and three piecewise constant curvature segments, respectively.
    We choose the latent dimensions of the models as $n_z = 4$, $n_z = 4$, and $n_z = 12$ for the \emph{M-SP + F}, \emph{S-P + F}, and \emph{D-P + F} datasets, and  $n_z = 8$, $n_z = 12$, and $n_z = 12$ for the \emph{PCC-NS-2}, \emph{PCC-NS-3}, and \emph{CS} soft robotic datasets.
    We report the mean and standard deviation over three different random seeds.}
    \label{tab:latent_dynamics_results}
    \vspace{-0.3cm}
\end{table}

\begin{table}[ht]
    \centering
    \begin{tiny}
    \setlength\tabcolsep{2.5pt}
    \begin{tabular}{c|c|c|c|c|c|c|c|c|c}
        \toprule
        \textbf{Dataset} & $\mathbf{n_z}$ & \textbf{RNN} & \textbf{GRU}~\cite{cho2014learning} & \textbf{coRNN}~\cite{rusch2020coupled} & \textbf{NODE}~\cite{chen2018neural} & \textbf{MECH-NODE} & \textbf{CON-S (our)} & \textbf{CON-M (our)} & \textbf{CFA-CON (our)}\\
        \midrule
        \textbf{M-SP+F} & $4$ & $88$ & $248$ & $40$ & $3368$ & $3244$ & $\mathbf{34}$ & $\mathbf{34}$ & $\mathbf{34}$\\
        \textbf{D-P+F} & $12$ & $672$ & $1968$ & $348$ & $4404$ & $4032$ & $\mathbf{246}$ & $\mathbf{246}$ & $\mathbf{246}$\\
        \textbf{PCC-NS-2} & $8$ & $320$ & $928$ & $\mathbf{152}$ & $3856$ & $3062$ & $676$ & $7048$ & $7048$\\
        \bottomrule
    \end{tabular}
    \end{tiny}
    \vspace{0.2cm}
    \caption{Number of trainable parameters for the various latent dynamic models and the examples of the \emph{M-SP+F} ($n_z=4$, unactuated), \emph{D-P+F} ($n_z=12$, unactuated) and \emph{PCC-NS-2} ($n_z = 8$, actuated) datasets. The number of trainable parameters for all models and datasets can be found in Appendix~\ref{apx:sec:latent_dynamics_results}.}
    \label{tab:latent_dynamics_number_of_parameters}
    \vspace{-0.5cm}
\end{table}

\begin{figure}[ht]
    \centering
    \subfigure[RMSE vs. latent dimension $n_z$]{\includegraphics[width=0.45\textwidth, trim={5, 5, 5, 5}]{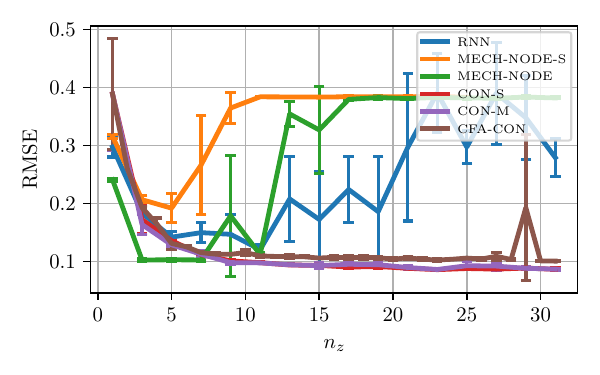}}
    \subfigure[RMSE vs. model parameters]{\includegraphics[width=0.45\textwidth, trim={5, 5, 5, 5}]{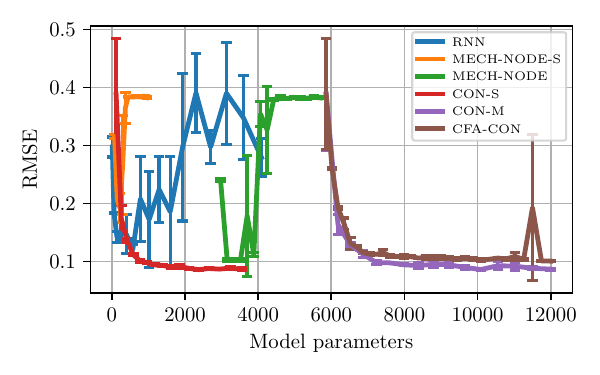}}
    \vspace{-0.2cm}
    \caption{Evaluation of prediction performance of the various models vs. the dimension of their latent representation $n_z$ and the number of trainable parameters of the dynamics model, respectively, on the \emph{PCC-NS-2} dataset. All hyperparameters are tuned for each model separately for $n_z=8$. The error bar denotes the standard deviation across three random seeds.} \label{fig:latent_dynamics_sweep_pcc_ns-2}
    \vspace{-0.5cm}
\end{figure}

\paragraph{Training.} It is important to remember that because we are using a $\beta$-\ac{VAE}~\cite{kingma2014auto, higgins2017beta}, the image encoding becomes stochastic, and the encoder neural network actually outputs $\mu_z(o), 2 \, \log(\sigma_z)(o) \in \mathbb{R}^{n_z}$. After executing the reparametrization trick as $z(t_k) \sim \mathcal{N}(\mu_z(t_k), \sigma_z^2(t_k))$, we formulate the loss function, evaluated on each trajectory consisting of $N$ time-steps, as
\begin{tiny}
\begin{equation}\label{eq:training_loss}
\begin{split}
    \mathcal{L} =& \: \sum_{k=0}^{N} \left ( \underbrace{\frac{\mathrm{MSE}(o(t_k), \Psi(z(t_k)))}{N+1}}_{\text{Static image reconstruction loss}} + \beta \underbrace{\frac{\mathcal{D}_\mathrm{KL} \left ( (\mu_z(t_k), \sigma_z(t_k) \right )}{N+1}}_{\text{Kullback–Leibler divergence}} \right ) + \sum_{k=1}^{N} \left ( \lambda_{\vec{o}} \underbrace{ \frac{\mathrm{MSE}(o(t_k), \Psi(\hat{z}(t_k)))}{N}}_{\text{Dynamic image reconstruction loss}} + \lambda_{\mathrm{z}} \underbrace{\frac{\mathrm{MSE}(z(t_k), \hat{z}(t_k))}{N}}_{\text{Latent dynamics consistency loss}} \right ),
\end{split}
\end{equation}
\end{tiny}
where $\hat{z}(t_k)$ is predicted by $\hat{\xi}(t_k)  = \int_{t=t_0}^{t_k} f_\xi(\xi(t'), u(t')) \: \mathrm{d}t'$, and $\xi(t_0) = \begin{bmatrix}
    z^\mathrm{T}(t_0) & \dot{z}^\mathrm{T}(t_0)
\end{bmatrix}^\mathrm{T}$. Here, $z(t_0)$ is given by the encoder, and $\dot{z}(t_0)$ is approximated using finite differences in image-space (see Appendix~\ref{apx:sub:initial_latent_velocity_estimation} for more details). $\beta, \lambda_{\vec{o}}, \lambda_\mathrm{z} \in \mathbb{R}$ are loss weights.

\paragraph{Models.} We train the \ac{CON} with the input-to-forcing mapping $g(u) = B(u) \, u$, where $B(u)$ is parametrized by a \ac{MLP} with a hyperbolic tangent activation function applied in between layers. We report results for two variants of the \ac{CON} model: for the medium-sized \emph{CON-M} and small-sized \emph{CON-S}, the \ac{MLP} consists of five and two layers with a hidden dimension of $30$ and $12$, respectively. The model \ac{CFA-CON} uses the same architecture as \emph{CON-M}. We compare 
against several popular latent space model architectures:
The \ac{NODE} model uses a \ac{MLP} with an hyperbolic activation functions and predicts $\dot{\xi}(t) = f_\mathrm{NODE}(\xi(t), u(t))$. To make the comparison fair, we parametrize the NODE's \ac{MLP} in the same fashion as for \emph{CON-M}. The \emph{MECH-NODE} integrates prior knowledge towards learning \nth{2}-order mechanical \acp{ODE} and, therefore, predicts $\ddot{z}(t) = f_\mathrm{MECH-NODE}(\xi(t), u(t))$. 
Furthermore, we consider multiple autoregressive models: \ac{RNN}, \ac{GRU}, and \ac{coRNN} and let them parameterize the following transition function: $\xi(t_{k+1}) = f_\mathrm{ar}(\xi(t_k), u(t_k))$
As common in the relevant literature~\cite{botev2021priors}, we allow the autoregressive models to perform multiple time step transitions before predicting the next sample.
For the autoencoder, we use a vanilla \ac{CNN}. More details can be found in Appendix~\ref{apx:sec:experimental_setup}.

\paragraph{Datasets.} We consider in total six datasets that are based on simulations of unactuated mechanical systems, and actuated continuum soft robots. The first three, mechanical dataset are based on the work of Botev et al.~\cite{botev2021priors} and contain video sequences of a mass-spring system with friction (\emph{M-SP+F}), a single pendulum with friction (\emph{S-P+F}), and a double pendulum with friction (\emph{D-P+F}).
Continuum soft robots have theoretically infinite \ac{DOF}, evolve with highly nonlinear and often time-dependent dynamical behaviors, and are notoriously difficult to model from first principles~\cite{armanini2023soft}. For that reason, it is a very interesting proposition if we could learn latent-space dynamical models directly from video~\cite{thuruthel2023multi} and later leverage them for control~\cite{almanzor2023static}. 
Therefore, we generate three datasets based on the \ac{PCS} soft robot model. \emph{CS} considers one segment with constant strain and is modeled using three configuration variables. \emph{PCC-NS-2} and \emph{PCC-NS-3} only consider bending deformations and contain soft robots with two and three segments, respectively. 
For all datasets, 
we render images with a resolution of 32x32px of the system's state.
More information on the datasets can be found in Appendix~\ref{apx:sub:datasets}.

We tune all hyperparameters for each model and dataset separately using Optuna~\cite{akiba2019optuna}.

\paragraph{Results.} 
\textbf{Unactuated mechanical datasets:}
The results in Tab.~\ref{tab:latent_dynamics_results} show that the \emph{NODE} model slightly outperforms the \emph{CON} network on the \emph{M-SP+F} and \emph{S-P+F} datasets. However, as the datasets do not consider system inputs, we can remove the input mapping from all models (e.g., \emph{RNN}, \emph{GRU}, \emph{coRNN}, \emph{CON}, and \emph{CFA-CON}). With that adjustment, the \emph{CON} network has the fewest parameters among all models, particularly two orders of magnitude less than the NODE model. Therefore, we find it very impressive that the CON network is roughly on par with the NODE model. For the \emph{D-P+F} dataset, we can conclude that the \emph{CFA-CON} model offers the best performance across all methods. Finally, most of the time, the \emph{CON} \& \emph{CFA-CON} networks outperform the other baseline methods that have more trainable parameters.
\textbf{Actuated continuum soft robot datasets:} The results in Tab.~\ref{tab:latent_dynamics_results} show that \emph{CON-M} matches the performance of the state-of-the-art methods across all experiments. In the case of \emph{PCC-NS-3}, \emph{CON-M} even decreases the \ac{RMSE} error by \SI{6}{\percent} w.r.t. the closest baseline method (MECH-NODE).
Impressively, the performance is not reduced (but instead often even improved) compared to other models that offer a much larger design space for learning the dynamics (e.g. \ac{NODE}).
Furthermore, \emph{CON-S} and \emph{CFA-CON} often only exhibit slightly lower performance than \emph{CON-M}, even though they have significantly fewer parameters and consider an approximated solution, respectively.
Supplementary results (e.g., more evaluation metrics) can be found in Appendix~\ref{apx:sec:latent_dynamics_results}.
We also conduct on the \emph{PCC-NS-2} dataset an analysis concerning the effect of the latent dimension on the performance (see Fig.~\ref{fig:latent_dynamics_sweep_pcc_ns-2}). For this experiment, all hyperparameters were tuned for $n_z=8$, and we observe that the \ac{CON} models have a much-improved consistency and smaller variance w.r.t. the baseline methods when the latent dimensionality is increased.

The results for a dataset based on the \nth{1}-order reaction-diffusion \ac{PDE}~\cite{champion2019data} are presented in Apx.~\ref{apx:sub:reaction_diffusion_latent_dynamics_results}.

\section{Exploiting the dynamic structure for latent-space control}\label{sec:latent_space_control}

We consider the problem setting of guiding the system towards a desired observation $o^\mathrm{d}$ by providing a sequence of inputs $u(t_k)$ such that at time $t_N$, the actual observation $o(t_N)$ matches $o^\mathrm{d}$.
A relatively simple way would be to encode the desired observation into latent space $z^\mathrm{d} = \Phi(o^\mathrm{d})$ and then to design a feedback controller (e.g., PID) in latent space: $g(u) = \mathrm{PID}(z^\mathrm{d}-z(t), -\dot{z})$.
Unfortunately, several challenges appear: first, it is not clear how the latent-space force $\tau = g(u)$ can be mapped back to an actual input $u(t)$ as the inverse input-to-forcing mapping $g^{-1}$ is generally not known.
Furthermore, relying entirely on a PID controller has several well-known drawbacks, such as poor and slow transient behavior, steady-state errors (in case the integral gain is chosen to be zero), and instability for high proportional and integral gains.
We take inspiration from potential shaping strategies~\cite{bloch2001controlled, ortega2021pid}, which are widely used for effectively controlling (elastic) robots, and, therefore, combine a feedforward term compensating the latent-space potential forces with an integral-saturated, PID-like feedback term. For mapping the desired forcing $\tau$ back to an input $u(t)$, we train an forcing decoder $\eta: \mathbb{R}^{n} \to \mathbb{R}^{m}$ that approximates $g^{-1}$. Specifically, we consider here the structure $u = \eta(\tau) = E(\tau) \, \tau$, where $E \in \mathbb{R}^{m \times n}$ is parameterized by an \ac{MLP}.

The latent-space control law is given by
\small
\begin{equation}\label{eq:controller}
    \tau(t) = g(u) = \underbrace{K_\mathrm{w} \, z^\mathrm{d} + \tanh(z^\mathrm{d} + b)}_{\substack{\text{Feedforward term:}\\ \text{compensation of potential forces}}} + \underbrace{K_\mathrm{p} \, (z^\mathrm{d} - z) - K_\mathrm{d} \, \dot{z} + K_\mathrm{i} \int_0^t \tanh(\upsilon (z^\mathrm{d}(t') - z(t')))\mathrm{d}t'}_{\text{Feedback term: P-satI-D}},
\end{equation}
where $K_\mathrm{p}, K_\mathrm{i}, K_\mathrm{d} \in \mathbb{R}^{n \times n}$ are the proportional, integral, and derivative control gains, respectively.
As integral terms can often lead to instability when applied to nonlinear systems~\cite{stolzle2024experimental}, we adopt an integral term saturation~\cite{pustina2022p} with the associated dimensionless gain $\upsilon \in \mathbb{R}$, which ensures that the integral error added at each time step is bounded to the interval $(-1, 1)$.
Subsequently, $\tau$ is  decoded to the input as $u(t) = \eta(\tau) = E(\tau) \, \tau$.
For training this decoder, we add a reconstruction loss to \eqref{eq:training_loss}: $\mathcal{L}_{\mathrm{u}}(t_k) = \lambda_{\mathrm{u}} \, \mathrm{MSE}(u(t_k), \hat{u}(t_k)) = \lambda_{\mathrm{u}} \, \mathrm{MSE} \left (u(t_k), \eta(g(u(t_k))) \right )$.

\begin{figure}[ht]
    \centering
    \subfigure[Samples of used datasets]{\includegraphics[width=0.27\linewidth]{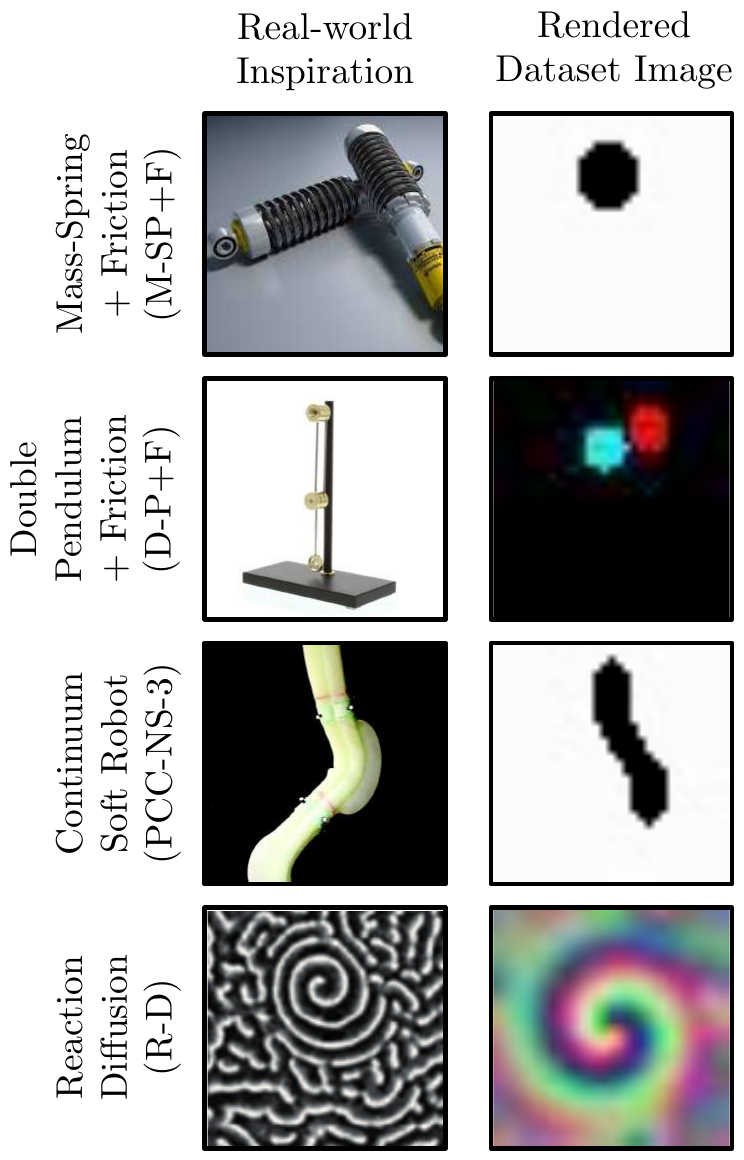}\label{fig:datasets_visualization}}
    \hfill
    \subfigure[Blockscheme of model-based control in latent space]{\includegraphics[width=0.69\linewidth]{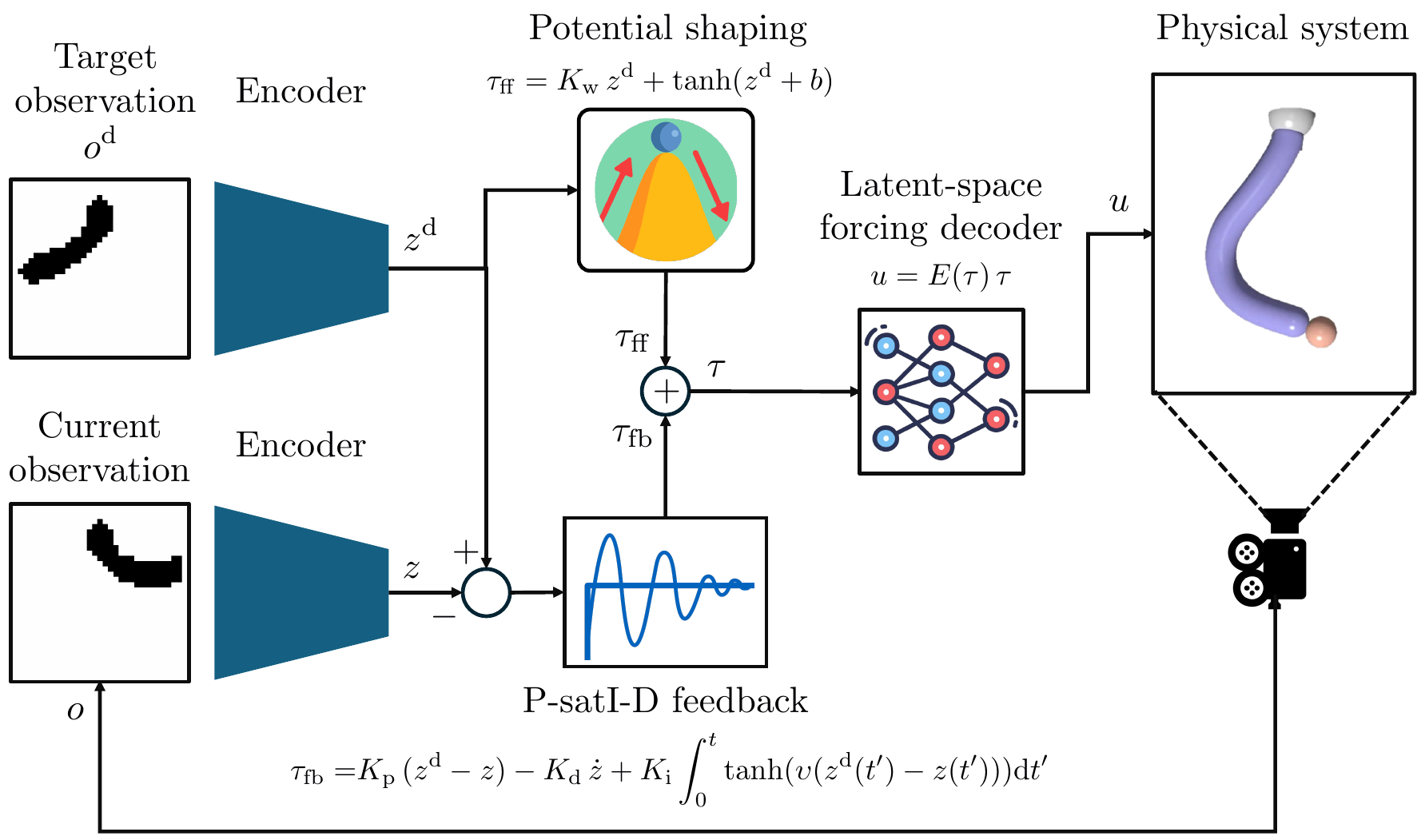}\label{fig:blockdiagram_latent_space_control}}
    \caption{\textbf{Panel (a):} Samples of some of the datasets used as part of the experimental verification, specifically for the results reported in Tab.~\ref{tab:latent_dynamics_results}. The real-world Reaction-Diffusion image is adopted from~\cite{epstein2016reaction}. \textbf{Panel (b):} Model-based control in latent space by exploiting the physical structure of the CON model.}
    \vspace{-0.5cm}
\end{figure}

\vspace{-0.2cm}
\paragraph{Experimental setup.}
We train a \ac{CON} model with two latent variables ($n_z = 2$) on the \emph{PCC-NS-2} dataset.
Analog to the input encoder mapping $B(u)$, the forcing decoder mapping $E(\tau)$ is parametrized by an \ac{MLP} consisting of five layers with hidden dimension $30$ and a hyperbolic tangent activation function.
The \emph{CON} model achieves an \ac{RMSE} of $0.1628$ on the test set.
We benchmark two controllers on the simulated continuum soft robot consisting of two segments: (i) a pure \emph{P-satI-D} controller (i.e., the feedback term in \eqref{eq:controller}) that leverages the smooth mapping into the latent representation enabled by the \ac{CON} dynamic model and the $\beta$-\ac{VAE}, and (ii) a \emph{P-satI-D+FF} (i.e., \eqref{eq:controller}) that exploits the structure of the \ac{CON} dynamics by compensating for the potential forces.
The stable closed-loop system dynamics made the control gain tuning very easy, and we selected $K_\mathrm{p} = 1, K_\mathrm{i}=2, K_\mathrm{d} = 0.02, \upsilon = 1$ for the \emph{P-satI-D} controller and $K_\mathrm{p} = 0, K_\mathrm{i}=2, K_\mathrm{d} = 0.05, \upsilon = 1$ for the \emph{P-satI-D+FF} controller, respectively.

Furthermore, we compare the control performance of our model-based controllers with a baseline control strategy based on the \emph{MECH-NODE} ($n_z = 2$) that achieves an error of $0.1104$ on the test set.
First, we utilize the same P-satI-D feedback controller as for the \ac{CON} model to generate the control action $\tau(t)$ in latent space. 
As the MECH-NODE uses an \ac{MLP} to parameterize the function $\dot{\xi} = f_\xi(\xi, u)$, we cannot easily map $\tau(t)$ into an input $u(t)$. 
Therefore, we linearize the latent space dynamics w.r.t to the input as $f_{\xi,\mathrm{ac}}(\xi, u) = f_\xi(\xi, 0) + A(\xi) \, u$, where $A(\xi) = \frac{\partial f_\xi}{\partial u}(\xi, 0)$ is computed using autodiff. Then, $u(t) = A^\mathrm{T}(\xi) \, \tau(t)$. After tuning the control gains, we choose  $K_\mathrm{p} = 0.001, K_\mathrm{i}=0.02, K_\mathrm{d} = \num{1e-5}, \upsilon = 1$.

We train all models (e.g., \emph{MECH-NODE}, \emph{CON}) on three different random seed and choose the best model instance. Please refer to Appendix~\ref{apx:sec:latent_space_control} for more details on the model selection.
We generate a trajectory of $7$ setpoints, where $q^\mathrm{d}(t_j) \sim \mathcal{U}(-5\pi, 5\pi)~\si{rad \per m}  \in \mathbb{R}^2$ is a sampled configuration of the soft robot.
Then, we render an image $o^\mathrm{d}(t_j)$ that represents the target observation for the controller and encode it into latent space to retrieve $z^\mathrm{d} \in \mathbb{R}^2$.
At time step $k$, we render an image $o(t_k)$ of the robot's current configuration $q(t_k)$ and encode the image. 
Subsequently, we evaluate the control law and apply the decoder $u(t_k) = \eta(\tau(t_k)))$, which is finally passed to the simulator that integrates the ground-truth dynamics to the next time-step $t_{k+1}$ considering the actuation $u(t_k)$.
The controller runs at \SI{100}{Hz}, and we simulate the ground-truth dynamics with a \emph{Dopri5} \ac{ODE} integrator at a time-step of \num{1e-5}~\si{s}.

\begin{figure}[ht]
    \centering
    \vspace{-0.2cm}
    \subfigure[Config. for P-satI-D with MECH-NODE.]{\includegraphics[width=0.45\columnwidth, trim={5, 10, 5, 5}]{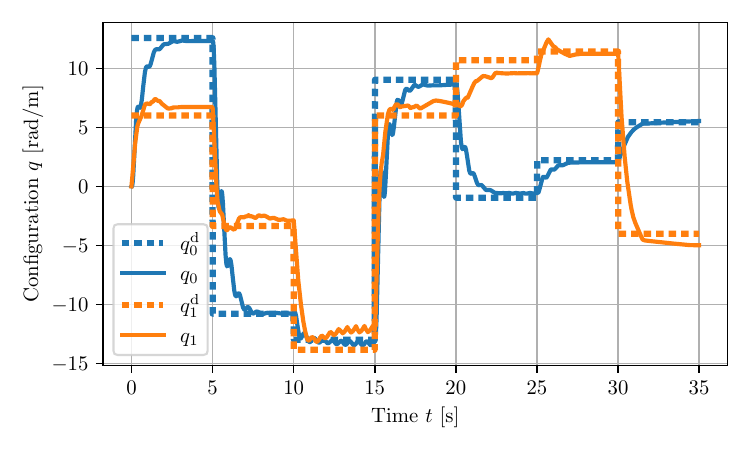}}
    \subfigure[Latent for P-satI-D with MECH-NODE.]{\includegraphics[width=0.45\columnwidth, trim={5, 10, 5, 5}]{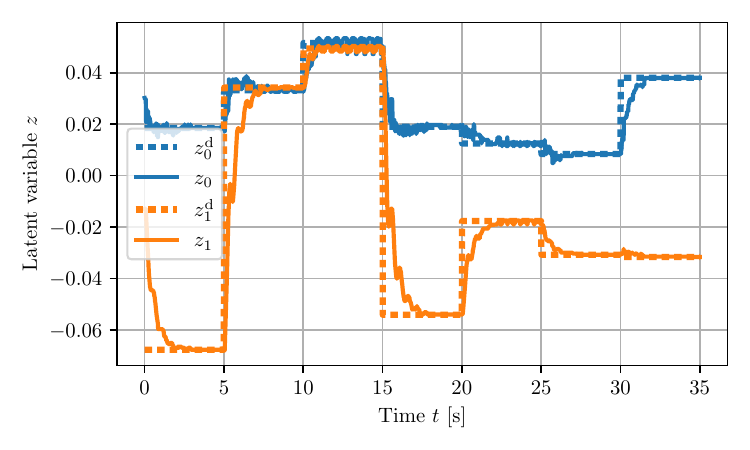}}\\
    \vspace{-0.3cm}
    \subfigure[Configuration for P-satI-D with \ac{CON}.]{\includegraphics[width=0.45\columnwidth, trim={5, 10, 5, 5}]{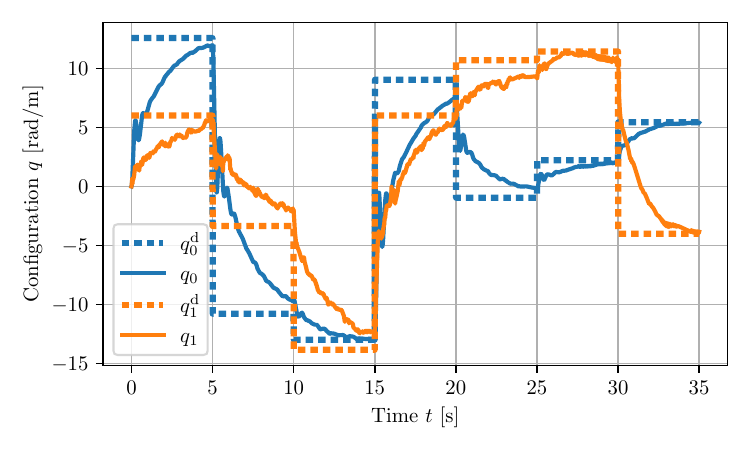}}
    \subfigure[Latent for P-satI-D with \ac{CON}.]{\includegraphics[width=0.45\columnwidth, trim={5, 10, 5, 5}]{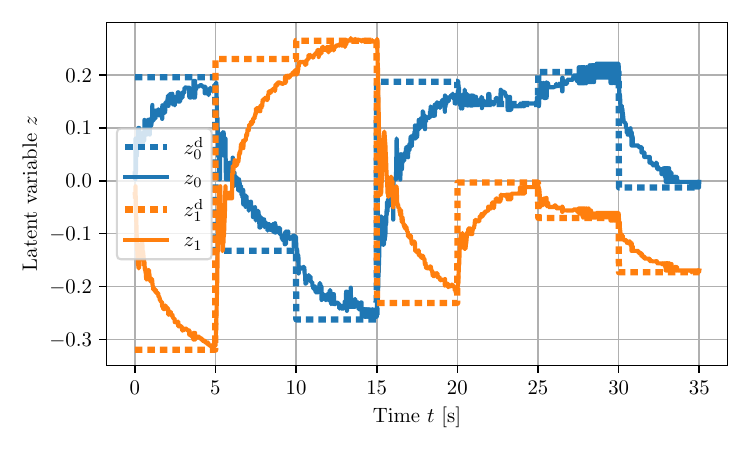}}\\
    \vspace{-0.3cm}
    \subfigure[Configuration for P-satI-D+FF with \ac{CON}.]{\includegraphics[width=0.45\columnwidth, trim={5, 10, 5, 5}]{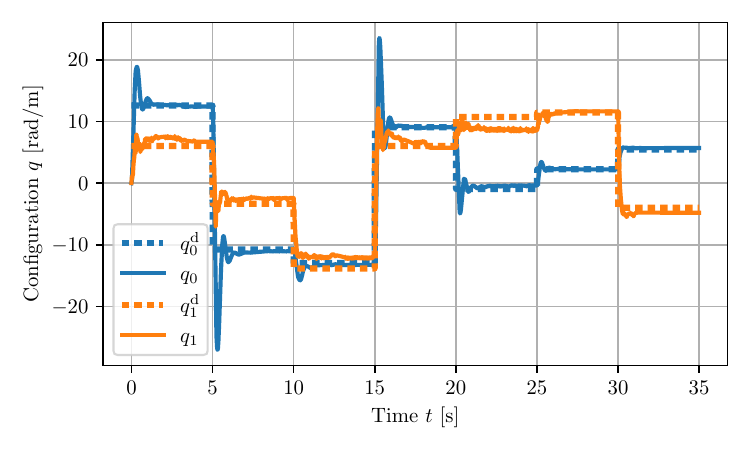}}
    \subfigure[Latent for P-satI-D+FF with \ac{CON}.]{\includegraphics[width=0.45\columnwidth, trim={5, 10, 5, 5}]{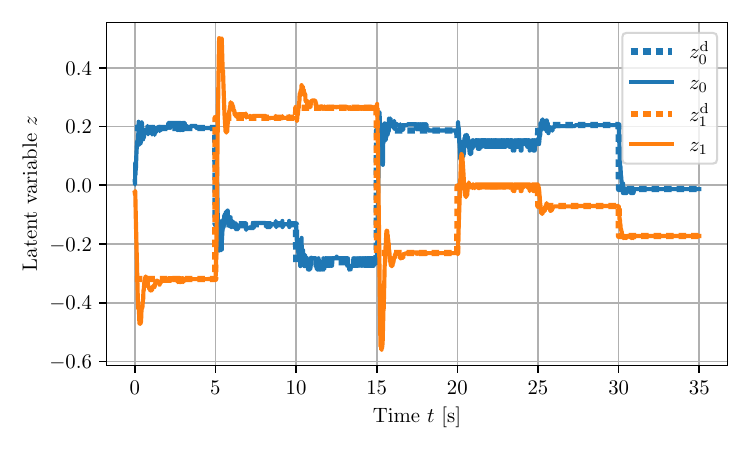}}\\
    \vspace{-0.2cm}
    \caption{Latent-space control of a continuum soft robot (simulated using two piecewise constant curvature segments) at following a sequence of setpoints: The upper two rows show the performance of a pure P-satI-D feedback controller operating in latent space $z$ learned with the MECH-NODE and \ac{CON} models, respectively. The lower row displays the results for a latent space controller based on the \ac{CON} model that additionally also compensates for the learned potential forces.}\label{fig:latent_space_control_results}
    \vspace{-0.4cm}
\end{figure}

\vspace{-0.2cm}
\paragraph{Results.} 
As an evaluation metric, we consider the \ac{RMSE} between the actual and the reference trajectory.
The \emph{P-satI-D} applied to the \emph{MECH-NODE} model (baseline) achieves an \ac{RMSE} of \SI{2.88}{rad \per m} w.r.t. to the desired configuration $q^\mathrm{d}$ (but unknown to the algorithm).
The \emph{P-satI-D} \ac{CON} controller, which does not exploit the learned latent dynamics for control, exhibits an \ac{RMSE} of \SI{4.08}{rad \per m} w.r.t. to the desired configuration $q^\mathrm{d}$.
The \emph{P-satI-D+FF} controller exhibits an \ac{RMSE} of \SI{2.12}{rad \per m} w.r.t. to the desired configuration $q^\mathrm{d}$.
We also visualize the closed-loop trajectories in Fig.~\ref{fig:latent_space_control_results} and as sequences of stills in Apx.~\ref{apx:sec:latent_space_control}.
We conclude that the nicely structured latent space generated by the $\beta$-VAE allows the \emph{P-satI-D} controller to effectively regulate the system towards the setpoint, although the response time is rather slow. The \emph{P-satI-D+FF} controller is able to exploit the structure of the \ac{CON} model through its potential shaping feedforward term.
With that, \emph{CON P-satI-D+FF} exhibits a faster response time and a \SI{26}{\percent} lower \ac{RMSE} than the \emph{MECH-NODE P-satI-D} baseline.
We provide results for the control of an actuated damped harmonic oscillator in Apx.~\ref{apx:sec:latent_dynamics_results}.

\vspace{-0.2cm}
\section{Conclusion and Limitations}\label{sec:conclusion_and_limitations}
\vspace{-0.2cm}

\paragraph{Conclusion.} In this work, we propose a new formulation for a coupled oscillator that is inherently input-to-state stable. Additionally, we identify a closed-form approximation, that is
able to simulate the network dynamics more accurately compared to numerical \ac{ODE} integrators with similar computational costs.
When learning latent dynamics with \ac{CON}, we observe that the performance is on par or slightly better compared to SoA methods such as \acp{RNN}, \acp{NODE}, etc., even though we constrained the solution space to a \ac{ISS}-stable coupled oscillator structure. Furthermore, we point out that the performance of the \ac{CON} models is more consistent across latent dimensions compared to the baselines and improved when not specifically tuned for a given dimension.
Furthermore, as seen in Tab.~\ref{tab:apx:latent_dynamics_results:pcc_ns-2}, the closed-form approximation achieves, with the same number of model parameters, similar accuracies and double the training speed w.r.t. to the continuous-time model.
Finally, we demonstrate that even a simple PID-like latent-space controller can effectively regulate the system to a setpoint. By exploiting the network structure and compensating for potential forces, regulation performance can be greatly improved, and response time decreased by more than \SI{55}{\percent}.

\paragraph{Limitations.}
While we think our proposed method shows great potential and opens interesting avenues for future research, there exist certain limitations. For example, the proposed method of learning (latent) dynamics implicitly assumes that the underlying system adheres to the Markov property (e.g., the full state of the system is observable), that a system with mechanical structure can approximate it, and that it has an isolated, globally asymptotically stable equilibrium. This is, for example, the case for many mechanical systems (e.g., some continuum soft robots, deformable objects, and elastic structures) with continuous dynamics, convex elastic behavior, dissipation, and whose time-dependent effects (e.g., viscoelasticity, hysteresis) are negligible. Even if these conditions are not met globally, the method can be applied to model the local behavior around an asymptotic equilibrium point of the system (e.g., robotic manipulators, legged robots) with added stability benefits for out-of-distribution samples. Alternatively, the method could be extended to relax some of these assumptions, e.g., by allowing for multiple equilibria, zero damping, or by incorporating additional terms to capture discontinuous dynamics (e.g., stick-slip models) or period motions (e.g., limit cycles such as the Van der Pol oscillator). The proposed method might not be suitable for some physical systems, such as nonholonomic systems, partially observable systems, or systems with non-Markovian properties. Examples of such systems include mobile robots and systems with hidden states or delayed observations. 

Furthermore, the approximated closed-form solution shows the best integration for situations where linear, decoupled dynamics dominate the transient. For dominant nonlinear, coupled forces, the performance of \ac{CFA-CON} degrades, and it might be better to revert to numerical integration of the \ac{CON} ODE.
Finally, the control works exceptionally well in the setting where the latent dimension equals the input dimension. We hypothesize that this enables the method to identify a diffeomorphism between the input and the latent-space forcing. 
Still not investigated is how the performance could degrade if $n_z > m$ (or $n_z < m$ for that matter).

\newpage

\begin{ack}
This work was supported under the European Union’s Horizon Europe Program from Project EMERGE - Grant Agreement No. 101070918.
\end{ack}

\bibliographystyle{unsrt}
\bibliography{main}


\newpage
\appendix
\section{Appendix on proof of Input-to-State Stability (ISS)}\label{apx:sec:iss_proof}
In the following, we will write $\lVert A \rVert$ to denote the induced norm of matrix $A$ and $\lambda_\mathrm{m}(A)$, $\lambda_\mathrm{M}(A)$ to refer to its minimum and maximum Eigenvalue respectively.

\begin{figure}[ht]
    \centering
    \subfigure[GAS: $g(u)=0$]{\includegraphics[width=0.46\columnwidth]{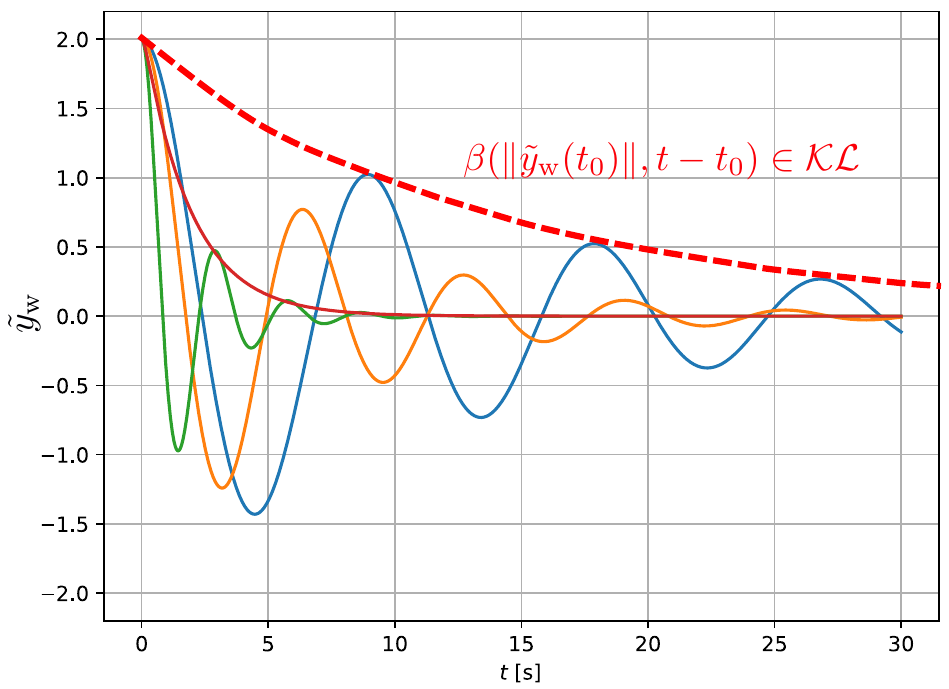}}
    \subfigure[ISS]{\includegraphics[width=0.49\columnwidth]{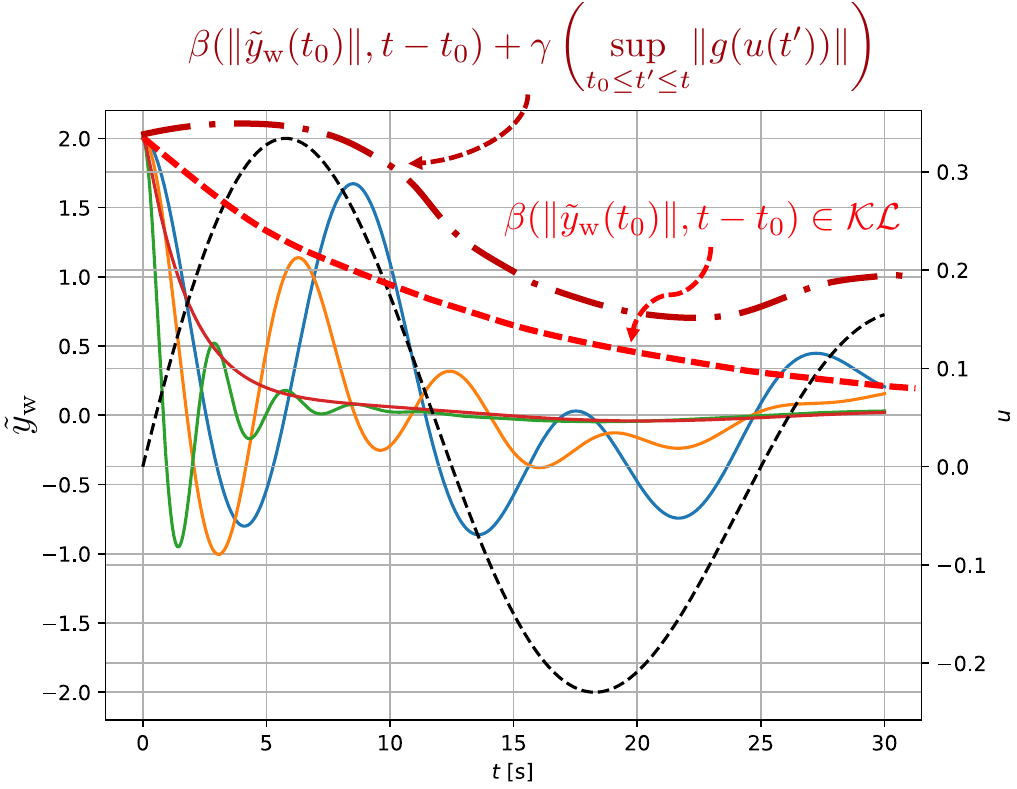}}
    \caption{Illustration of global asymptotic stability for the unforced system with $g(u)=0$ and input-to-state stability for the forced system, where the black dashed line denotes the input $u(t)$.}
    \label{fig:gas_iss_illustration}
\end{figure}

We introduce some expressions often used throughout this section:
The gradient of $V_\mu(\tilde{y}_\mathrm{w})$ w.r.t. the residual coordinate $\tilde{y}_\mathrm{w}$ is given by
\begin{equation}\label{eq:apx:V_mu_gradient}
    \frac{\partial V_\mu}{\partial \tilde{y}_\mathrm{w}}(\tilde{y}_\mathrm{w}) = P_\mathrm{V} \, \tilde{y}_\mathrm{w} + 
    \begin{bmatrix}
        \tanh(\bar{x}_\mathrm{w} + \tilde{x}_\mathrm{w} + b)\\ 
        0^{n}
    \end{bmatrix} - \begin{bmatrix}
        \tanh(\bar{x}_\mathrm{w} + b)\\ 
        0^{n}
    \end{bmatrix}.
\end{equation}
Next, the Hessian of the Lyapunov candidate can be derived as
\begin{equation}\label{eq:apx:V_mu_hessian}
    H_\mathrm{V}(\tilde{x}_\mathrm{w}) = \frac{\partial^2 V_\mu}{\partial \tilde{y}_\mathrm{w}^2} = \begin{bmatrix}
        K_\mathrm{w} + S_\mathrm{sech}^{2}(\tilde{x}_\mathrm{w}) & \mu \, M_\mathrm{w}\\
        \mu \, M_\mathrm{w}^\mathrm{T} & M_\mathrm{w}
    \end{bmatrix} \in \mathbb{R}^{2n \times 2n},
\end{equation}
where
\begin{equation}\label{eq:apx:S_sech_definition}
    S_\mathrm{sech}(\tilde{x}_\mathrm{w}) = \mathrm{diag}(\mathrm{sech}(\bar{x}_\mathrm{w} + \tilde{x}_\mathrm{w} + b)) \in \mathbb{R}^{n \times n} \succ 0 \quad \forall \tilde{x}_\mathrm{w} \in \mathbb{R}^n.
\end{equation}
Furthermore, the Schur complement of $P_\mathrm{V}$ is given by
\begin{equation}\label{eq:apx:P_V_Schur_complement}
    S_{P_\mathrm{V}} = M_\mathrm{w} - \mu^2 M_\mathrm{w}^\mathrm{T} \, K_\mathrm{w} \, M_\mathrm{w}.
\end{equation}

\subsection{Potential force and energy}\label{apx:sub:conw_symmetric_potential}
\begin{Lemma}
    Let $\tilde{x}_\mathrm{w} \in \mathbb{R}^n$ be generalized coordinates and $\bar{x}_\mathrm{w}, b \in \mathbb{R}^n$ constants.
    Then, the potential force of the system \eqref{eq:conw_residual_dynamics}
    \begin{equation}
        \tilde{f}_{\mathcal{U}_\mathrm{w}}(\tilde{x}_\mathrm{w}) = K_\mathrm{w} (\bar{x}_\mathrm{w} + \tilde{x}_\mathrm{w}) + \tanh(\bar{x}_\mathrm{w} + \tilde{x}_\mathrm{w} + b),
    \end{equation}
    stems from the potential
    \begin{equation}
        \mathcal{U}_\mathrm{w}(\tilde{x}_\mathrm{w}) = \frac{1}{2} \, \tilde{x}_\mathrm{w}^\top K_\mathrm{w} \, \tilde{x}_\mathrm{w} + \sum_{i=1}^n \left ( \mathrm{lcosh}(\bar{x}_{\mathrm{w},i} + \tilde{x}_{\mathrm{w},i}+b_i)-\mathrm{lcosh}(\bar{x}_{\mathrm{w},i}+b_i)-\tanh(\bar{x}_{\mathrm{w},i}+b_i) \, \tilde{x}_{\mathrm{w},i} \right ).
    \end{equation}
\end{Lemma}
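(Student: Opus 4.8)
The statement has two parts — that $\tilde{f}_{\mathcal{U}_\mathrm{w}}$ is a gradient (conservative) field at all, and that the displayed $\mathcal{U}_\mathrm{w}$ is a primitive of it — and I would treat them in that order. For conservativity I would use the standard fact that a $C^1$ vector field on the simply connected domain $\mathbb{R}^n$ derives from a scalar potential precisely when its Jacobian is symmetric everywhere (equality of mixed partials, i.e.\ the Poincar\'e lemma). Differentiating $\tilde{f}_{\mathcal{U}_\mathrm{w}}(\tilde{x}_\mathrm{w}) = K_\mathrm{w}(\bar{x}_\mathrm{w} + \tilde{x}_\mathrm{w}) + \tanh(\bar{x}_\mathrm{w} + \tilde{x}_\mathrm{w} + b)$ yields $\partial\tilde{f}_{\mathcal{U}_\mathrm{w}}/\partial\tilde{x}_\mathrm{w} = K_\mathrm{w} + S_\mathrm{sech}^2(\tilde{x}_\mathrm{w})$, with $S_\mathrm{sech}$ as in \eqref{eq:apx:S_sech_definition}; this is exactly the $(1,1)$ block of the Hessian $H_\mathrm{V}$ already computed in \eqref{eq:apx:V_mu_hessian}. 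Since $K_\mathrm{w}$ is symmetric by hypothesis (it is positive definite, and in the learned model it is produced from a Cholesky factor) and $S_\mathrm{sech}^2(\tilde{x}_\mathrm{w})$ is diagonal, this Jacobian is symmetric for every $\tilde{x}_\mathrm{w}$, so a potential exists.

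The point worth spelling out — and the reason the $\mathcal{W}$-coordinate change is needed in the first place — is that in the original coordinates the analogous hyperbolic coupling $\tanh(Wx+b)$ has Jacobian $\mathrm{diag}(\mathrm{sech}^2(Wx+b))\,W$, which is in general not symmetric; absorbing $W$ into the state symmetrises it. To then pin down the primitive, I would integrate $\tilde{f}_{\mathcal{U}_\mathrm{w}}$ along the coordinate axes from $0$ to $\tilde{x}_\mathrm{w}$. The hyperbolic part separates coordinatewise and integrates to $\sum_i \big(\mathrm{lcosh}(\bar{x}_{\mathrm{w},i}+\tilde{x}_{\mathrm{w},i}+b_i) - \mathrm{lcosh}(\bar{x}_{\mathrm{w},i}+b_i)\big)$ using $\mathrm{lcosh}' = \tanh$; the linear part contributes the quadratic $\tfrac12\tilde{x}_\mathrm{w}^\mathrm{T} K_\mathrm{w}\tilde{x}_\mathrm{w}$ (here the symmetry of $K_\mathrm{w}$ is what makes the gradient of this quadratic equal to $K_\mathrm{w}\tilde{x}_\mathrm{w}$); and I would fix the residual linear-in-$\tilde{x}_\mathrm{w}$ constant by subtracting $\sum_i\tanh(\bar{x}_{\mathrm{w},i}+b_i)\,\tilde{x}_{\mathrm{w},i}$ so that $\nabla\mathcal{U}_\mathrm{w}$ vanishes at $\tilde{x}_\mathrm{w} = 0$ (mandatory, since $\tilde{x}_\mathrm{w}=0$ is the equilibrium). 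Differentiating this candidate coordinatewise, by the fundamental theorem of calculus one gets $\nabla\mathcal{U}_\mathrm{w} = K_\mathrm{w}\tilde{x}_\mathrm{w} + \tanh(\bar{x}_\mathrm{w} + \tilde{x}_\mathrm{w} + b) - \tanh(\bar{x}_\mathrm{w} + b)$, and substituting the characteristic equation of the equilibrium from Lemma~\ref{lemma:conw_single_equilbrium}, $\tanh(\bar{x}_\mathrm{w} + b) = -K_\mathrm{w}\bar{x}_\mathrm{w}$, this collapses to exactly $\tilde{f}_{\mathcal{U}_\mathrm{w}}(\tilde{x}_\mathrm{w})$.

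I do not expect a genuine obstacle: once conservativity is granted, everything else is bookkeeping. The two places to be careful are (i) making the symmetry of $K_\mathrm{w}$ explicit, since without it the claim is simply false and the whole potential/energy construction of the paper collapses, and (ii) tracking which additive constants are absorbed into $\mathcal{U}_\mathrm{w}$, so that the force-balance identity is used consistently and the resulting potential is the genuine residual one with $\mathcal{U}_\mathrm{w}(0)=0$ and $\nabla\mathcal{U}_\mathrm{w}(0)=0$ — this is exactly the step that reconciles the $-\tanh(\bar{x}_\mathrm{w}+b)$ term appearing in $\nabla\mathcal{U}_\mathrm{w}$ with the $K_\mathrm{w}\bar{x}_\mathrm{w}$ term of $\tilde{f}_{\mathcal{U}_\mathrm{w}}$. (Positive definiteness and radial unboundedness of the so-defined potential, which is what the later Lyapunov argument actually consumes, are separate and handled in Lemma~\ref{lemma:conw_lyapunov_candidate}.)
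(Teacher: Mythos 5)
Your proof is correct and follows essentially the same route as the paper's: the paper likewise differentiates to get the Jacobian $K_\mathrm{w}+S_\mathrm{sech}^{2}(\tilde{x}_\mathrm{w})$ of the generalized force, uses $K_\mathrm{w}\succ 0\Rightarrow K_\mathrm{w}=K_\mathrm{w}^\mathrm{T}$ to conclude symmetry, and reads off a primitive; your explicit appeal to the Poincar\'e lemma on $\mathbb{R}^n$ only makes the "symmetric Jacobian $\Rightarrow$ conservative" step explicit. Your version is, however, more careful than the paper's, and that care exposes a bookkeeping slip in the statement itself: the gradient of the displayed $\mathcal{U}_\mathrm{w}$ is $\tanh(\bar{x}_\mathrm{w}+\tilde{x}_\mathrm{w}+b)-\tanh(\bar{x}_\mathrm{w}+b)$, which even after substituting the force balance $\tanh(\bar{x}_\mathrm{w}+b)=-K_\mathrm{w}\bar{x}_\mathrm{w}$ still differs from $\tilde{f}_{\mathcal{U}_\mathrm{w}}$ by the elastic term $K_\mathrm{w}\tilde{x}_\mathrm{w}$ (equivalently, the Hessian of the displayed $\mathcal{U}_\mathrm{w}$ is only $S_\mathrm{sech}^{2}$, not $K_\mathrm{w}+S_\mathrm{sech}^{2}$). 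The primitive you construct, which includes $\tfrac{1}{2}\tilde{x}_\mathrm{w}^\mathrm{T}K_\mathrm{w}\tilde{x}_\mathrm{w}$, is the one whose gradient actually equals $\tilde{f}_{\mathcal{U}_\mathrm{w}}$; in the paper that quadratic is carried separately inside the $\tfrac{1}{2}\tilde{y}_\mathrm{w}^\mathrm{T}P_\mathrm{V}\tilde{y}_\mathrm{w}$ term of the Lyapunov function, so the displayed $\mathcal{U}_\mathrm{w}$ is only the hyperbolic part of the potential. The paper's proof asserts $\partial\mathcal{U}_\mathrm{w}/\partial\tilde{x}_\mathrm{w}=\tilde{f}_{\mathcal{U}_\mathrm{w}}$ without performing the differentiation and therefore does not notice this; your two cautionary points, the symmetry of $K_\mathrm{w}$ and the handling of the additive linear and quadratic terms, are exactly the right ones.
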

\begin{proof}
    First, we take the derivative of $\mathcal{U}(\tilde{x}_\mathrm{w})$:
    \begin{equation}
    \begin{split}
        \frac{\partial \mathcal{U}_\mathrm{w}}{\partial \tilde{x}_\mathrm{w}} =& \: \frac{\partial}{\partial \tilde{x}_\mathrm{w}} \left (\frac{1}{2} \, \tilde{x}_\mathrm{w}^\top K_\mathrm{w} \, \tilde{x}_\mathrm{w} + \sum_{i=1}^n \int_{0}^{\tilde{x}_{\mathrm{w},i}} \tanh(\bar{x}_{\mathrm{w},i}+\sigma+b_i) \, \mathrm{d} \sigma - \sum_{i=1}^n \int_{0}^{\tilde{x}_{\mathrm{w},i}} \tanh(\bar{x}_{\mathrm{w},i}+b_i) \, \mathrm{d} \sigma \right ), \\
        =& \: K_\mathrm{w} \, \tilde{x}_\mathrm{w} + \tanh(\bar{x}_\mathrm{w} + \tilde{x}_\mathrm{w} + b) - \tanh(\bar{x}_\mathrm{w} + b),\\
        =& \: K_\mathrm{w} \, \tilde{x}_\mathrm{w} + \tanh(\bar{x}_\mathrm{w} + \tilde{x}_\mathrm{w} + b) + K_\mathrm{w} \, \bar{x}_\mathrm{w},\\
        =& \: K_\mathrm{w} (\bar{x}_\mathrm{w} + \tilde{x}_\mathrm{w}) + \tanh(\bar{x}_\mathrm{w} + \tilde{x}_\mathrm{w} + b) = \tilde{f}_{\mathcal{U}_\mathrm{w}},
    \end{split}
    \end{equation}
    where we used the identity $K_\mathrm{w} \, \bar{x}_\mathrm{w} = - \tanh(\bar{x}_\mathrm{w} + b)$.
    The Hessian of the potential is given by
    \begin{equation}
        H_{\mathcal{U}_\mathrm{w}}(\tilde{x}_\mathrm{w}) = \frac{\partial^2 \mathcal{U}_\mathrm{w}}{\partial \tilde{x}_\mathrm{w}^2} = \frac{\partial \tilde{f}_{\mathcal{U}_\mathrm{w}}}{\partial \tilde{x}_\mathrm{w}}  = K_\mathrm{w} + S_\mathrm{sech}^{2}(\tilde{x}_\mathrm{w}) \in \mathbb{R}^{n \times n}.
    \end{equation}
    As $K_\mathrm{w} \succ 0 \Rightarrow K_\mathrm{w} = K_\mathrm{w}^\mathrm{T}$, we can easily show that the potential force is symmetric:
    \begin{equation}
        H_{\mathcal{U}_\mathrm{w}}^\mathrm{T} = K_\mathrm{w}^\mathrm{T}  + S_\mathrm{sech}^{2}(\tilde{x}_\mathrm{w})^\mathrm{T} = K_\mathrm{w} + S_\mathrm{sech}^{2}(\tilde{x}_\mathrm{w}) = H_{\mathcal{U}_\mathrm{w}}.
    \end{equation}
\end{proof}

\subsection{Proof of Lemma~\ref{lemma:conw_single_equilbrium}: single \& isolated equilibrium}\label{apx:sub:proof_conw_single_equilbrium}
\textbf{Lemma~\ref{lemma:conw_single_equilbrium} restated.}
\textit{Let $K_\mathrm{w} \succ 0$. Then, the dynamics defined in \eqref{eq:conw_dynamics} have a single, isolated equilibrium $\bar{y}_\mathrm{w} = \begin{bmatrix}
            \bar{x}_\mathrm{w}^\mathrm{T} & 0^\mathrm{T}
        \end{bmatrix}^\mathrm{T}$.
}
\begin{proof}
    We regard the characteristic equation as a function: $h_\mathrm{eq}(x_\mathrm{w}) = \tanh(x_\mathrm{w} + b) + K_\mathrm{w} \, x_\mathrm{w}$. For there to exist multiple equilibria, $h_\mathrm{eq}(\bar{x}_\mathrm{w}) = 0$ would need to be true for multiple $\bar{x}$. However, we take the partial derivative of $h_\mathrm{eq}(x_\mathrm{w})$ w.r.t. $x_\mathrm{w}$ and see that
    \begin{equation}\label{eq:apx:lemma_conw_single_equilbrium}
        \frac{\partial h_\mathrm{eq}}{\partial x_\mathrm{w}} =  K_\mathrm{w} + S_\mathrm{sech}^2(x_\mathrm{w}) \succ 0, \quad \forall x_\mathrm{w} \in \, \mathbb{R}^n,
        \qquad \text{with }
        S_\mathrm{sech}(x_\mathrm{w}) = \mathrm{diag}(\mathrm{sech}(x_\mathrm{w} + b)) \in \mathbb{R}^{n \times n},
    \end{equation}
    as $S_\mathrm{sech}(x_\mathrm{w}) \succ 0 \quad \forall x_\mathrm{w} \in \, \mathbb{R}^n$ and $K_\mathrm{w} \succ 0$. Therefore, $h_\mathrm{eq}(x_\mathrm{w})$ is continuously increasing and can only cross the zero line once.
\end{proof}

\subsection{Proof of Lemma~\ref{lemma:conw_lyapunov_candidate}: Validity of strict Lyapunov candidate $V_\mu(\tilde{y}_\mathrm{w})$}\label{apx:sub:proof_lyapunov_candidate_conditions}

\begin{Lemma}\label{lemma:apx:P_V_Schur_complement_positive_definite}
    Suppose $M_\mathrm{w} \succ 0, K_\mathrm{w} \succ 0$ and $0 < \mu < \frac{\sqrt{\lambda_\mathrm{m}(M_\mathrm{w}) \, \lambda_\mathrm{m}\left(K_\mathrm{w}\right)}}{\lVert M_\mathrm{w} \rVert} \coloneqq \mu_\mathrm{V}$. Then $S_{P_\mathrm{V}}$, as defined in \eqref{eq:apx:P_V_Schur_complement}, is positive definite.
\end{Lemma}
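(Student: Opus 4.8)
The plan is to reduce the claim to a single scalar inequality in $\mu$. Recall that $S_{P_\mathrm{V}}$ is the Schur complement of the leading block $K_\mathrm{w}$ of $P_\mathrm{V}$, so it is symmetric (because $M_\mathrm{w}, K_\mathrm{w} \succ 0$ are), and $S_{P_\mathrm{V}} \succ 0$ is equivalent to $v^\mathrm{T} S_{P_\mathrm{V}} v > 0$ for every $v \in \mathbb{R}^n \setminus \{0\}$. Writing the quadratic form as $v^\mathrm{T} M_\mathrm{w} v - \mu^2 \, (M_\mathrm{w} v)^\mathrm{T} K_\mathrm{w}^{-1} (M_\mathrm{w} v)$, I would bound the two pieces separately: the first from below by a Rayleigh estimate, the second from above, and then collect everything into a coefficient of $\lVert v \rVert_2^2$ whose sign is governed precisely by the hypothesis $\mu < \mu_\mathrm{V}$.

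Concretely, first I would use $v^\mathrm{T} M_\mathrm{w} v \ge \lambda_\mathrm{m}(M_\mathrm{w}) \, \lVert v \rVert_2^2$. Then, setting $w = M_\mathrm{w} v$, I would estimate $w^\mathrm{T} K_\mathrm{w}^{-1} w \le \lambda_\mathrm{M}(K_\mathrm{w}^{-1}) \, \lVert w \rVert_2^2 = \lVert w \rVert_2^2 / \lambda_\mathrm{m}(K_\mathrm{w})$ and then $\lVert w \rVert_2 = \lVert M_\mathrm{w} v \rVert_2 \le \lVert M_\mathrm{w} \rVert \, \lVert v \rVert_2$. Combining the two bounds yields $v^\mathrm{T} S_{P_\mathrm{V}} v \ge \bigl( \lambda_\mathrm{m}(M_\mathrm{w}) - \mu^2 \, \lVert M_\mathrm{w} \rVert^2 / \lambda_\mathrm{m}(K_\mathrm{w}) \bigr) \, \lVert v \rVert_2^2$, and the bracketed factor is strictly positive exactly when $\mu^2 < \lambda_\mathrm{m}(M_\mathrm{w}) \, \lambda_\mathrm{m}(K_\mathrm{w}) / \lVert M_\mathrm{w} \rVert^2$, i.e.\ $\mu < \mu_\mathrm{V}$. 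Hence $v^\mathrm{T} S_{P_\mathrm{V}} v > 0$ for all nonzero $v$, which gives $S_{P_\mathrm{V}} \succ 0$.

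This is essentially routine, so there is no real obstacle; the only point deserving care is the choice of estimates. If one instead factors $S_{P_\mathrm{V}} = M_\mathrm{w} \bigl( M_\mathrm{w}^{-1} - \mu^2 K_\mathrm{w}^{-1} \bigr) M_\mathrm{w}$ and uses the congruence invariance of definiteness, one even gets a slightly weaker sufficient condition on $\mu$, so the stated threshold $\mu_\mathrm{V}$ is comfortably sufficient either way; routing the argument through $\lVert M_\mathrm{w} v \rVert_2 \le \lVert M_\mathrm{w} \rVert \, \lVert v \rVert_2$ is what makes the resulting bound coincide exactly with the $\mu_\mathrm{V}$ already used in Lemma~\ref{lemma:conw_lyapunov_candidate}. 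I would also keep in mind that the relevant Schur complement is the one formed with $K_\mathrm{w}^{-1}$; with that understood, the chain of inequalities above closes immediately.
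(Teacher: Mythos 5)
Your proof is correct and follows essentially the same route as the paper: both lower-bound the Schur complement by $\lambda_\mathrm{m}(M_\mathrm{w}) - \mu^2 \lVert M_\mathrm{w}\rVert^2/\lambda_\mathrm{m}(K_\mathrm{w})$ (you via the quadratic form, the paper via an eigenvalue perturbation bound), and the hypothesis $\mu < \mu_\mathrm{V}$ makes this quantity positive. You also correctly read the Schur complement as $M_\mathrm{w} - \mu^2 M_\mathrm{w}^\mathrm{T} K_\mathrm{w}^{-1} M_\mathrm{w}$, which is what the paper's estimates implicitly use.
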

\begin{proof}
    The minimum Eigenvalue of $S_{P_\mathrm{V}}$ is bounded by
    \begin{equation}
    \begin{split}
        \lambda_\mathrm{m}(S_{P_\mathrm{V}}) \geq& \: \lambda_\mathrm{m}(M_\mathrm{w}) - \mu^2 \, \lVert M_\mathrm{w}^\mathrm{T} \, K_\mathrm{w} \, M_\mathrm{w} \rVert,\\
        \geq& \: \lambda_\mathrm{m}(M_\mathrm{w}) - \mu^2 \, \frac{\lVert M_\mathrm{w} \rVert^2}{\lambda_\mathrm{w} \left (K_\mathrm{w} \right )}.\\
    \end{split}
    \end{equation}
    Based on the assumption $M_\mathrm{w} \succ 0, K_\mathrm{w} \succ 0$, we can state $\frac{\lVert M_\mathrm{w} \rVert}{\lambda_\mathrm{m}(K_\mathrm{w})} > 0$. Therefore, the critical case for the lower bound on $\lambda_\mathrm{m}(S_{P_\mathrm{V}})$ is $
    \mu = \frac{\sqrt{\lambda_\mathrm{m}(M_\mathrm{w}) \, \lambda_\mathrm{m}\left(K_\mathrm{w}\right)}}{\lVert M_\mathrm{w} \rVert} \coloneqq \mu_\mathrm{V}$. Hence,
    \begin{equation}
    \begin{split}
        \lambda_\mathrm{m}(S_{P_\mathrm{V}}) >& \: \lambda_\mathrm{m}(M_\mathrm{w}) - \frac{\lambda_\mathrm{m}(M_\mathrm{w}) \, \lambda_\mathrm{m}\left(K_\mathrm{w}\right)}{\lVert M_\mathrm{w} \rVert^2} \, \frac{\lVert M_\mathrm{w} \rVert^2}{\lambda_\mathrm{m}(K_\mathrm{w})} \: = \: 0.
    \end{split}
    \end{equation}
    Consequently, the Eigenvalue sensitivity theorem~\cite{golub2013matrix} demands that $S_{P_\mathrm{V}} \succ 0$.
\end{proof}

\begin{Lemma}\label{lemma:apx:P_V_positive_definite}
    Let $M_\mathrm{w} \succ 0, K_\mathrm{w} \succ 0$, and $0 < \mu < \frac{\sqrt{\lambda_\mathrm{m}(M_\mathrm{w}) \, \lambda_\mathrm{m}\left(K_\mathrm{w}\right)}}{\lVert M_\mathrm{w} \rVert} \coloneqq \mu_\mathrm{V}$. Then, it follows that $P_\mathrm{V} \succ 0$ and $H_\mathrm{V}(\tilde{x}_\mathrm{w}) \succ 0 \: \: \forall \: \tilde{x}_\mathrm{w}\in \mathbb{R}^n$.
\end{Lemma}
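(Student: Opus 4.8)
The plan is to obtain both claims from the Schur complement characterization of block-symmetric matrices, using Lemma~\ref{lemma:apx:P_V_Schur_complement_positive_definite} as the only nontrivial ingredient. Recall that for a symmetric block matrix $\begin{bmatrix} A & B \\ B^\mathrm{T} & C \end{bmatrix}$ with $A \succ 0$, positive definiteness is equivalent to positive definiteness of the Schur complement $C - B^\mathrm{T} A^{-1} B$. I would apply this with the $(1,1)$-block in the role of $A$, since it is the block that the hypothesis $K_\mathrm{w} \succ 0$ guarantees to be positive definite.

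\textbf{Positive definiteness of $P_\mathrm{V}$.} Writing $P_\mathrm{V}$ as in \eqref{eq:conw_lyapunov_function}, its $(1,1)$-block is $K_\mathrm{w} \succ 0$, and the Schur complement with respect to this block is precisely $S_{P_\mathrm{V}}$ from \eqref{eq:apx:P_V_Schur_complement}. Since $0 < \mu < \mu_\mathrm{V}$, Lemma~\ref{lemma:apx:P_V_Schur_complement_positive_definite} gives $S_{P_\mathrm{V}} \succ 0$, and the Schur criterion immediately yields $P_\mathrm{V} \succ 0$.

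\textbf{Positive definiteness of $H_\mathrm{V}(\tilde{x}_\mathrm{w})$.} Comparing \eqref{eq:apx:V_mu_hessian} with the definition of $P_\mathrm{V}$ one has
\[
H_\mathrm{V}(\tilde{x}_\mathrm{w}) = P_\mathrm{V} + \begin{bmatrix} S_\mathrm{sech}^2(\tilde{x}_\mathrm{w}) & 0 \\ 0 & 0 \end{bmatrix}.
\]
By \eqref{eq:apx:S_sech_definition}, $S_\mathrm{sech}(\tilde{x}_\mathrm{w})$ is diagonal with strictly positive diagonal entries, hence $S_\mathrm{sech}^2(\tilde{x}_\mathrm{w}) \succ 0$ and the appended block matrix is positive semidefinite for every $\tilde{x}_\mathrm{w} \in \mathbb{R}^n$. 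Since the sum of a positive definite matrix and a positive semidefinite matrix is positive definite, it follows that $H_\mathrm{V}(\tilde{x}_\mathrm{w}) \succ 0$ for all $\tilde{x}_\mathrm{w}$. (Equivalently, one may rerun the Schur argument directly on $H_\mathrm{V}$: its $(1,1)$-block is $K_\mathrm{w} + S_\mathrm{sech}^2(\tilde{x}_\mathrm{w}) \succeq K_\mathrm{w} \succ 0$, and by operator-antimonotonicity of matrix inversion the corresponding Schur complement dominates $S_{P_\mathrm{V}} \succ 0$.)

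I do not expect a genuine obstacle here: the argument is a routine application of the Schur complement lemma, and the only points requiring attention are bookkeeping — taking the $(1,1)$-block (rather than the $(2,2)$-block) as the pivot, since it is the one the hypotheses make positive definite, and observing that the $S_\mathrm{sech}^2$ term only ever improves definiteness. All the real work is contained in Lemma~\ref{lemma:apx:P_V_Schur_complement_positive_definite}, which this proof treats as a black box.
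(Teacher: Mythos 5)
Your proof is correct and follows essentially the same route as the paper's: both establish $P_\mathrm{V} \succ 0$ from Lemma~\ref{lemma:apx:P_V_Schur_complement_positive_definite} via the Schur-complement criterion (pivoting on the $(1,1)$-block $K_\mathrm{w}$), and both obtain $H_\mathrm{V}(\tilde{x}_\mathrm{w}) \succ 0$ from $H_\mathrm{V}(\tilde{x}_\mathrm{w}) \succeq P_\mathrm{V}$, the two matrices differing only by the positive semidefinite block $\begin{bmatrix} S_\mathrm{sech}^2(\tilde{x}_\mathrm{w}) & 0 \\ 0 & 0\end{bmatrix}$. Your write-up is merely more explicit about these steps; the only caveat, inherited from the paper rather than introduced by you, is that \eqref{eq:apx:P_V_Schur_complement} should read $M_\mathrm{w} - \mu^2 M_\mathrm{w}^\mathrm{T} K_\mathrm{w}^{-1} M_\mathrm{w}$ for it to be the actual Schur complement of $P_\mathrm{V}$ with respect to the pivot $K_\mathrm{w}$.
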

\begin{proof}
    By inspecting the expressions for $P_\mathrm{V} \succ 0$ and $H_\mathrm{V}(\tilde{x}_\mathrm{w}) \succ 0 \: \forall \tilde{x}_\mathrm{w}\in \mathbb{R}^n$ in Equations \eqref{eq:conw_lyapunov_function} and \eqref{eq:apx:V_mu_hessian}, respectively, it can be easily seen that $H_\mathrm{V}(\tilde{x}_\mathrm{w}) \succeq P_\mathrm{V} \: \forall \tilde{x}_\mathrm{w}\in \mathbb{R}$. As Lemma~\ref{lemma:apx:P_V_Schur_complement_positive_definite} states that the Schur complement of $P_\mathrm{V}$ is positive definite, it follows that $H_\mathrm{V}(\tilde{x}_\mathrm{w}) \succeq P_\mathrm{V} \succ 0$.
\end{proof}

\begin{Lemma}\label{lemma:apx:V_tanh_term_lower_term}
    Suppose $\bar{x}_{\mathrm{w}}, \tilde{x}_{\mathrm{w}}, b \in \mathbb{R}^n$ and $n \in \mathbb{N}^+$. Then,
    \begin{equation}
        h_{\mathrm{V,th}}(\tilde{x}_{\mathrm{w}}) = \sum_{i=1}^n \int_{0}^{\tilde{x}_{\mathrm{w},i}} \tanh(\bar{x}_{\mathrm{w},i}+\sigma+b_i) \, \mathrm{d} \sigma - \sum_{i=1}^n \int_{0}^{\tilde{x}_{\mathrm{w},i}} \tanh(\bar{x}_{\mathrm{w},i}+b_i) \, \mathrm{d} \sigma,
    \end{equation}
    is a positive semi-definite function.
\end{Lemma}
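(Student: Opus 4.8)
The plan is to exploit the separable structure of $h_{\mathrm{V,th}}$: it is a sum over $i$ of the scalar functions $\phi_i(s) = \int_0^s (\tanh(\bar{x}_{\mathrm{w},i}+\sigma+b_i) - \tanh(\bar{x}_{\mathrm{w},i}+b_i))\,\mathrm{d}\sigma$, so it suffices to show $\phi_i(s) \geq 0$ for every $s \in \mathbb{R}$ and every $i$, together with the obvious fact $\phi_i(0)=0$. Summing over $i$ then gives $h_{\mathrm{V,th}}(\tilde{x}_{\mathrm{w}}) \geq 0$ with $h_{\mathrm{V,th}}(0)=0$, which is exactly positive semi-definiteness.

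For a fixed index $i$, write $c = \bar{x}_{\mathrm{w},i}+b_i \in \mathbb{R}$. I would first record the closed form $\phi_i(s) = \mathrm{lcosh}(c+s) - \mathrm{lcosh}(c) - \tanh(c)\,s$, using $\tfrac{\mathrm{d}}{\mathrm{d}\xi}\mathrm{lcosh}(\xi) = \tanh(\xi)$ (this is already implicit in the second line of \eqref{eq:conw_lyapunov_function}). In this form $\phi_i(s)$ is precisely the first-order Taylor remainder of $\mathrm{lcosh}$ at the base point $c$, i.e.\ a Bregman-type divergence. Since $\mathrm{lcosh}''(\xi) = \mathrm{sech}^2(\xi) > 0$, the map $\mathrm{lcosh}$ is strictly convex and therefore lies above each of its tangent lines, which yields $\phi_i(s) \geq 0$ at once (and in fact strictly for $s \neq 0$).

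Equivalently — and this is the route I would actually write out, since it avoids invoking any convexity machinery — I would argue directly from monotonicity of $\tanh$. Because $\tanh$ is increasing, the integrand $\tanh(c+\sigma) - \tanh(c)$ has the same sign as $\sigma$. For $s \geq 0$ the integrand is non-negative on $[0,s]$, so $\phi_i(s) \geq 0$. For $s < 0$, rewrite $\phi_i(s) = -\int_s^0 (\tanh(c+\sigma)-\tanh(c))\,\mathrm{d}\sigma$; on $[s,0]$ we have $\sigma \leq 0$, so the integrand is non-positive, its integral over $[s,0]$ is non-positive, and the outer minus sign makes $\phi_i(s) \geq 0$. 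This disposes of all $s \in \mathbb{R}$.

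There is no genuine obstacle here: the statement is essentially a reformulation of the convexity of $\mathrm{lcosh}$ (equivalently, the monotonicity of $\tanh$), and the only thing requiring care is the sign bookkeeping in the $s<0$ case, plus noting that the shift constant $c$ is arbitrary and plays no role since both arguments are uniform in $c$. One could additionally remark that the inequality is strict for $s \neq 0$, so that $h_{\mathrm{V,th}}$ is in fact positive definite; however, positive semi-definiteness is all that the lemma claims and all that is needed in the subsequent Lyapunov analysis.
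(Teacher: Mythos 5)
Your proof is correct and follows essentially the same route as the paper's: both reduce $h_{\mathrm{V,th}}$ to the per-coordinate scalar function and then exploit the monotonicity of $\tanh$ (equivalently, the convexity of $\mathrm{lcosh}$) to conclude the integral is nonnegative with minimum value $0$ at the origin. Your direct sign-of-the-integrand argument is a slightly more elementary phrasing of the paper's critical-point/convexity analysis, but the underlying idea is identical.
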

\begin{proof}
    Proving $h_{\mathrm{V},\mathrm{th}}(\tilde{x}_\mathrm{w}) \geq 0$ is equivalent to showing that the scalar function $\breve{h}_{\mathrm{V},\mathrm{th}}(r) = \int_{0}^r \tanh(\sigma + a) \, \mathrm{d}\sigma - \int_{0}^r \tanh(a) \, \mathrm{d}\sigma \geq 0 \: \: \forall \: r, a \in \mathbb{R}$, where we set $r = \tilde{x}_{\mathrm{w},i}$ and $a = \bar{x}_{\mathrm{w},i} + b_i$.
    
    We strive to find the critical points (i.e., minimas and maximas) $\bar{r}$ of $\breve{h}_\mathrm{V,th}(r)$ and, for this, analyze where the first derivative of $\breve{h}_\mathrm{V,th}(r)$ is zero
    \begin{equation}
        \frac{\partial \breve{h}_\mathrm{V,th}}{\partial r}(\bar{r}) = \tanh(\bar{r}+a) - \tanh(a) = 0,
    \end{equation}
    which is the case only for $\bar{r}=0$. Next, we compute the second derivative at $\bar{r}$ as
    \begin{equation}
        \frac{\partial \breve{h}_\mathrm{V,th}}{\partial r}(\bar{r}) = \mathrm{sech}^2(\bar{r}) = 1.
    \end{equation}
    Thus, $\breve{h}_\mathrm{V,th}(r)$ is convex and its global minimum at $\bar{r}=0$ takes the value $h_\mathrm{V,th}(0) = 0$.
    As a result, $h_{\mathrm{V},\mathrm{th}}(\tilde{x}_\mathrm{w})$ is also positive semi-definite.
\end{proof}

\subsubsection{Proof of Lemma~\ref{lemma:conw_lyapunov_candidate}:}
\textbf{Lemma~\ref{lemma:conw_lyapunov_candidate} restated.} 
\textit{The scalar function $V_\mu(\tilde{y}_\mathrm{w})$ defined in \eqref{eq:conw_lyapunov_function} is continuously differentiable and verifies the condition $V_\mu(0) = 0$. Furthermore, let $M_\mathrm{w}, K_\mathrm{w} \succ 0$. Now, if we choose $0 < \mu < \frac{\sqrt{\lambda_\mathrm{m}(M_\mathrm{w}) \, \lambda_\mathrm{m}\left(K_\mathrm{w}\right)}}{\lVert M_\mathrm{w} \rVert} \coloneqq \mu_\mathrm{V}$, then $V_\mu(\tilde{y}_\mathrm{w}) > 0 \: \forall \: \tilde{y}_\mathrm{w} \in \mathbb{R}^{2n} \setminus \{0 \}$. Additionally, then $V_\mu(\tilde{y}_\mathrm{w})$ is radially unbounded as $\lVert \tilde{y}_\mathrm{w} \rVert \rightarrow \infty \Rightarrow V_\mu(\tilde{y}_\mathrm{w}) \rightarrow \infty$.
}
\begin{proof}
    \textbf{Step 1:} 
    It can be easily seen that $V_\mu(\tilde{y}_\mathrm{w})$ in \eqref{eq:conw_lyapunov_function} is smooth and continuously differentiable.
    
    \textbf{Step 2:} Proof that $V_\mu(0) = 0$.
    \begin{equation}
        V_\mu(0) = 0 + \sum_{i=1}^n \int_{0}^{0} \tanh(\bar{y}_{\mathrm{w},i}+\sigma+b_i) \, \mathrm{d} \sigma - \sum_{i=1}^n \int_{0}^{0} \tanh(\bar{y}_{\mathrm{w},i}+b_i) \, \mathrm{d} \sigma = 0.
    \end{equation}
    \textbf{Step 3:} Proof that the Lyapunov candidate is positive definite; i.e., $V_\mu(\tilde{y}_\mathrm{w}) > 0 \: \: \forall \tilde{y}_\mathrm{w} \in \mathbb{R}^n \setminus \{0 \}$.
    
    As the gradient of the Lyapunov candidate, as defined in \eqref{eq:apx:V_mu_gradient}, is zero for $\tilde{y}_\mathrm{w} = 0$: \begin{equation}
        \frac{\partial V_\mu}{\partial \tilde{y}_\mathrm{w}}(0) = \begin{bmatrix}
            \tanh(\bar{x}_\mathrm{w} + b)\\ 
            0^{n}
        \end{bmatrix} - \begin{bmatrix}
            \tanh(\bar{x}_\mathrm{w} + b)\\ 
            0^{n}
        \end{bmatrix} = 0,
    \end{equation}
    $\tilde{y}_\mathrm{w} = 0$ is a critical point of $V_\mu(\tilde{y}_\mathrm{w})$. 
    According to Lemma~\ref{lemma:apx:P_V_positive_definite}, the Hessian in \eqref{eq:apx:V_mu_hessian} is positive-definite~\cite{boyd2004convex}: $H_\mathrm{V}(\tilde{y}_\mathrm{w}) \succ 0 \: \: \forall \tilde{y}_\mathrm{w} \in \mathbb{R}^{2n}$.
    With that, \eqref{eq:conw_lyapunov_function} is convex and its global minimum is at $\tilde{y}_\mathrm{w} = 0$, where $V_\mu(0) = 0$. In summary, we state $V_\mu(\tilde{y}_\mathrm{w}) > 0 \: \: \forall \tilde{y}_\mathrm{w} \in \mathbb{R}^n \setminus \{0 \}$.

    \textbf{Step 4:} Proof that the Lyapunov candidate is radially unbounded: i.e., $\lVert \tilde{y}_\mathrm{w} \rVert \rightarrow \infty \Rightarrow V_\mu(\tilde{y}_\mathrm{w}) \rightarrow \infty$. Lemma~\ref{lemma:apx:V_tanh_term_lower_term} is exploited for identifying a lower bound on $V_\mu(\tilde{y}_\mathrm{w})$:
    \begin{equation}
    \begin{split}
        V_\mu(\tilde{y}_\mathrm{w}) =& \: \frac{1}{2} \, \tilde{y}_\mathrm{w}^\mathrm{T} \, P_\mathrm{V} \, \tilde{y}_\mathrm{w} + \sum_{i=1}^n \int_{0}^{\tilde{x}_{\mathrm{w},i}} \tanh(\bar{x}_{\mathrm{w},i}+\sigma+b_i) \, \mathrm{d} \sigma - \sum_{i=1}^n \int_{0}^{\tilde{x}_{\mathrm{w},i}} \tanh(\bar{x}_{\mathrm{w},i}+b_i) \, \mathrm{d} \sigma,\\
        \geq& \: \frac{1}{2} \, \tilde{y}_\mathrm{w}^\mathrm{T} \, P_\mathrm{V} \, \tilde{y}_\mathrm{w} \: \geq \: \frac{1}{2} \, \lambda_\mathrm{m}(P_\mathrm{V}) \, \lVert \tilde{y}_\mathrm{w} \rVert^2.
    \end{split}
    \end{equation}
    Lemma~\ref{lemma:apx:P_V_positive_definite} tells us that $P_\mathrm{V} \succ 0$ and with that $\lambda_\mathrm{m}(P_\mathrm{V}) > 0$. Therefore, if $\lVert \tilde{y}_\mathrm{w} \rVert \rightarrow \infty$, it also follows that $V_\mu(\tilde{y}_\mathrm{w}) \rightarrow \infty$.
\end{proof}


\subsection{Proof of Theorem~\ref{theorem:global_asymptotic_stability}: Global asymptotic stability of unforced network}
The Lemmas introduced below are used to prove Theorem~\ref{theorem:global_asymptotic_stability}.

\begin{Lemma}\label{lemma:apx:V_d_tanh_term_lower_bound}
    Suppose $\bar{x}_\mathrm{w}, \tilde{x}_\mathrm{w}, b \in \mathbb{R}^n$ and $n \in \mathbb{N}^+$. Then, the function $h_{\dot{\mathrm{V}},\mathrm{th}}(\tilde{x}_\mathrm{w})$ defined as
    \begin{equation}
        h_{\dot{\mathrm{V}},\mathrm{th}}(\tilde{x}_\mathrm{w}) = \left ( \tanh(\bar{x}_\mathrm{w} + \tilde{x}_\mathrm{w} + b) - \tanh(\bar{x}_\mathrm{w} + b) \right )^\mathrm{T} \, \tilde{x}_\mathrm{w},
    \end{equation}
    is positive semi-definite.
\end{Lemma}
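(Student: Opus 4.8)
The plan is to exploit the component-wise structure of the quadratic-like form. Writing out the inner product, $h_{\dot{\mathrm{V}},\mathrm{th}}(\tilde{x}_\mathrm{w}) = \sum_{i=1}^n \left( \tanh(a_i + \tilde{x}_{\mathrm{w},i}) - \tanh(a_i) \right) \tilde{x}_{\mathrm{w},i}$, where I abbreviate $a_i = \bar{x}_{\mathrm{w},i} + b_i$. Since the full sum is nonnegative as soon as each summand is nonnegative, it suffices to show that the scalar map $\breve{h}(r) = \left( \tanh(a + r) - \tanh(a) \right) r \geq 0$ for all $r, a \in \mathbb{R}$.

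The key analytic fact is the strict monotonicity of $\tanh$, equivalently $\mathrm{sech}^2 > 0$ everywhere. First I would handle it via the mean value theorem: there exists $\xi$ between $a$ and $a+r$ with $\tanh(a+r) - \tanh(a) = \mathrm{sech}^2(\xi)\, r$, so that $\breve{h}(r) = \mathrm{sech}^2(\xi)\, r^2 \geq 0$. (Equivalently, and matching the style of the nearby Lemma~\ref{lemma:apx:V_tanh_term_lower_term}, one could note $\breve{h}'(r) = \tanh(a+r) - \tanh(a)$ vanishes only at $r = 0$, where $\breve{h}(0) = 0$, and $\breve{h}''(0) = \mathrm{sech}^2(a) > 0$, so $\breve{h}$ attains a strict global minimum value $0$ at $r=0$ — but the MVT argument is the cleanest.) Summing over $i$ with $r = \tilde{x}_{\mathrm{w},i}$, $a = \bar{x}_{\mathrm{w},i} + b_i$ yields $h_{\dot{\mathrm{V}},\mathrm{th}}(\tilde{x}_\mathrm{w}) \geq 0$ for all $\tilde{x}_\mathrm{w} \in \mathbb{R}^n$, and $h_{\dot{\mathrm{V}},\mathrm{th}}(0) = 0$, which is exactly positive semi-definiteness.

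There is no real obstacle here; the lemma is elementary once one reduces to the scalar case. The only thing to be careful about is not to over-claim: the function is merely positive semi-definite, not positive definite, because $h_{\dot{\mathrm{V}},\mathrm{th}}$ vanishes on the whole set where $\tilde{x}_\mathrm{w} = 0$ regardless of $\dot{\tilde{x}}_\mathrm{w}$ (indeed it does not depend on velocities at all), and the bounded range of $\tanh$ means the increments $\tanh(a_i+\tilde{x}_{\mathrm{w},i}) - \tanh(a_i)$ stay bounded — so no quadratic lower bound in $\lVert\tilde{x}_\mathrm{w}\rVert$ is available, which is consistent with how the lemma is later used (only to drop a nonnegative term in the estimate of $\dot{V}_\mu$).
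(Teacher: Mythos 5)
Your proof is correct and follows essentially the same route as the paper: both reduce the claim component-wise to showing the scalar inequality $\bigl(\tanh(r+a)-\tanh(a)\bigr)\,r \geq 0$ for all $r,a\in\mathbb{R}$. The only difference is in the final verification — you invoke the mean value theorem (equivalently, strict monotonicity of $\tanh$) to get $\mathrm{sech}^2(\xi)\,r^2\geq 0$, whereas the paper expands $\tanh$ in exponentials and checks the sign of the resulting fraction explicitly; your version is arguably cleaner but proves the same fact.
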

\begin{proof}
    Proving $h_{\dot{\mathrm{V}},\mathrm{th}}(\tilde{x}_\mathrm{w}) \geq 0$ is equivalent to proving that the scalar function $\breve{h}_{\dot{\mathrm{V}},\mathrm{th}}(r) = \left ( \tanh(r + a) - \tanh(a) \right ) \, r \geq 0 \: \: \forall \: r, a \in \mathbb{R}$, where we set $r = \tilde{x}_{\mathrm{w},i}$ and $a = \bar{x}_{\mathrm{w},i} + b_i$.
    Now, we expand the hyperbolic tangent:
    \begin{equation}
    \begin{split}
        \breve{h}_{\dot{\mathrm{V}},\mathrm{th}}(r) =& \: \left ( \tanh(r + a) - \tanh(a) \right ) \, r = \left ( \frac{e^{2(r+a)}-1}{e^{2(r+a)}+1} - \frac{e^{2a}-1}{e^{2a}+1} \right ) \, r,\\
        =& \: \frac{2 \, e^{2a} \, \left ( e^{2r} - 1 \right )}{e^{2r+4a}+e^{2r + 2a}+e^{2a}+1} r \, \geq 0,
    \end{split}
    \end{equation}
    as the denominator $e^{2r+4a}+e^{2r + 2a}+e^{2a}+1 > 0 \:\: \forall \: r \in \mathbb{R}$ and as $\mathrm{sign} \left (2 \, e^{2a} \, \left ( e^{2r} - 1 \right ) \right) = \mathrm{sign}(r)$. For example, $e^{2r} - 1 \geq 0 \: \forall \: r \geq 0$. Analog, $e^{2r} - 1 < 0 \: \forall \: r < 0$. 
\end{proof}

\begin{Lemma}\label{lemma:apx:P_V_d_positive_definite}
    Let $M_\mathrm{w} \succ 0$, $K_\mathrm{w} \succ 0$, and $D_\mathrm{w} \succ 0$. Also, let $\mu \in \mathbb{R}^+$ be chosen such that $0 < \mu < \frac{\lambda_\mathrm{m}(D_\mathrm{w})}{\lambda_\mathrm{m}(M_\mathrm{w}) + \frac{\lVert D_\mathrm{w} \rVert^2}{4 \lambda_\mathrm{m} (K_\mathrm{w})}} \coloneqq \mu_{\dot{\mathrm{V}}}$.
    Then, the matrix $P_{\dot{V}} = \begin{bmatrix}
        \mu K_\mathrm{w} & \frac{1}{2} \mu D_\mathrm{w}\\
        \frac{1}{2} \mu D_\mathrm{w}^\mathrm{T} & D_\mathrm{w} - \mu M_\mathrm{w}
    \end{bmatrix} \in \mathbb{R}^{n}$ is positive definite.
\end{Lemma}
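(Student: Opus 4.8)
The plan is to reduce the $2n \times 2n$ block inequality to an $n \times n$ one via the Schur complement, exactly as was done for $P_\mathrm{V}$ in Lemma~\ref{lemma:apx:P_V_Schur_complement_positive_definite}, and then to lower-bound the relevant minimal eigenvalue by a quantity that is affine in $\mu$ and strictly positive on the interval $(0, \mu_{\dot{\mathrm{V}}})$.

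First I would record that $P_{\dot{\mathrm{V}}}$ is symmetric (since $K_\mathrm{w}, D_\mathrm{w}, M_\mathrm{w}$ are, being positive definite) and that its upper-left block $\mu K_\mathrm{w}$ is positive definite, because $\mu > 0$ and $K_\mathrm{w} \succ 0$. Consequently, $P_{\dot{\mathrm{V}}} \succ 0$ holds if and only if the Schur complement of $\mu K_\mathrm{w}$ is positive definite, i.e.
\begin{equation}
    S_{P_{\dot{\mathrm{V}}}} = (D_\mathrm{w} - \mu M_\mathrm{w}) - \tfrac{1}{4}\mu^2 \, D_\mathrm{w}^\mathrm{T} (\mu K_\mathrm{w})^{-1} D_\mathrm{w} = D_\mathrm{w} - \mu M_\mathrm{w} - \tfrac{\mu}{4} \, D_\mathrm{w}^\mathrm{T} K_\mathrm{w}^{-1} D_\mathrm{w} \succ 0 .
\end{equation}

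Next I would estimate $\lambda_\mathrm{m}(S_{P_{\dot{\mathrm{V}}}})$ from below. Weyl's inequality for symmetric matrices gives $\lambda_\mathrm{m}(S_{P_{\dot{\mathrm{V}}}}) \geq \lambda_\mathrm{m}(D_\mathrm{w}) - \mu \, \lambda_\mathrm{M}(M_\mathrm{w}) - \tfrac{\mu}{4} \lVert D_\mathrm{w}^\mathrm{T} K_\mathrm{w}^{-1} D_\mathrm{w} \rVert$, and submultiplicativity of the induced norm together with $\lVert D_\mathrm{w}^\mathrm{T} \rVert = \lVert D_\mathrm{w} \rVert$ and $\lVert K_\mathrm{w}^{-1} \rVert = 1/\lambda_\mathrm{m}(K_\mathrm{w})$ (valid since $K_\mathrm{w} \succ 0$) yields $\lVert D_\mathrm{w}^\mathrm{T} K_\mathrm{w}^{-1} D_\mathrm{w} \rVert \leq \lVert D_\mathrm{w} \rVert^2 / \lambda_\mathrm{m}(K_\mathrm{w})$. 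Collecting terms, $\lambda_\mathrm{m}(S_{P_{\dot{\mathrm{V}}}}) \geq \lambda_\mathrm{m}(D_\mathrm{w}) - \mu \left( \lambda_\mathrm{M}(M_\mathrm{w}) + \lVert D_\mathrm{w} \rVert^2 / (4 \lambda_\mathrm{m}(K_\mathrm{w})) \right)$, whose right-hand side is strictly positive exactly when $\mu$ stays below the stated threshold $\mu_{\dot{\mathrm{V}}}$. A final application of the eigenvalue sensitivity (Weyl) theorem then gives $S_{P_{\dot{\mathrm{V}}}} \succ 0$, and the Schur-complement criterion delivers $P_{\dot{\mathrm{V}}} \succ 0$.

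The one point that needs care is the cross term $D_\mathrm{w}^\mathrm{T} K_\mathrm{w}^{-1} D_\mathrm{w}$: I would note it is symmetric positive semidefinite, so that its spectral norm coincides with its largest eigenvalue and the crude submultiplicative bound genuinely controls the eigenvalue shift caused by subtracting $\tfrac{\mu}{4} D_\mathrm{w}^\mathrm{T} K_\mathrm{w}^{-1} D_\mathrm{w}$. Everything else, namely the symmetry of $P_{\dot{\mathrm{V}}}$, positivity of $\mu K_\mathrm{w}$, and reassembling the Schur criterion, is routine and structurally identical to the argument already given for $P_\mathrm{V}$ in Lemma~\ref{lemma:apx:P_V_Schur_complement_positive_definite} and Lemma~\ref{lemma:apx:P_V_positive_definite}.
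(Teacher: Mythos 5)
Your overall route is the same as the paper's: take the Schur complement of the $\mu K_\mathrm{w}$ block, obtain $S_{P_{\dot{\mathrm{V}}}} = D_\mathrm{w} - \mu M_\mathrm{w} - \tfrac{\mu}{4} D_\mathrm{w}^\mathrm{T} K_\mathrm{w}^{-1} D_\mathrm{w}$, lower-bound its smallest eigenvalue using $\lVert D_\mathrm{w}^\mathrm{T} K_\mathrm{w}^{-1} D_\mathrm{w}\rVert \le \lVert D_\mathrm{w}\rVert^2/\lambda_\mathrm{m}(K_\mathrm{w})$, and read off a linear-in-$\mu$ positivity condition. The Schur reduction, the symmetry observations, and the norm bound on the cross term are all fine.

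The gap is in the last identification. Your (correct) Weyl step gives
\begin{equation}
\lambda_\mathrm{m}(S_{P_{\dot{\mathrm{V}}}}) \;\ge\; \lambda_\mathrm{m}(D_\mathrm{w}) - \mu\left(\lambda_\mathrm{M}(M_\mathrm{w}) + \frac{\lVert D_\mathrm{w}\rVert^2}{4\,\lambda_\mathrm{m}(K_\mathrm{w})}\right),
\end{equation}
whose right-hand side is positive for $\mu < \lambda_\mathrm{m}(D_\mathrm{w})\big/\bigl(\lambda_\mathrm{M}(M_\mathrm{w}) + \lVert D_\mathrm{w}\rVert^2/(4\lambda_\mathrm{m}(K_\mathrm{w}))\bigr)$ --- a threshold that is in general \emph{strictly smaller} than the lemma's $\mu_{\dot{\mathrm{V}}}$, which has $\lambda_\mathrm{m}(M_\mathrm{w})$ in the denominator. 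So your claim that the bound is positive ``exactly when $\mu$ stays below the stated threshold $\mu_{\dot{\mathrm{V}}}$'' does not follow from your own inequality; as written you have only proved positive definiteness on the smaller interval, not for all $\mu \in (0,\mu_{\dot{\mathrm{V}}})$. You cannot simply swap $\lambda_\mathrm{M}(M_\mathrm{w})$ for $\lambda_\mathrm{m}(M_\mathrm{w})$: Weyl gives $\lambda_\mathrm{m}(D_\mathrm{w}-\mu M_\mathrm{w}) \ge \lambda_\mathrm{m}(D_\mathrm{w}) - \mu\lambda_\mathrm{M}(M_\mathrm{w})$, and the version with $\lambda_\mathrm{m}(M_\mathrm{w})$ is false in general (take $D_\mathrm{w}=I$ and $M_\mathrm{w}=\mathrm{diag}(\varepsilon, L)$ with $L$ large). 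It is worth noting that the paper's own proof writes $\lambda_\mathrm{m}(M_\mathrm{w})$ at precisely this step, so the stated constant $\mu_{\dot{\mathrm{V}}}$ inherits that unjustified bound; your tighter inequality effectively exposes this. The discrepancy is harmless for the downstream stability theorems (one only needs \emph{some} admissible $\mu>0$, and the corrected, smaller threshold is still strictly positive), but to prove the lemma with the constant as stated you would need a different argument, and to be self-consistent you should either state the conclusion with the $\lambda_\mathrm{M}(M_\mathrm{w})$ threshold or flag the constant in the lemma as needing correction.
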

\begin{proof}
    The Schur complement of $P_{\dot{V}}$ is given by
    \begin{equation}
        S_{P_{\dot{V}}} = D_\mathrm{w} - \mu M_\mathrm{w} - \frac{1}{4} \, \mu D_\mathrm{w}^\mathrm{T} \, K_\mathrm{w}^{-1} \, D_\mathrm{w}.
    \end{equation}
    The lower bound on the smallest Eigenvalue of $S_{P_{\dot{V}}}$ can be identified as
    \begin{equation}
        \lambda_\mathrm{m} \left (S_{P_{\dot{V}}} \right ) \geq \lambda_\mathrm{m}(D_\mathrm{w}) - \mu \, \lambda_\mathrm{m}(M_\mathrm{w}) - \mu \frac{\lVert D_\mathrm{w} \rVert^2}{4 \, \lambda_\mathrm{m}(K_\mathrm{w})}.
    \end{equation}
    As $K_\mathrm{w}, D_\mathrm{w} \succ 0$, we know that $\frac{\lVert D_\mathrm{w} \rVert^2}{\lambda_\mathrm{m}(K_\mathrm{w})} > 0$. Therefore, the case $\mu = \frac{\lambda_\mathrm{m}(D_\mathrm{w})}{\lambda_\mathrm{m}(M_\mathrm{w}) + \frac{\lVert D_\mathrm{w} \rVert^2}{4 \lambda_\mathrm{m} (K_\mathrm{w})}} \coloneqq \mu_{\dot{\mathrm{V}}}$ determines the lower bound on $\lambda_\mathrm{m}(S_{P_{\dot{V}}})$:
    \begin{equation}
    \begin{split}
        \lambda_\mathrm{m} \left (S_{P_{\dot{V}}} \right ) > \lambda_\mathrm{m}(D_\mathrm{w}) - \mu_{\dot{\mathrm{V}}} \left ( \lambda_\mathrm{m}(M_\mathrm{w}) - \frac{\lVert D_\mathrm{w} \rVert^2}{4 \, \lambda_\mathrm{m}(K_\mathrm{w})} \right ) = 0.
    \end{split}
    \end{equation}
    We conclude, based on the Eigenvalue sensitivity theorem of symmetric matrices~\cite{golub2013matrix}, that $S_{P_{\dot{V}}} \succ 0$ and with that $P_{\dot{V}} \succ 0$~\cite{boyd2004convex}.
\end{proof}


\subsection{Proof of Theorem~\ref{theorem:iss}: Proof of Input-to-State Stability (ISS)}\label{apx:sub:proof_of_iss}

\begin{Lemma}\label{lemma:apx:V_tanh_term_upper_bound}
    Let $\bar{x}_{\mathrm{w}}, \tilde{x}_{\mathrm{w}}, b \in \mathbb{R}^n$.
    Then,
    \begin{equation}
        h_{\mathrm{V},th}(\tilde{x}_{\mathrm{w}}) = \sum_{i=1}^n \int_{0}^{\tilde{x}_{\mathrm{w},i}} \tanh(\bar{x}_{\mathrm{w},i}+\sigma+b_i) \, \mathrm{d} \sigma - \sum_{i=1}^n \int_{0}^{\tilde{x}_{\mathrm{w},i}} \tanh(\bar{x}_{\mathrm{w},i}+b_i) \, \mathrm{d} \sigma \leq 2 \, \left | \tilde{x}_{\mathrm{w}} \right |.
    \end{equation}
\end{Lemma}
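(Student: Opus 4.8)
The plan is to reduce this multivariate bound to a one-dimensional statement, exactly as in the proof of Lemma~\ref{lemma:apx:V_tanh_term_lower_term}. The function splits over coordinates, $h_{\mathrm{V},th}(\tilde{x}_\mathrm{w}) = \sum_{i=1}^n \breve{h}(\tilde{x}_{\mathrm{w},i};\, \bar{x}_{\mathrm{w},i} + b_i)$, where the scalar primitive is $\breve{h}(r;a) = \int_0^r \tanh(\sigma+a)\,\mathrm{d}\sigma - \int_0^r \tanh(a)\,\mathrm{d}\sigma$. Hence it suffices to prove $\breve{h}(r;a) \le 2\,\lvert r\rvert$ for all $r,a \in \mathbb{R}$ and then sum over $i$, reading the right-hand side $2\,\lvert \tilde{x}_\mathrm{w}\rvert$ as $2\sum_{i=1}^n \lvert \tilde{x}_{\mathrm{w},i}\rvert$ (the $\ell_1$-norm; the subsequent conversion to $\lVert \cdot \rVert_2$ is what produces the $\sqrt{n}$ factor appearing in Theorem~\ref{theorem:iss}).

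For the scalar inequality I would write $\breve{h}(r;a) = \int_0^r \bigl(\tanh(\sigma+a) - \tanh(a)\bigr)\,\mathrm{d}\sigma$ and invoke the elementary bound $\lvert \tanh \rvert \le 1$: the integrand is bounded in magnitude by $\lvert \tanh(\sigma+a)\rvert + \lvert \tanh(a)\rvert \le 2$, so $\lvert \breve{h}(r;a)\rvert \le 2\,\lvert r\rvert$ and in particular $\breve{h}(r;a) \le 2\,\lvert r\rvert$. Bounding the magnitude of the integral directly avoids any case split on the sign of $r$, unlike the lower-bound argument of Lemma~\ref{lemma:apx:V_tanh_term_lower_term}.

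There is essentially no obstacle here: the only points requiring a little care are fixing the meaning of $\lvert \tilde{x}_\mathrm{w}\rvert$ and observing that the $n$ scalar inequalities, each of the form ``scalar $\le$ nonnegative scalar'', may be added term by term. If a tighter constant were desired one could instead use that $\mathrm{lcosh}$ is $1$-Lipschitz, giving $\mathrm{lcosh}(r+a) - \mathrm{lcosh}(a) \le \lvert r\rvert$, together with $\lvert \tanh(a)\,r\rvert \le \lvert r\rvert$; but the crude factor $2$ is all that is needed downstream.
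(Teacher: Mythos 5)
Your proof is correct and follows essentially the same route as the paper: both arguments reduce to the scalar function $\breve{h}(r;a)$ and exploit the bound $\lvert\tanh(r+a)-\tanh(a)\rvert\le 2$ on its derivative (the paper phrases this as a slope comparison with $2\lvert r\rvert$ from the common value $0$ at $r=0$, while you integrate the bound directly, which is marginally cleaner). Your reading of $2\,\lvert\tilde{x}_\mathrm{w}\rvert$ as the componentwise sum $2\,\lVert\tilde{x}_\mathrm{w}\rVert_1$ is also consistent with how the lemma is invoked in Lemma~\ref{lemma:iss_lyapunov_bounds}.
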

\begin{proof}
    Proving $h_{\mathrm{V},th}(\tilde{x}_{\mathrm{w}}) \leq 2 \, \left | \tilde{x}_{\mathrm{w}} \right |$ is equivalent to proving that the scalar function $\breve{h}_{\mathrm{V},\mathrm{th}}(r) = \int_0^{r} \tanh(\sigma + a) \: \mathrm{d}\sigma - \int_0^{r} \tanh(a) \: \mathrm{d}\sigma  \leq 2 \, |r| \: \: \forall \: r, a \in \mathbb{R}$, where we set $r = \tilde{x}_{\mathrm{w},i}$ and $a = \bar{x}_{\mathrm{w},i} + b_i$.
    We perform the integration contained in $\breve{h}_{\mathrm{V},\mathrm{th}}(r)$:
    \begin{equation}
    \begin{split}
        \breve{h}_{\mathrm{V},\mathrm{th}}(r) =& \: \int_0^{r} \tanh(\sigma + a) \: \mathrm{d}\sigma - \int_0^{r} \tanh(a) \: \mathrm{d}\sigma,\\
        \breve{h}_{\mathrm{V},\mathrm{th}}(r) =& \: \log(\cosh(r+a)) - \log(\cosh(a)) - \tanh(a) \, r.
    \end{split}
    \end{equation}
    Next, we demonstrate that the slope of $2 \, |r|$ is always larger than the magnitude of the slope of $\breve{h}_{\mathrm{V},\mathrm{th}}(r)$:
    \begin{equation}
    \begin{split}
        \left | \frac{\partial \breve{h}_{\mathrm{V},\mathrm{th}}}{\partial r} \right | =& \: \left | \tanh(r+a) - \tanh(a) \right | < 2 = \frac{\partial}{\partial r} \, (2 \, |r|).\\
    \end{split}
    \end{equation}
    Additionally, $\breve{h}_{\mathrm{V},\mathrm{th}}(0) = 2 \, |0| = 0$. We conclude that $\breve{h}_{\mathrm{V},\mathrm{th}}(r) \leq 2 |r| \: \forall \: r \in \mathbb{R}$ and with that, $h_{\mathrm{V},th}(\tilde{x}_{\mathrm{w}}) \leq 2 \, \left | \tilde{x}_{\mathrm{w}} \right | \: \forall \: \tilde{x}_{\mathrm{w}} \in \mathbb{R}^n$.
\end{proof}

\begin{Lemma}\label{lemma:iss_lyapunov_bounds}
    Let $M_\mathrm{w} \succ 0$ and $K_\mathrm{w} \succ 0$. Then, \eqref{eq:conw_lyapunov_function} is bounded by the two scalar, class $\mathcal{K}_\infty$ functions $\alpha_1(r)=\frac{1}{2} \, \lambda_\mathrm{m}(P_\mathrm{V})  \, r^2$ and $\alpha_2(r)=\frac{1}{2} \, \lambda_\mathrm{M}(P_\mathrm{V}) \, r^2 + 2 \, \sqrt{n} \, r$: $ \alpha_1(\lVert \tilde{y}_\mathrm{w} \rVert_2^2) \leq V_\mu(\tilde{y}_\mathrm{w}) \leq \alpha_2(\lVert \tilde{y}_\mathrm{w} \rVert)_2^2$.
\end{Lemma}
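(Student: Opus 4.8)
The plan is to decompose $V_\mu$ into its quadratic part and its hyperbolic-integral part, bound each separately, and then verify that the resulting scalar functions are class $\mathcal{K}_\infty$. From \eqref{eq:conw_lyapunov_function} we have $V_\mu(\tilde y_\mathrm{w}) = \frac{1}{2}\,\tilde y_\mathrm{w}^\mathrm{T} P_\mathrm{V}\,\tilde y_\mathrm{w} + h_{\mathrm{V,th}}(\tilde x_\mathrm{w})$, where $h_{\mathrm{V,th}}$ is exactly the function analysed in Lemmas~\ref{lemma:apx:V_tanh_term_lower_term} and~\ref{lemma:apx:V_tanh_term_upper_bound}. First I would note that, with $M_\mathrm{w},K_\mathrm{w}\succ 0$ and under the standing assumption $0<\mu<\mu_\mathrm{V}$, Lemma~\ref{lemma:apx:P_V_positive_definite} gives $P_\mathrm{V}\succ 0$, so the Rayleigh--Ritz inequalities yield $\frac{1}{2}\lambda_\mathrm{m}(P_\mathrm{V})\,\lVert\tilde y_\mathrm{w}\rVert_2^2 \le \frac{1}{2}\,\tilde y_\mathrm{w}^\mathrm{T} P_\mathrm{V}\,\tilde y_\mathrm{w} \le \frac{1}{2}\lambda_\mathrm{M}(P_\mathrm{V})\,\lVert\tilde y_\mathrm{w}\rVert_2^2$ with $\lambda_\mathrm{m}(P_\mathrm{V})>0$.

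For the lower bound I would apply Lemma~\ref{lemma:apx:V_tanh_term_lower_term}, which gives $h_{\mathrm{V,th}}(\tilde x_\mathrm{w})\ge 0$, hence $V_\mu(\tilde y_\mathrm{w})\ge \frac{1}{2}\lambda_\mathrm{m}(P_\mathrm{V})\,\lVert\tilde y_\mathrm{w}\rVert_2^2 = \alpha_1(\lVert\tilde y_\mathrm{w}\rVert_2)$. For the upper bound I would use Lemma~\ref{lemma:apx:V_tanh_term_upper_bound} to bound $h_{\mathrm{V,th}}(\tilde x_\mathrm{w})$ by $2\,\lVert\tilde x_\mathrm{w}\rVert_1$, and then use the norm equivalence $\lVert\tilde x_\mathrm{w}\rVert_1\le\sqrt n\,\lVert\tilde x_\mathrm{w}\rVert_2$ together with $\lVert\tilde x_\mathrm{w}\rVert_2\le\lVert\tilde y_\mathrm{w}\rVert_2$ (as $\tilde x_\mathrm{w}$ is the top block of $\tilde y_\mathrm{w}$) to get $h_{\mathrm{V,th}}(\tilde x_\mathrm{w})\le 2\sqrt n\,\lVert\tilde y_\mathrm{w}\rVert_2$, so that $V_\mu(\tilde y_\mathrm{w})\le \frac{1}{2}\lambda_\mathrm{M}(P_\mathrm{V})\,\lVert\tilde y_\mathrm{w}\rVert_2^2 + 2\sqrt n\,\lVert\tilde y_\mathrm{w}\rVert_2 = \alpha_2(\lVert\tilde y_\mathrm{w}\rVert_2)$.

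It then remains to check that $\alpha_1(r)=\frac{1}{2}\lambda_\mathrm{m}(P_\mathrm{V})\,r^2$ and $\alpha_2(r)=\frac{1}{2}\lambda_\mathrm{M}(P_\mathrm{V})\,r^2+2\sqrt n\,r$ are indeed class $\mathcal{K}_\infty$: both are continuous, vanish at $r=0$, are strictly increasing on $[0,\infty)$ (their derivatives $\lambda_\mathrm{m}(P_\mathrm{V})\,r$ and $\lambda_\mathrm{M}(P_\mathrm{V})\,r+2\sqrt n$ are positive for $r>0$, since $\lambda_\mathrm{M}(P_\mathrm{V})\ge\lambda_\mathrm{m}(P_\mathrm{V})>0$), and are unbounded as $r\to\infty$. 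The only points needing care are that the lower bound genuinely relies on $P_\mathrm{V}\succ 0$ (hence the appeal to Lemma~\ref{lemma:apx:P_V_positive_definite} and the bound on $\mu$) and the conversion of the $\ell_1$ estimate of Lemma~\ref{lemma:apx:V_tanh_term_upper_bound} into an $\ell_2$ estimate in the full state $\tilde y_\mathrm{w}$; beyond that, everything is a routine Rayleigh--Ritz computation and I do not anticipate a genuine obstacle.
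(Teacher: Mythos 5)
Your proposal is correct and follows essentially the same route as the paper: both split $V_\mu$ into the quadratic form in $P_\mathrm{V}$ plus the hyperbolic-integral term $h_{\mathrm{V,th}}$, use Lemma~\ref{lemma:apx:V_tanh_term_lower_term} to drop that term for the lower bound, use Lemma~\ref{lemma:apx:V_tanh_term_upper_bound} with $\lVert\tilde{x}_\mathrm{w}\rVert_1\le\sqrt{n}\,\lVert\tilde{x}_\mathrm{w}\rVert_2\le\sqrt{n}\,\lVert\tilde{y}_\mathrm{w}\rVert_2$ for the upper bound, and then check the $\mathcal{K}_\infty$ properties via $\lambda_\mathrm{m}(P_\mathrm{V})>0$ from Lemma~\ref{lemma:apx:P_V_positive_definite}. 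Your explicit remark that the lower bound relies on the standing condition $0<\mu<\mu_\mathrm{V}$ (not listed among the lemma's hypotheses) is a fair and accurate observation, but otherwise there is nothing to add.
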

\begin{proof}
    With Lemma~\ref{lemma:conw_lyapunov_candidate}, we already showed that  $V_\mu(\tilde{y}_\mathrm{w})$ is a Lyapunov candidate. Now, we additionally also verify the conditions for ISS-Lyapunov candidates~\cite{khalil2002nonlinear}.

    \textbf{Step 1:} Establishing bounds on $V_\mu(\tilde{y}_\mathrm{w})$.\\
    We first identify the lower bound of $V_\mu(\tilde{y}_\mathrm{w})$ by leveraging Lemma~\ref{lemma:apx:V_tanh_term_lower_term}:
    \begin{equation}
    \begin{split}
        V_\mu(\tilde{y}_\mathrm{w}) =& \: \frac{1}{2} \, \tilde{y}_\mathrm{w}^\mathrm{T} \, P_\mathrm{V} \, \tilde{y}_\mathrm{w} + \sum_{i=1}^n \int_{0}^{\tilde{x}_{\mathrm{w},i}} \tanh(\bar{x}_{\mathrm{w},i}+\sigma+b_i) \, \mathrm{d} \sigma - \sum_{i=1}^n \int_{0}^{\tilde{x}_{\mathrm{w},i}} \tanh(\bar{x}_{\mathrm{w},i}+b_i) \, \mathrm{d} \sigma,\\
        =& \: \frac{1}{2} \, \tilde{y}_\mathrm{w}^\mathrm{T} \, P_\mathrm{V} \, \tilde{y}_\mathrm{w} + h_{\mathrm{V},\mathrm{th}}(\tilde{x}_\mathrm{w}),\\
        \geq& \: \frac{1}{2} \, \tilde{y}_\mathrm{w}^\mathrm{T} \, P_\mathrm{V} \, \tilde{y}_\mathrm{w}
        \: \geq \: \frac{1}{2} \, \lambda_\mathrm{m}(P_\mathrm{V}) \, \lVert \tilde{y}_\mathrm{w} \rVert_2^2
        \: = \: \alpha_1(\lVert \tilde{y}_\mathrm{w} \rVert_2).
    \end{split}
    \end{equation}
    Similarly, we derive an upper bound for $V_\mu(\tilde{y}_\mathrm{w})$ exploiting Lemma~\ref{lemma:apx:V_tanh_term_upper_bound}.
    \begin{equation}
    \begin{split}
        V_\mu(\tilde{y}_\mathrm{w}) =& \frac{1}{2} \, \tilde{y}_\mathrm{w}^\mathrm{T} \, P_\mathrm{V} \, \tilde{y}_\mathrm{w} + \sum_{i=1}^n \int_{0}^{\tilde{x}_{\mathrm{w},i}} \tanh(\bar{x}_{\mathrm{w},i}+\sigma+b_i) \, \mathrm{d} \sigma - \sum_{i=1}^n \int_{0}^{\tilde{x}_{\mathrm{w},i}} \tanh(\bar{x}_{\mathrm{w},i}+b_i) \, \mathrm{d} \sigma,\\
        \leq& \: \frac{1}{2} \, \lambda_\mathrm{M}(P_\mathrm{V}) \, \lVert \tilde{y}_\mathrm{w} \rVert_2^2 + 2 \, \lVert\tilde{x}_\mathrm{w}\rVert_1
        \: \leq \: \frac{1}{2} \, \lambda_\mathrm{M}(P_\mathrm{V}) \, \lVert \tilde{y}_\mathrm{w} \rVert_2^2 + 2 \, \sqrt{n} \, \lVert\tilde{x}_\mathrm{w}\rVert_2,\\
        \leq& \: \frac{1}{2} \, \lambda_\mathrm{M}(P_\mathrm{V}) \, \lVert \tilde{y}_\mathrm{w} \rVert_2^2 + 2 \, \sqrt{n} \, \lVert\tilde{y}_\mathrm{w}\rVert_2 = \alpha_2(\lVert\tilde{y}_\mathrm{w}\rVert_2).
    \end{split}
    \end{equation}

    \textbf{Step 2:} Proof that $\alpha_1(r), \alpha_2(r)$ belong to class $\mathcal{K}_\infty$.\\
    According to Lemma~\ref{lemma:apx:P_V_positive_definite}, $P_\mathrm{V} \succ 0$ and with that $\lambda_\mathrm{m}(P_\mathrm{V}) > 0$.
    First, we analyze the behavior of $\alpha_1(r)$: as it is strictly increasing and $\alpha_1(0) = 0$, it belongs to class $\mathcal{K}$. Furthermore, we can evaluate $\lim_{r \rightarrow \infty} \alpha_1(r) = \infty$. Therefore, $\alpha_1(r) \in \mathcal{K}_\infty$~\cite{khalil2002nonlinear}.
    $\alpha_2(r)$ is also strictly increasing for $r \in [0, \infty)$, $\alpha_2(0) = 0$, and it is radially unbounded as $\lim_{r \rightarrow \infty} \alpha_2(r) = \infty$. For that reason, $\alpha_2(r) \in \mathcal{K}_\infty$ as well.
\end{proof}

\subsubsection{Proof of Theorem~\ref{theorem:iss}}
\textbf{Theorem~\ref{theorem:iss} restated.}
\textit{Suppose $M_\mathrm{w}, K_\mathrm{w}, D_\mathrm{w} \succ 0$, $0 < \theta < 1$, and that we choose $0 < \mu < \min \{ \mu_{\mathrm{V}}, \mu_{\dot{\mathrm{V}}} \}\}$. Then,
    \eqref{eq:conw_residual_dynamics} is globally Input-to-State Stable (ISS)
    such that the solution $\tilde{y}_\mathrm{w}(t)$ verifies
    \begin{equation}\label{eq:apx:input_to_state_stability}
        \lVert \tilde{y}_\mathrm{w} \rVert_2 \leq \beta \left (\lVert \tilde{y}_\mathrm{w}(t_0) \rVert_2, t-t_0 \right ) + \gamma \left ( \sup_{t_0 \leq \tau \leq t} \lVert g(u(\tau)) \rVert_2 \right ), \qquad \forall \: t \geq t_0
    \end{equation}
    where $\beta(r, t) \in \mathcal{KL}$, $\gamma(r) = \sqrt{\frac{(1+\mu^2) \, \lambda_\mathrm{M}(P_\mathrm{V}) \, r^2 + 4 \, \theta \, \sqrt{n} \, \sqrt{1+\mu^2} \, \lambda_\mathrm{m}(P_{\dot{\mathrm{V}}}) \, r}{\theta^2 \, \lambda_\mathrm{m}(P_\mathrm{V}) \, \lambda_\mathrm{m}^2(P_{\dot{\mathrm{V}}})}} \in \mathcal{K}$.}
\begin{proof}
    \textbf{Step 1:} Bounds on ISS-Lyapunov candidate.\\
    Lemma~\ref{lemma:iss_lyapunov_bounds} provides the $\mathcal{K_\infty}$ functions 
    $\alpha_1(r)=\frac{1}{2} \, \lambda_\mathrm{m}(P_\mathrm{V})  \, r^2$ and $\alpha_2(r)=\frac{1}{2} \, \lambda_\mathrm{M}(P_\mathrm{V}) \, r^2 + 2 \, \sqrt{n} \, r$ such that $ \alpha_1(\lVert \tilde{y}_\mathrm{w} \rVert_2^2) \leq V_\mu(\tilde{y}_\mathrm{w}) \leq \alpha_2(\lVert \tilde{y}_\mathrm{w} \rVert)_2^2$.
    
    \textbf{Step 2:} Minimum energy dissipation.\\
    Let $0 < \mu < \min \{ \mu_{\mathrm{V}}, \mu_{\dot{\mathrm{V}}} \}\}$ as in the proof of Theorem~\ref{theorem:global_asymptotic_stability}.
    We compute the input-dependent time-derivative of the ISS Lyapunov candidate. We do not repeat the derivations already made as part of \eqref{eq:lyapunov_function_time_derivative} (e.g., exploiting Lemmas~\ref{lemma:apx:V_d_tanh_term_lower_bound} and \ref{lemma:apx:P_V_d_positive_definite}).
    \begin{equation}
    \begin{split}
        \dot{V}_\mu(\tilde{y}_\mathrm{w}, u(t)) =& \: 
        -\tilde{y}_\mathrm{w}^\mathrm{T} \, P_{\dot{\mathrm{V}}} \, \tilde{y}_\mathrm{w} - \mu \, \left ( \tanh(\bar{x}_\mathrm{w} + \tilde{x}_\mathrm{w} + b) - \tanh(\bar{x}_\mathrm{w} + b) \right )^\mathrm{T} \tilde{x}_\mathrm{w} + \tilde{y}_\mathrm{w}^\mathrm{T} \begin{bmatrix}
            \mu \, g(u(t))\\
            g(u(t))
        \end{bmatrix},\\
        \leq& \: -\lambda_\mathrm{m}\left(P_{\dot{\mathrm{V}}} \right) \, \lVert \tilde{y}_\mathrm{w} \rVert_2^2 + \left \lVert \tilde{y}_\mathrm{w}^\mathrm{T}\begin{bmatrix}\mu \, g(u(t))\\ g(u(t)) \end{bmatrix} \right \rVert_1,\\
        \leq& \: -\lambda_\mathrm{m}\left(P_{\dot{\mathrm{V}}} \right) \, \lVert \tilde{y}_\mathrm{w} \rVert_2^2 +  \lVert \tilde{y}_\mathrm{w} \rVert_2 \left \lVert \begin{bmatrix}\mu \, g(u(t))\\ g(u(t)) \end{bmatrix} \right \rVert_2,\\
        \leq& \: -\lambda_\mathrm{m}\left(P_{\dot{\mathrm{V}}} \right) \, \lVert \tilde{y}_\mathrm{w} \rVert_2^2 + \sqrt{1+\mu^2} \, \lVert \tilde{y}_\mathrm{w} \rVert_2 \lVert g(u(t)) \rVert_2,
    \end{split}
    \end{equation}
    where we leveraged Hölder's inequality. We choose $\theta$ such that $0 < \theta < 1$.
    As a consequence,
    \begin{equation}
        \dot{V}_\mu(\tilde{y}_\mathrm{w}, u(t)) \leq -(1-\theta) \, \lambda_\mathrm{m}\left(P_{\dot{\mathrm{V}}} \right) \, \lVert \tilde{y}_\mathrm{w} \rVert_2^2, 
        \qquad
        \forall \: \lVert \tilde{y}_\mathrm{w} \rVert_2 \: \geq \: \frac{\sqrt{1+\mu^2}}{\theta \, \lambda_\mathrm{m}\left(P_{\dot{\mathrm{V}}} \right)} \, \lVert g(u(t)) \rVert_2  > 0.
    \end{equation}
    We define
    \begin{equation}
        \alpha_3(r) = (1-\theta) \, \lambda_\mathrm{m}\left(P_{\dot{\mathrm{V}}} \right) \, r^2,
        \qquad
        \text{and }
        \rho(r) = \frac{\sqrt{1+\mu^2}}{\theta \, \lambda_\mathrm{m}\left(P_{\dot{\mathrm{V}}} \right)} \, r.
    \end{equation}
    Lemma~\ref{lemma:apx:P_V_d_positive_definite} shows that $\lambda_\mathrm{m}\left(P_{\dot{\mathrm{V}}} \right) > 0$. Therefore, $\alpha_3(r)$ is a continuous positive function. Furthermore, as $\mu > 0$, $\rho(r)$ is a strictly increasing for $r \in [0, \infty)$. Additionally with $\rho(0) = 0$ verified, it can be stated that $\rho(r)$ belongs to class $\mathcal{K}$~\cite{khalil2002nonlinear}.
    We conclude that
    \begin{equation}
        \dot{V}_\mu(\tilde{y}_\mathrm{w}, u(t)) \leq - \alpha_3 \left (\lVert \tilde{y}_\mathrm{w} \rVert_2^2 \right ),
        \qquad
        \forall \: \lVert \tilde{y}_\mathrm{w} \rVert_2 \: \geq \: \rho \left ( \lVert g(u(t)) \rVert_2 \right )  > 0.
    \end{equation}

    \textbf{Step 3:} Conclusions.

    As a result of Steps 1 and 2, the system is input-to-state stable, and with that, the solution $\tilde{y}_\mathrm{t}$ satisfies~\cite{khalil2002nonlinear}
    \begin{equation}
        \lVert \tilde{y}_\mathrm{w} \rVert_2 \leq \beta \left (\lVert \tilde{y}_\mathrm{w}(t_0) \rVert_2, t-t_0 \right ) + \gamma \left ( \sup_{t_0 \leq t' \leq t} \lVert g(u(t')) \rVert_2 \right ),
    \end{equation}
    with
    \begin{equation}
        \gamma(r) = \alpha_1^{-1} \circ \alpha_2 \circ \rho(r) = \sqrt{\frac{(1+\mu^2) \, \lambda_\mathrm{M}(P_\mathrm{V}) \, r^2 + 4 \, \theta \, \sqrt{n} \, \sqrt{1+\mu^2} \, \lambda_\mathrm{m}(P_{\dot{\mathrm{V}}}) \, r}{\theta^2 \, \lambda_\mathrm{m}(P_\mathrm{V}) \, \lambda_\mathrm{m}^2(P_{\dot{\mathrm{V}}})}}.
    \end{equation}
    Indeed, based on Theorem~\ref{theorem:global_asymptotic_stability} and the associated proof, we can easily verify that $\gamma(r)$ is strictly increasing for $r \in [0, \infty)$ and that $\gamma(0) = 0$. As a consequence, $\gamma(r) \in \mathcal{K}$~\cite{khalil2002nonlinear}.
\end{proof}
\section{Appendix on an approximate closed-form solution for coupled oscillator networks}\label{apx:sec:con_cfa}

\begin{figure}[ht]
    \centering
    \subfigure[Parameters of a harmonic oscillator]{\includegraphics[width=0.40\columnwidth]{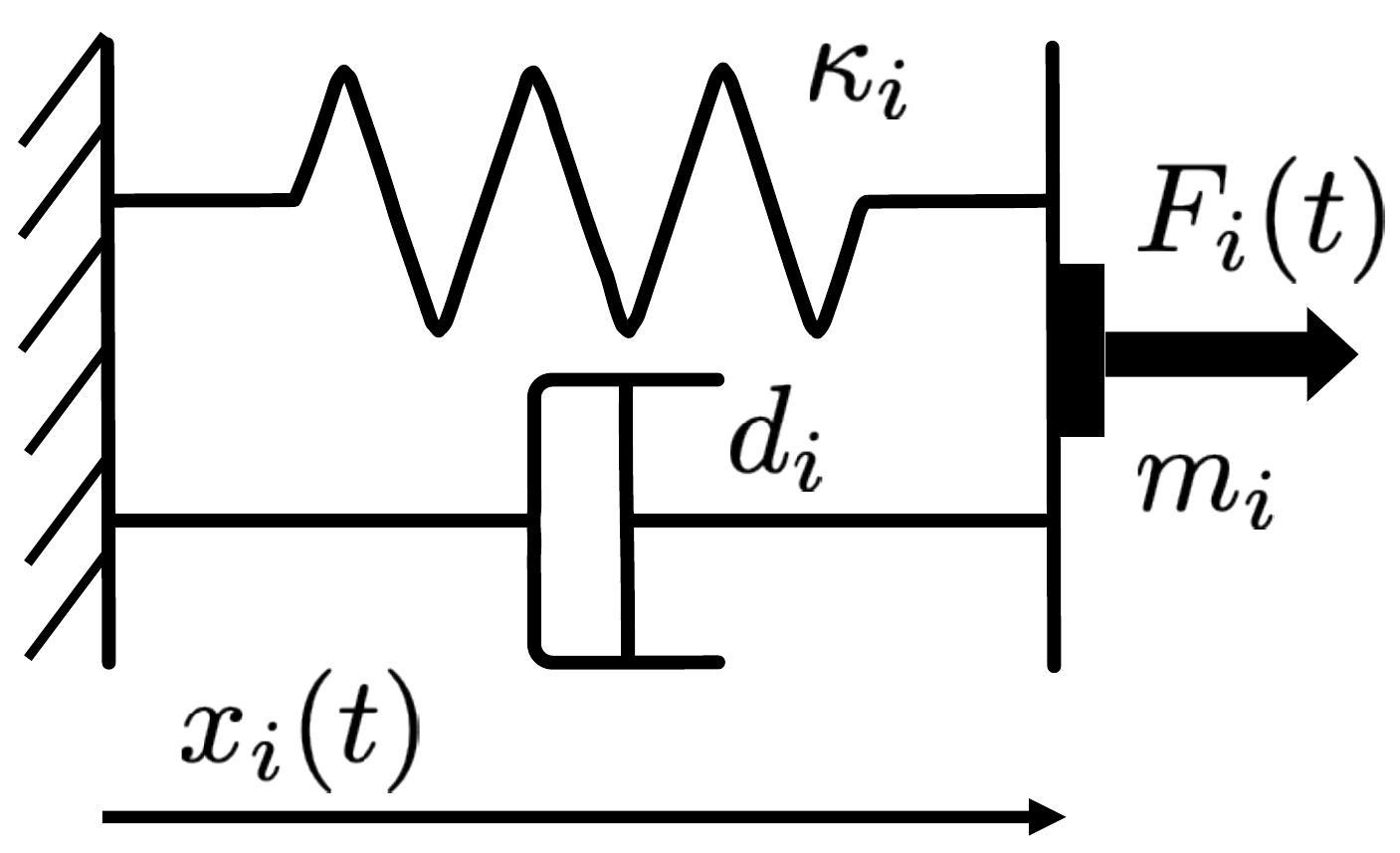}\label{fig:apx:harmonic_oscillator:parameters}}\\
    \subfigure[Time evolution of a harmonic oscillator]{\includegraphics[width=0.49\columnwidth, trim={0, 10, 10, 0}]{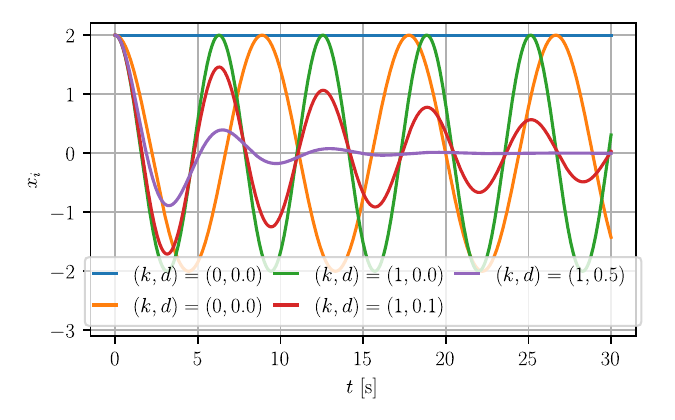}\label{fig:apx:harmonic_oscillator:time_evolution}}
    \subfigure[Damping regimes of a harmonic oscillator]{
      \includegraphics[width=0.49\columnwidth, trim={0, 10, 10, 0}]{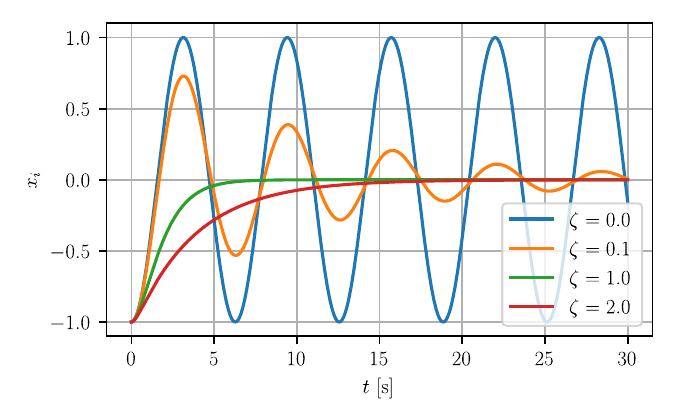}\label{fig:apx:harmonic_oscillator:damping_regimes}
    }
    \caption{\textbf{Panel \ref{fig:apx:harmonic_oscillator:parameters}}: Parameters of a forced and damped harmonic oscillator: $m_i \in \mathbb{R}^+$ denotes the mass, $\kappa_i \in \mathbb{R}^+$ the stiffness, and $k_i \in \mathbb{R}^+$ the damping coefficient. The position and velocity of the oscillator are measured as $x_i(t)\in \mathbb{R}$ and $\dot{x}_i(t)\in \mathbb{R}$, respectively. The oscillator can be excited by the (potentially time-varying) external forcing $F_i(t)\in \mathbb{R}$.
    \textbf{Panel \ref{fig:apx:harmonic_oscillator:time_evolution}:} Time evolution of a 1D harmonic oscillator for different values of $\kappa_i$, $d_i$, all in the undamped or underdamped regime.
    \textbf{Panel \ref{fig:apx:harmonic_oscillator:damping_regimes}:} The four damping regimes of a harmonic oscillator: undamped ($\zeta=0$), underdamped ($0 < \zeta < 1$), critically damped ($\zeta=1$), and overdamped ($zeta > 1$). 
    }
    \label{fig:apx:harmonic_oscillator}
\end{figure}

\subsection{Closed-form solution to a forced harmonic oscillator}
As introduced in \eqref{eq:harmonic_oscillator}, we consider the linear dynamics of a 1D forced harmonic oscillator with state $y_{i} = \begin{bmatrix}
    x_i & \dot{x}_i
\end{bmatrix} \in \mathbb{R}^2$
\begin{equation}
    \dot{y}_i = \begin{bmatrix}
        \frac{\mathrm{d} x_i}{\mathrm{d} t}\\
        \frac{\mathrm{d} \dot{x}_i}{\mathrm{d} t}
    \end{bmatrix} = f_{\mathrm{ld},i}(y_{i}, F_{i}) = \begin{bmatrix}
        \dot{x}_i\\
        F_i(t) - \kappa_i \, x_i(t) - d_i \, \dot{x}_i(t)
    \end{bmatrix},
\end{equation}
where $F_i(t) \in \mathbb{R}$ is the externally applied force acting on the oscillator.

The characteristic equation for the unforced dynamics (i.e., $F_i(t) = 0)$ can be stated as~\cite{Pas2023damped}
\begin{equation}
    \lambda^2 + 2 \, \zeta_i \, \omega_{\mathrm{n},i}  \, \lambda + \omega_{\mathrm{n},i}^2 = 0,
    \qquad
    \text{with the solutions} \:
    \lambda_{1,2} = -\zeta_{i} \,  \omega_{\mathrm{n},i} \pm \omega_{\mathrm{n},i} \, \sqrt{\zeta_{i}^2 - 1},
\end{equation}
 where $\omega_{\mathrm{n},i} = \sqrt{\kappa_i}$ and $\zeta_i = \frac{d_i}{2 \, \sqrt{\kappa_i}}$ are the natural frequency and the damping factor of the $i$th homogeneous oscillator, respectively.
 This harmonic oscillator exhibits three regimes: underdamped ($\zeta_i < 1$), critically damped ($\zeta_i = 1$), and overdamped regime ($\zeta_i > 1$).

 We approximate the forcing using the Heavyside function $H(t)$: $F_i(t) = F_{i}(t_k) \, H(t)$, where $F_{i}(t_k)$ is the constant external forcing as computed by \eqref{eq:con_approx}. The solution for $\zeta_i \neq 1$ is given by~\cite{Pas2023damped}
 \small
\begin{equation}\label{eq:apx:harmonic_oscillator_closed_form}
    y_i (t_{k+1}) 
    = \begin{bmatrix}
        x_i(t_{k+1})\\
        \dot{x}_i(t_{k+1})
    \end{bmatrix} 
    = \begin{bmatrix}
        \left ( c_{1,i} \, \cos(\beta_i \, \delta t) +  c_{2,i} \sin(\beta_i \, \delta t) \right ) \, e^{-\alpha_i \, \delta t} + \frac{F_{i}}{\kappa_i}\\
        -\left ( (c_{1,i} \alpha_i - c_{2,i} \beta_i) \, \cos(\beta_i \, \delta t) +  (c_{1,i} \beta_i + c_{2,i} \alpha_i) \, \sin(\beta_i \, \delta t) \right ) \, e^{-\alpha_i \, \delta t}
    \end{bmatrix},
\end{equation}
where $\delta t = t_{k+1} - t_k$, $\alpha_i = \zeta_i \, \omega_{\mathrm{n},i}$, and $\beta_i = \omega_{\mathrm{n},i} \, \sqrt{1-\zeta_i^2}$.
After enforcing the initial conditions $x_i(t_k), x_i(t_k)$, the integration constants
\begin{equation}
    c_{1,i} = x_i(t_k) - \frac{F_{i}(t_k)}{\kappa_i},
    \qquad
    c_{2,i} = - 2 \, j \, \frac{\dot{x}_i(t_k) + \alpha_i \, \left (x_i(t_k) - \frac{F_{i}}{k_i} \right )}{\Delta \lambda_i},
\end{equation}
can be identified with $\Delta \lambda_i = \lambda_{i,2} - \lambda_{i,1} = - 2 \, \beta_i \, j$, where $j$ is the imaginary value.
While we could derive the solution for the critically damped case $d_i < 2 \sqrt{\kappa_i}$ separately, we instead approximate it in our network dynamics with \eqref{eq:apx:harmonic_oscillator_closed_form} by setting
$\Delta \lambda_i \cong \begin{cases}
    \mathrm{sign}(-2 \, \beta_i \, j) \, \epsilon   & |2 \, \beta_i \, j| < \epsilon \\
    - 2 \, \beta_i \, j & |2 \, \beta_i \, j| \geq \epsilon
\end{cases}$, where $\epsilon \in \mathbb{R}^+ \ll 1$ is a small, positive value.

\subsection{Algorithmic implementation}
We can now leverage the closed form solution to the evolution of a single, decoupled damped harmonic oscillator of \eqref{eq:apx:harmonic_oscillator_closed_form} to solve the integral in \eqref{eq:cfa_con}
\begin{equation}\label{eq:apx:cfa:full_con_cfa_dynamics}
\begin{split}
    y(t_{k+1}) \approx& \: f_\mathrm{CFA-CON}(y(t_k), u(t_k)),\\
    \begin{bmatrix}
        x(t_{k+1})\\ \dot{x}(t_{k+1})
    \end{bmatrix} \approx& \: \begin{bmatrix}
        \left ( c_{1} \odot \cos(\beta \, \delta t) +  c_{2} \odot \sin(\beta \, \delta t) \right ) \odot e^{-\alpha \, \delta t} + \frac{F(t_k)}{\kappa}\\
        -\left ( (c_{1} \odot \alpha - c_{2} \odot \beta) \, \cos(\beta \, \delta t) +  (c_{1} \odot \beta + c_{2} \odot \alpha) \, \sin(\beta \, \delta t) \right ) \odot e^{-\alpha \, \delta t}
    \end{bmatrix},
\end{split}
\end{equation}
with
\begin{equation}
\begin{split}
    \kappa = \mathrm{diag}(K_{11} \dots K_{nn}), \ d = \mathrm(D_{11} \dots D_{nn}), \ \omega_\mathrm{n} = \sqrt{\kappa}, \ \zeta = \frac{d}{2 \sqrt{\kappa}}, \ \alpha = \zeta \odot \omega_\mathrm{n}, \ \beta = \omega_\mathrm{n} \: \sqrt{1-\zeta^2},\\
    F(t_k) = g(u(t_k)) - (K - \kappa) \, x(t_k) - (D-d) \, \dot{x}(t_k) - \tanh \left (Wx(t_k)+b \right ),\\
    c_{1} = x(t_k) - \frac{F(t_k)}{\kappa}, \qquad
    c_{2} = \frac{1}{\beta} \, \left ( \dot{x}(t_k) + \alpha \odot \left (x(t_k) - \frac{F(t_k)}{\kappa} \right ) \right ).
\end{split}
\end{equation}
We summarize the approach of integrating/rolling out the \ac{CFA-CON} dynamics in Algorithm~\ref{alg:apx:con_cfa}.

\begin{algorithm}
\caption{Rollout of \ac{CFA-CON}.}\label{alg:apx:con_cfa}
\hspace*{\algorithmicindent} \textbf{Inputs:} initial state $y(t_0)$, input sequence $\{u(t_0), \dots u(t_k), \dots u(t_N) \}$\\
\hspace*{\algorithmicindent} \textbf{Outputs:} state sequence $\{y(t_0), \dots y(t_k), \dots y(t_N) \}$
\begin{algorithmic}[1]
    \State $k \gets 0$
    \While{$k \leq N$}
        \State $(x(t_k), \dot{x}(t_k)) \gets y(t_k)$
        \State $F(t_k) \gets g(u(t_k)) - (K - \kappa) \, x(t_k) - (D-d) \, \dot{x}(t_k) - \tanh \left (Wx(t_k)+b \right )$
        \State $\omega_\mathrm{n}, \: \zeta \gets \sqrt{\kappa}, \: \frac{d}{2 \sqrt{\kappa}}$ \Comment{Compute the characteristics of the decoupled harmonic oscillators.}
        \State $\alpha, \: \beta  \gets \zeta \odot \omega_\mathrm{n}, \: \omega_\mathrm{n} \: \sqrt{1-\zeta^2}$
        \State $c_{1} \gets x(t_k) - \frac{F(t_k)}{\kappa}$  \Comment{Compute integration constants using initial conditions.}
        \State $c_{2} \gets \frac{1}{\beta} \, \left ( \dot{x}(t_k) + \alpha \odot \left (x(t_k) - \frac{F(t_k)}{\kappa} \right ) \right )$
        \State $\delta t = t_{k+1} - t_k$  \Comment{Set time step.}\\
        \Comment{Update state with approximated closed-form solution.}
        \State $x(t_{k+1}) \gets  \left ( c_{1} \odot \cos(\beta \, \delta t) +  c_{2} \odot \sin(\beta \, \delta t) \right ) \odot e^{-\alpha \, \delta t} + \frac{F}{\kappa}$
        \State $\dot{x}(t_{k+1}) \gets -\left ( (c_{1} \odot \alpha - c_{2} \odot \beta) \, \cos(\beta \, \delta t) +  (c_{1} \odot \beta + c_{2} \odot \alpha) \, \sin(\beta \, \delta t) \right ) \odot e^{-\alpha \, \delta t}$
        \State $k \gets k+1$  \Comment{Update time index.}
    \EndWhile
\end{algorithmic}
\end{algorithm}

\subsection{Approximation bounds for CFA-CON}
Lemma~\ref{lemma:apx:con_cfa_bounds_lemma} demonstrates how, for the particular case of no external input and linearly decoupled oscillators (which we are always free to choose), we can establish bounds on the approximation error when using the closed-form solution instead of the ground-truth coupled oscillator dynamics.

\begin{Lemma}\label{lemma:apx:con_cfa_bounds_lemma}
    Suppose that the network is unforced with $g(u(t)) = 0$ and that $K = \mathrm{diag}(\kappa_1, \dots, \kappa_n)$, $D = \mathrm{diag}(d_1, \dots, d_n)$ such that the oscillators are not linearly coupled. Then, given any $t \geq 0$, and the initial state $y(0) \in \mathbb{R}^{2n}$, the error between the continuous dynamics $\ddot{x}(t)$ of \eqref{eq:con_dynamics} and the approximated dynamics $\hat{\dot{x}}(t)$ in \eqref{eq:con_approx}, is bounded by $\lVert \ddot{x}(t) - \hat{\ddot{x}}(t) \rVert \leq 2$. 
\end{Lemma}
\begin{proof}
    \eqref{eq:con_dynamics}, \eqref{eq:con_approx} and $F =  -f_{\ddot{x}, \mathrm{nld}}(y(0), 0)$ give us
    \begin{equation}
    \begin{split}
        \lVert \ddot{x}(t) - \hat{\ddot{x}}(t) \rVert =& \left (f_{\ddot{x},\mathrm{ld}}(y(t), 0) + f_{\ddot{x},\mathrm{nld}}(y(t), 0) \right ) - f_{\ddot{x},\mathrm{ld}}(y(t), F),\\
        =& \: -K x(t) -D \dot{x} -\tanh(W x(t) + b) + K x(t) + D \dot{x} + \tanh(W x(0) + b),\\
        =& \: \lVert -\tanh(W x(t) + b) + \tanh(W x(0) + b) \rVert
        \: \leq \: 2.
    \end{split}
    \end{equation}
    
\end{proof}

\subsection{Empirical evaluation of approximation error}
In Table~\ref{tab:apx:cfa_evaluation}, we present a comparison of \ac{CFA-CON} with several other strategies for integrating nonlinear dynamics, such as \ac{CON}.
Following the implicit assumption made in Section~\ref{apx:sec:con_cfa}, we consider the case of $g(u) = 0$, $K = \mathrm{diag}(\kappa_1, \dots, \kappa_n)$ and $D = \mathrm{diag}(d_1, \dots, d_n)$ but with the hyperbolic coupling between the oscillators active (i.e., a full $W$ matrix).
Integrating the dynamics at a very small time step (i.e., $\delta t = \num{5e-5}~\si{s}$) with a high-order \ac{ODE} solver would give us a very accurate solution, but this is computationally infeasible in practice. We, therefore, regard this as the upper bound on the accuracy of the solution.
A feasible solution would be to implement either a high-order solver such as Tsit5 at a larger integration time-step, e.g., $\delta t = \num{1e-1}~\si{s}$) or a low-order solver with a slightly smaller integration time step, e.g., $\delta t = \num{5e-2}~\si{s}$). Therefore, we also benchmark these options.
We also benchmark an implementation specialized on the underdamped case (i.e., $\zeta_i <1$): \ac{CFA-UDCON}. This specialized implementation allows us to avoid using complex numbers in the algorithm and reduces the number of computations necessary for calculating the approximated solution.
As a result, we see a considerable increase in the sim-time to real-time factor.

\subsubsection{Integration error} 
We perform the integration error benchmark over $100$ different network configurations, all consisting of $50$ oscillators ($n=50$): First, we sample the natural frequency of the $i$th oscillator from a uniform distribution as $\omega_{\mathrm{n},i} \sim \mathcal{U}(\SI{0.05}{Hz}, \SI{0.5}{Hz})$, then we sample $\kappa_i \sim \mathcal{U}(\SI{0.2}{N\per m}, \SI{2}{N \per m})$ such that $K = \mathrm{diag}(\kappa_1, \dots, \kappa_n) \succ 0$, which lets us determine each mass $m_i = \frac{\kappa_i}{\omega_{\mathrm{n},i}^2}$.
Next, the damping ratio is determined as $\zeta_i \sim \mathcal{U}(0.1, 0.9)$ and $\zeta_i \sim \mathcal{U}(0.1, 2.0)$ for the underdamped and general case, respectively. As a result, $D = \mathrm{diag}(d_1, \dots, d_n) \succ 0$ with $d_i = 2 \, \zeta_i \, \sqrt{m_i \, \kappa_i}$ given.
Finally, by leveraging the Cholesky decomposition, we sample a $W \succ 0$  and $b_i \sim \mathcal{U}(-1, 1)$.
We compute the estimation error of all integrated trajectories with respect to the high-precision solution (i.e., Tsitouras' 5/4 method (Tsit5) at $\delta t = \num{5e-4}~\si{s}$).
For this, we compute the \ac{RMSE} for each \SI{60}{s} trajectory and then take the mean and standard deviation across the $100$ different network configurations.

\subsubsection{Simulation-time to real-time factor}
The simulation vs. real-time factor is computed as the simulated rollout duration per second of computational time. For this, we let each method simulate a \SI{60}{s} trajectory for $100$ times and record the minimum run time on an Intel Core i7-10870H CPU (single core) over $10$ trials. Because of computational constraints, we simulated with the high-precision Tsit5 solver the trajectory only $5$ times. 

\subsubsection{Results}
The results in Table~\ref{tab:apx:cfa_evaluation} show that \ac{CFA-CON} is \SI{30}{\percent} more accurate than the Euler integrator at half of the speed. Compared against the Tsit5 integrator, \ac{CFA-CON} exhibits a 1.56x speed increase while being significantly less accurate.
For the underdamped case with $\zeta < 1$, the specialized implementation \ac{CFA-UDCON} is \SI{14.8}{\percent} faster and at the same time \SI{32}{\percent} more accurate than the Euler integrator. Furthermore, \ac{CFA-UDCON} is 3.7x faster and significantly less accurate than the Tsit5 integrator.
We can conclude that in the pure rollout setting (i.e., no backpropagation involved) for a generic \ac{CON}, the \ac{CFA-CON} does not show clear advantages to an appropriately tuned Euler or Tsit5 solver. However, the specialized version \ac{CFA-UDCON} demonstrates a 2.4x speed-up at no reduction of accuracy vs. \ac{CFA-CON} for underdamped oscillator networks.

\begin{table}[ht]
    \centering
    \begin{scriptsize}
    \begin{tabular}{c c c c r }
         \toprule
         \textbf{Method} & \textbf{RMSE} [m] $\downarrow$ & \textbf{RMSE} $\zeta < 1$ [m] $\downarrow$ & \textbf{Complexity} $\downarrow$ & \thead{\textbf{$\frac{\text{Sim. time}}{\text{Real time}}$}} $\uparrow$ \\
         \toprule
         \makecell{\ac{CON} with Tsit5\\ at $\delta t = \num{5e-5}~\si{s}$} & n/a & n/a & $\mathcal{O} \left (\frac{n^{\log_2 7} p h}{\delta t} \right) = \mathcal{O}(\num{3.5e11})$ & $5.68$x\\
         \midrule
         \makecell{\ac{CON} with Tsit5\\ at $\delta t = \num{1e-1}~\si{s}$} & $\mathbf{\num{5e-5} \pm \num{1e-5}}$ & $\mathbf{\num{8e-6} \pm \num{1e-5}}$ & $\mathcal{O} \left (n \frac{n^{\log_2 7} p h}{\delta t} \right) = \mathcal{O}(\num{1.8e8})$ & $11310$x\\
         \midrule
         \makecell{\ac{CON} with Euler\\ at $\delta t = \num{5e-2}~\si{s}$} & $\num{0.010} \pm \num{0.003}$ & $\num{0.022} \pm \num{0.005}$ & $\mathcal{O} \left (\frac{n^{\log_2 7} h}{\delta t} \right) = \mathcal{O}(\num{7.1e7})$ & $36500$x\\
         \midrule
         \makecell{\ac{CFA-CON} (our)\\ with $\delta t = \num{1e-1}~\si{s}$} & $\num{0.007} \pm \num{0.002}$ & $\num{0.015} \pm \num{0.003}$ & $\mathcal{O} \left (\frac{n^{\log_2 7} h}{\delta t} \right) = \mathcal{O}(\bf{\num{3.5e7}})$ & $17680$x\\
         \midrule
         \makecell{\ac{CFA-UDCON} (our)\\ with $\delta t = \num{1e-1}~\si{s}$} & n/a & $\num{0.015} \pm \num{0.003}$ & $\mathcal{O} \left (\frac{n^{\log_2 7} h}{\delta t} \right) = \mathcal{O}(\bf{\num{3.5e7}})$ & $\mathbf{41900}$\textbf{x}\\
         \bottomrule
    \end{tabular}
    \end{scriptsize}
    \vspace{0.5cm}
    \caption{Benchmarking of various methods for integrating the \ac{CON} dynamics.
    The \ac{RMSE} is computed with respect to the Tsitouras' 5/4 method (Tsit5) (i.e., extremely high accuracy but also extremely high computational complexity). 
    We denote with $n$ the number of oscillators in the network (in this case $n=50$), with $p$ the order of the numerical \ac{ODE} solver, and with $\delta t$ the time-step.
    When stating the complexity, we refer to $h = t_N-t_0$ as the rollout horizon in seconds. In this case, we report the results for a horizon of $h = \SI{60}{s}$.
    The \ac{RMSE} column states the \ac{RMSE} of the various integration strategies with respect to the \emph{CON with Tsit5 at $\delta t = \num{5e-5}$s} solution, which we consider to be the ground-truth.
    The \emph{RMSE $\zeta < 1$} computes the same metrics, but this time for a dataset that contains only underdamped oscillators.
    The \emph{$\frac{\text{Sim. time}}{\text{Real time}}$} column states the ratio between the duration of the simulation achieved (in seconds) per second of real-time (i.e., computational time).
    We report the mean and standard deviation of the \ac{RMSE} over $100$ different network configurations.
    }
    \label{tab:apx:cfa_evaluation}
\end{table}
\section{Appendix on experimental setup and datasets}\label{apx:sec:experimental_setup}


\subsection{Datasets}\label{apx:sub:datasets}

For all datasets, we generate images of size $32 \times 32 \mathrm{px}$ and subsequently normalize the pixels to the interval $[-1, 1]$.

\subsubsection{Unactuated mechanical datasets}
We consider multiple mechanical datasets based on a standard implementation included in the \emph{Toy Physics} category of the \emph{NeurIPS 2021 Track on Datasets and Benchmarks} publication by Botev et al.~\cite{botev2021priors}:  mass-spring with friction (M-SP+F), a single pendulum with friction (S-P+F), and a double pendulum with friction (D-P+F).
All datasets contain $5000$ system trajectories in the training set and $1000$ trajectories each in the validation and test set.
Each trajectory is generated by first randomly initializing the system, then rolling it out for $\SI{3}{s}$ using an Euler integrator with a time step size of $\SI{5}{ms}$. Samples are recorded at a rate of \SI{20}{Hz}  (i.e., a time step of \SI{0.05}{s}). As a result, each trajectory contains $60$ images of the system's state.
As all of these datasets are unactuated, we can deactivate the input-to-forcing mapping component from all models (e.g., set $g(u) = 0$ for the \ac{CON} model).

The \emph{M-SP+F} dataset contains motion samples of a damped harmonic oscillator with a mass of $\SI{0.5}{kg}$, a spring stiffness of \SI{2}{N \per m}, and a damping coefficient of \SI{0.05}{Ns \per m}. For each trajectory, the initial condition of the mass-spring is randomly sampled by combining a random $\mathrm{sign}(q)$ with a uniformly sampled $|q| \sim \mathcal{U}(\SI{0.1}{m}, \SI{1}{m})$. The position of the mass is rendered with a filled circle in a grayscale image.

The \emph{SP+F} and \emph{DP+F} datasets include the evolutions of a single-link pendulum and double-link pendulum, respectively, with a mass of \SI{0.5}{kg} attached to the end of each link, which has a length of \SI{1}{m}. The dataset considers a gravitational acceleration of $\SI{3}{m \per s^2}$. A rotational damper with coefficient \SI{0.05}{Nms \per rad} provides the friction. 
Similarly to the \emph{M-SP+F} dataset, both the sign and the absolute value of the initial configuration are randomly sampled, where $|q(0)| \sim \mathcal{U}(\SI{1.3}{rad}, \SI{2.3}{rad})$.
The position of each mass is rendered with a filled circle. For the single-link pendulum, this is done in grayscale, and for the double pendulum, each mass is rendered with a different color (i.e., blue and red).

\subsubsection{Actuated continuum soft robot datasets}\label{apx:ssub:soft_robot_dataset}
The shape of slender and deformable rods can be approximated by considering the deformations along the 1D curve of the backbone~\cite{gazzola2018forward}. While this curve is still infinite-dimensional, it is possible to discretize the backbone into (many) segments with piecewise constant strain~\cite{renda2016discrete, gazzola2018forward}.
Accordingly, we describe the kinematics of a planar continuum soft robot consisting of $n_\mathrm{b}$ segments with the \ac{PCS} model~\cite{renda2016discrete}. We assume each segment has a length of \SI{100}{mm} and a diameter of \SI{20}{mm}.
The \ac{PCS} model assumes each segment to have constant strain. In the planar case, this means that the shape of the $i$th segment can be parametrized by $\xi_i = \begin{bmatrix}
    \kappa_{\mathrm{be},i} & \sigma_{\mathrm{sh},i} & \sigma_{\mathrm{ax},i}
\end{bmatrix}^\mathrm{T} \in \mathbb{R}^{3}$ where $\kappa_{\mathrm{be},i}$ is the bending strain (i.e., the curvature) in the unit \si{rad \per \meter}, $\sigma_{\mathrm{sh},i}$ is the shear strain  (dimensionless), and $\sigma_{\mathrm{ax},i}$ is the axial elongation strain (dimensionless).
The robot's configuration is then defined as $q = \begin{bmatrix}
    \xi_1^\mathrm{T} & \cdots & \xi_i^\mathrm{T} & \cdots & \xi_{n_\mathrm{b}}^\mathrm{T}
\end{bmatrix}^\mathrm{T}$.
In the case of \ac{PCC}, only the bending strain is active as shear strains and axial strains are neglected, and the configuration is now $q \in \mathbb{R}^{n_\mathrm{b}}$.
The \ac{PCS} model generates \ac{EOM} in the form of~\cite{della2023model}
\begin{equation}
    B(q) \, \ddot{q} + C(q,\dot{q}) \, \dot{q} + G(q) + K_\mathrm{q} \, q + D_\mathrm{q} \, \dot{q} = u(t),
\end{equation}
where $B(q) \succ 0$ and $C(q,\dot{q})$ are the inertia and Corioli matrices, respectively. $G(q)$ collects the gravitational forces, $K_\mathrm{q} \succ 0$ is the stiffness matrix, and $D_\mathrm{q} \succ 0$ contains the damping coefficients. 
$u(t) \in \mathbb{R}^{n_\mathrm{b}}$ is an external force acting on the generalized coordinates, and now $m = n_\mathrm{b}$.


We derive the corresponding dynamics for a continuum soft robot of material density $\SI{600}{kg \per \meter^3}$, elastic modulus of \SI{20000}{Pa}, shear modulus of \SI{10000}{Pa}, and damping coefficients of \SI{0.00001}{Nm^2s} for bending strains, \SI{0.01}{Ns} for shear strains, and \SI{0.01}{Ns} for axial strains, respectively.
Gravity is pointing downwards.
The implementation of the dynamics in JAX~\cite{jax2018github} is based on the \emph{JSRM} library~\cite{stolzle2024experimental, stolzle2024guiding}, and we simulate the robot using a constant integration time step of \SI{0.1}{ms}.
We render grayscale images of the robot with a size of $32 \times 32 \mathrm{px}$ at a rate of \SI{50}{Hz} using OpenCV~\cite{opencv_library}. 
We generate $10000$ trajectories, each of duration \SI{2.0}{s} and a sampling time-step of $\SI{0.02}{s}$. We use \SI{60}{\percent} training, \SI{20}{\percent} validation, and \SI{20}{\percent} test split.
For each trajectory, we randomly sample a constant actuation/input $u \sim \mathcal{U}(-u_\mathrm{max}, u_\mathrm{max})$. We choose the maximum actuation magnitude to be equal to the sum of the contribution of the potential forces (i.e., elastic and gravitational forces): $u_\mathrm{max} = G(q_\mathrm{max}) + K \, q_\mathrm{max}$ with $q_{\mathrm{max},i} = \begin{bmatrix}
    5 \, \pi~\si{rad \per m}, 0.2, 0.2
\end{bmatrix}^\mathrm{T}$.

We generate three datasets based on this continuum soft robot model: in the \emph{CS} dataset, we consider one segment with all three planar strains active (i.e., bending, shear, and elongation). This results in three \ac{DOF} and six-state variables in the dynamical model. In the case of the \emph{PCC-NS-2} and \emph{PCC-NS-3} datasets, we base the dataset on a simulated system consisting of two planar bending segments, respectively. Each segment is parametrized using \ac{CC}~\cite{webster2010design, rosi2022sensing}, which results in two configuration variables and a state dimension of four.

\subsubsection{Unactuated PDE reaction-diffusion dataset}\label{apx:ssub:reaction_diffusion_dataset}
 We consider the \nth{1}-order Reaction-diffusion (\emph{R-D}) \ac{PDE} on which \cite{champion2019data} evaluated their SINDy Autoencoder on. The \ac{PDE} of the high-dimensional lambda-omega reaction-diffusion system is defined as
\begin{equation}
\begin{split}
    \frac{\partial u}{\partial t} =& \: \left ( 1 - (u^2 + v^2) \right ) u + \beta \, (u^2 + v^2) \, v + d_1 \left ( \frac{\partial^2 u}{\partial q_1^2} + \frac{\partial^2 u}{\partial q_2^2} \right ),\\
    \frac{\partial v}{\partial t} =& \: -\beta (u^2 + v^2) \, u + (1 - (u^2 + v^2)) \, v + d_2 \left ( \frac{\partial^2 v}{\partial q_1^2} + \frac{\partial^2 v}{\partial q_2^2} \right ),
\end{split}
\end{equation}
where $u(t,q): \mathbb{R} \times \mathbb{R}^2 \to \mathbb{R}$ and $v(t,q): \mathbb{R} \times \mathbb{R}^2 \to \mathbb{R}$ are time-dependent two vector fields defined over the spatial domain $q \in \mathbb{R}^2$.
We choose the same system parameters and initial condition as Champion et al.~\cite{champion2019data}: $d_1, d_2 = 0.1$, and $\beta = 1$ and 
\begin{equation}
\begin{split}
    u(0,q) = \tanh \left ( \sqrt{q_1^2 + q_2^2} \, \cos \left ( \angle (q_1 + i q_2) - \sqrt{q_1^2 + q_2^2} \right ) \right ),\\
    v(0,q) = \tanh \left ( \sqrt{q_1^2 + q_2^2} \, \sin \left ( \angle (q_1 + i q_2) - \sqrt{q_1^2 + q_2^2} \right ) \right ).
\end{split}
\end{equation}

After discretizing the spatial domain into $32$ points along each dimension, we solve the \ac{PDE} with a MATLAB ODE45 solver the solution of $u(t,q)$ and $v(t,q)$ at each time step and grid point.
Subsequently, the solution is multiplied with a Gaussian centered at the origin~\cite{champion2019data}
\begin{equation}
\begin{split}
    \bar{u}(t,q) =& \: \exp(-0.01 \, (q_1^2 + q_2^2)) \, \bar{u}(t,q),\\
    \bar{v}(t,q) =& \: \exp(-0.01 \, (q_1^2 + q_2^2)) \, \bar{v}(t,q).
\end{split}
\end{equation}
We integrate the system from the specified initial condition for \SI{500}{s} and store samples at a time step of \SI{0.05}{s}. We divide the entire sequence into $99$ subsequences each containing $101$ samples. We train the models to predict these subsequences that have a horizon of $\SI{5.0}{s}$ each.

We stack the solution of $\bar{u}(t, q)$ and $\bar{v}(t,q)$ contained in the two grids $o_\mathrm{u}(t), o_\mathrm{v}(t) \in \mathbb{R}^{32 \times 32}$, respectively, to gather the images $o(t) \in \mathbb{R}^{32 \times 32 \times 2}$ containing two channels.
A sample sequence of the generated images is presented in Apx.~\ref{fig:apx:latent_dynamics:sequence_of_stills:r_d:rollout2}.
We use \SI{60}{\percent} of the subsequences (i.e., $59$) as our training set, and employ \SI{20}{\percent} (i.e., $19$) for the validation and test sets, respectively. 

\subsection{Autoencoder architecture}
For the encoder and decoder, we rely on a vanilla \acp{CNN} implemented as a $\beta$-\ac{VAE}~\cite{kingma2014auto}.

\textbf{Encoder.} The encoder consists of two convolutional layers with kernel size $(3, 3)$ and stride $(1, 1)$ mapping to $16$, $32$, respectively.
The features are flattened and then passed to two linear layers with hidden dimension $256$ and $n_z$.
Each layer (except for the last) is followed by a layer norm~\cite{ba2016layer} and a LeakyReLU nonlinearity.

\textbf{Decoder.} The decoder first uses two linear layers to map to hidden dimensions of $256$ and $32768$, respectively.
We then apply two 2D transposed convolutions~\cite{dumoulin2016guide} reducing the number of channels first to $16$, and then to $1$.
Each layer (except for the last linear and last convolutional) is followed by a layer norm~\cite{ba2016layer} and a LeakyReLU nonlinearity.
Finally, we apply a sigmoid function to clip the output into the range $[-1, 1]$.

\subsection{Latent dynamic models}\label{apx:sub:latent_dynamic_models}
In the following section, we provide implementation details for the latent dynamic models that we evaluated as part of this work.

\subsubsection{Coupled Oscillator Network (CON)}\label{apx:ssub:latent_space_dynamic_models:con}
We leverage the \ac{CON} in $\mathcal{W}$-coordinates given by \eqref{eq:conw_dynamics} for learning latent space dynamics. 
Specifically, we consider the input-to-force mapping $g(u)) = B(u) \, u(t)$, where $B(u) \in \mathbb{R}^{n \times m}$ is parametrized by few-layer \ac{MLP}. We report results for two different sizes of the \ac{MLP}: one medium-sized variant consisting of five layers with a hidden dimension of $30$ and a small variant with two layers and a hidden dimension of $12$. In both cases, we use a hyperbolic tangent as a nonlinearity.


When training the model, we jointly optimize $M_\mathrm{w}^{-1}, K_\mathrm{w}, D_\mathrm{w}, b$ and $g(u)$. 
However, we also need to make sure that we adhere to the stability constraints  $M_\mathrm{w}^{-1}, K_\mathrm{w}, D_\mathrm{w} \succ 0$. For this, we leverage the Cholvesky decomposition~\cite{petersen2008matrix}. Instead of directly learning the full matrix $A \in \mathbb{R}^{n_z \times n_z}$, we designate the elements of an upper triangular matrix $U \in \mathbb{R}^{n_z \times n_z}$ as the trainable parameters.
The Cholesky decomposition demands that $\mathrm{diag}(U_{11}, \dots, U_{n_z n_z}) > 0$. Therefore, we apply the operation
\begin{equation}
    U_{ii} = \log \left (1+ e^{\breve{U}_{ii} + \epsilon_1} \right ) + \epsilon_2,
\end{equation}
where $\breve{U}$ is the learned upper triangular matrix, and $\epsilon_1 = \num{1e-6}$ and $\epsilon_2 = \num{2e-6}$ are two small, positive values.
The positive-definite matrix $A$ is now given by $A = U^\mathrm{T} \, U \succ 0$.

\subsubsection{Neural ODEs}
We consider two kinds of Neural ODEs~\cite{chen2018neural}: the vanilla $f_\mathrm{NODE}: \xi(t) \times u(t) \mapsto \dot{\xi}(t)$ maps latent state and system actuation directly into a time derivative of the latent state. In contrast, for the \emph{MECH-NODE}, we enforce the latent dynamics to have a mechanical structure
\begin{equation}
    \dot{\xi}(t) = \begin{bmatrix}
        \frac{\mathrm{d} z}{\mathrm{d}t}\\
        \frac{\mathrm{d} \dot{z}}{\mathrm{d}t}
    \end{bmatrix} = \begin{bmatrix}
        \dot{z}(t)\\
        f_\mathrm{MECH-NODE}(\xi(t), u(t))
    \end{bmatrix}.
\end{equation}
We represent both $f_\mathrm{NODE}$ and $f_\mathrm{MECH-NODE}$ as \acp{MLP} consisting of $5$ layers, a hidden dimension of $30$, and a hyperbolic tangent nonlinearity.

\subsubsection{Autoregressive models}

For the below stated autoregressive models, we divide the integration between two (latent) samples $\xi(t_k)$ and $\xi(t_{k+1})$ into $N_\mathrm{int}$ integration steps $\xi(t_k + \delta t), \dots, \xi(t_k + k' \delta t), \dots, \xi(t_k + N_\mathrm{int} \delta t)$ where $\delta t$ is the integration step size and $t_{k+1} = t_k + N_\mathrm{int} \delta t$. The autoregressive model now describes the transition $\xi(t_{k'+1}) = f_\mathrm{ar}(\xi(t_{k'}), u(t_k))) \: \forall k' \in 1, \dots, N_\mathrm{int}$.

\paragraph{RNN.}
We implement a standard, single-layer Elman RNN with \texttt{tanh} nonlinearity. The hidden state captures the latent state of the system. The latent state transition functions are given by
\begin{equation}
    \xi(t_{k'+1})
    = \tanh(W_\mathrm{hh} \, \xi(t_{k'}) + b_\mathrm{hh} + W_\mathrm{ih} \, u(t_{k}) + b_\mathrm{ih}), 
\end{equation}
where $W_\mathrm{hh} \in \mathbb{R}^{2 n_z \times 2 n_z}$, $b_\mathrm{hh} \in \mathbb{R}^{2 n_z}$, $W_\mathrm{ih} \in \mathbb{R}^{2 n_z \times m}$, and $b_\mathrm{ih} \in \mathbb{R}^{2 n_z}$.

\paragraph{GRU.}
We implement a standard, single-layer GRU~\cite{cho2014learning} with \texttt{sigmoid} activation function where we interpret the latent state of the system as the hidden state of the cell. The latent state transition functions are given by
\begin{equation}
\begin{split}
    r =& \: \sigma \left ( W_\mathrm{hr} \, \xi(t_{k'}) + b_\mathrm{hr} + W_\mathrm{ir} \, u(t_{k}) + b_\mathrm{ir} \right ),\\
    p =& \: \sigma \left ( W_\mathrm{hp} \, \xi(t_{k'}) + b_\mathrm{hp} + W_\mathrm{ip} \, u(t_{k}) + b_\mathrm{ip} \right ),\\
    n =& \: \tanh \left ( r \odot \left ( W_\mathrm{hn} \, \xi(t_{k'}) + b_\mathrm{hn} \right) + W_\mathrm{in} \, u(t_{k}) + b_\mathrm{in} \right ),\\
    \xi(t_{k'+1}) =& \: (1-p) \odot n + p \odot \xi(t_{k'}),
\end{split}
\end{equation}
where $\sigma$ is the sigmoid function, $\odot$ the Hadamard product, $W_\mathrm{hr}, W_\mathrm{hp}, W_\mathrm{hn} \in \mathbb{R}^{2 n_z \times 2 n_z}$, $W_\mathrm{ir}, W_\mathrm{ip}, W_\mathrm{in} \in \mathbb{R}^{2 n_z \times m}$, and $b_\mathrm{hr}, b_\mathrm{ir}, b_\mathrm{ip}, b_\mathrm{in} \in \mathbb{R}^{2 n_z}$.

\paragraph{coRNN.} A time-discrete \ac{coRNN} is defined by the transition function
\begin{equation}
\begin{split}
    \xi(t_{k'+1}) = \begin{bmatrix}
        z(t_{k'+1})\\
        \dot{z}(t_{k'+1})
    \end{bmatrix} = \begin{bmatrix}
        z(t_{k'}) + \delta t \, \dot{z}(t_{k'})\\
        \dot{z}(t_{k'}) + \delta t \, \left ( - \gamma z(t_{k'}) - \varepsilon \dot{z}(t_{k'}) + \tanh \left ( W \xi(t_{k'}) + V u(t_k) + b \right ) \right )
    \end{bmatrix},
\end{split}
\end{equation}
where $\gamma, \varepsilon \in \mathbb{R}^+$ are positive, scalar hyperparameters representing the stiffness and damping coefficients, respectively. The term $\tanh \left ( W \xi(t_{k'}) + V u(t_k) + b \right )$ with $W \in \mathbb{R}^{2n_z \times 2n_z}$, $V \in \mathbb{R}^{n_z \times m}$, and $b \in \mathbb{R}^{n_z}$ contributes nonlinear state-to-state connections. It is implemented with a linear layer operating on $(\xi(t_{k'}), u(t_k))$ followed by a hyperbolic tangent nonlinearity.

\paragraph{CFA-CON.} We adapt the Alg.~\ref{alg:apx:con_cfa} for predicting the time evolution in latent-space
\begin{equation}
    \xi(t_{k'+1}) = f_\mathrm{CFA-CON}(\xi(t_k'), u(t_k)),
\end{equation}
where $f_\mathrm{CFA-CON}$ describe the autoregressive state transition by the \ac{CFA-CON} model as introduced in Eq.~\ref{eq:cfa_con}.

\subsection{First-order variants of dynamical models}\label{apx:sub:1st_order_latent_dynamics}
For learning (latent) dynamics of \nth{1}-order systems (e.g., the reaction-diffusion dataset \emph{R-D}), it might be beneficial also to formulate the dynamical model to be of \nth{1}-order. 
While this is straightforward for some dynamics that do not explicitly take the order into account (e.g., RNN, GRU, NODE), for other models such as \ac{coRNN}, \ac{CON}, and \ac{CFA-CON} more adjustments are necessary.
Namely, we substitute the $\frac{\mathrm{d} z}{\mathrm{d}t}$ component of the \ac{ODE} with the expression for $\frac{\mathrm{d} \dot{z}}{\mathrm{d}t}$. Furthermore, we remove any terms that depend on the velocity $\dot{z}$ (e.g., damping effects).
Below, we report in detail the adapted, \nth{1}-order formulations for the \ac{coRNN}, \ac{CON}, and \ac{CFA-CON} models.

\paragraph{CON.} In the \nth{1}-order version, we adapt the standard, \nth{2}-order \ac{ODE} of the \ac{CON} network as defined in \eqref{eq:conw_dynamics} to
\begin{equation}
\begin{split}
    \dot{\xi}(t) = \dot{z}(t) = M_\mathrm{w}^{-1} \left ( g(u(t)) - K_\mathrm{w} z(t) - \tanh(z(t) + b) \right ).
\end{split}
\end{equation}

\paragraph{coRNN.} In the \nth{1}-order version, we define the transition function as
\begin{equation}
\begin{split}
    \xi(t_{k'+1}) = z(t_{k'+1}) = z(t_{k'}) - \delta t \, \gamma \, z(t_{k'}) + \delta t \, \tanh \left ( W \xi(t_{k'}) + V u(t_k) + b \right ).
\end{split}
\end{equation}

\paragraph{CFA-CON.}
We adapt a \nth{1}-order version of Alg.~\ref{alg:apx:con_cfa} for predicting the time evolution in latent-space
\begin{equation}
\begin{split}
    \xi(t_{k'+1}) = z(t_{k'+1}) = z(t_{k'}) + \int_{t_{k'}}^{t_{k'} + \delta t} F(t_{k'}) - \kappa \, z(t')  \, \mathrm{d}t',\\
    F(t_{k'}) = g(u(t_{k})) - (K - \kappa) z(t_{k'}) - \tanh(W z(t_{k'}) + b),
\end{split}
\end{equation}
where the closed-form solution for the integral is given by
\begin{equation}
    \int_{t_{k'}}^{t_{k'} + \delta t} F(t_{k'}) - \kappa \, z(t')  \, \mathrm{d}t' = \frac{F(t_{k'})}{\kappa} \left ( 1 - \, e^{- \kappa \, \delta t} \right ).
\end{equation}

\subsection{Estimation of the initial latent velocity}\label{apx:sub:initial_latent_velocity_estimation}
For \nth{2}-order systems and when integrating the evolution of the latent state $\xi(t) = \begin{bmatrix}
    z^\mathrm{T}(t) & \dot{z}^\mathrm{T}(t)
\end{bmatrix}^\mathrm{T}$ in time, we need to have access to an initial latent velocity $\dot{z}(t_0)$ such that we can roll out the latent state $\xi(t)$ in time.
A naive approach to estimating such an initial latent velocity would be to encode multiple (at least two) images of the system at the start of the trajectory into latent space and then perform numerical differentiation (e.g., finite differences) in latent space. However, we found the resulting $\dot{z}(t_k)$ to be relatively noisy and susceptible to small encoding errors.
Instead, we propose to perform numerical differentiation in image space and then map this velocity into latent space using the encoder's Jacobian. First, we estimate the image-space velocity at $t_k$ using finite differences: $\dot{o}(t_k) \approx \frac{o(t_{k+1}) - o(t_{k-1})}{t_{k+1} - t_{k-1}}$. The latent velocity is then estimated as $\dot{z}(t_k) = \frac{\partial \Phi}{\partial o} (o(t_k)) \, \dot{o}(t_k)$, where $\frac{\partial \Phi}{\partial o}$ is obtained with forward-mode automatic differentiation.

\subsection{Training}
We implement the network dynamics and the neural networks (e.g., encoder, decoder, and \acp{MLP}) in JAX~\cite{jax2018github} and Flax~\cite{flax2020github}, respectively. When training or inferring time-continuous dynamical models (e.g., NeuralODE, CON), we rely on Diffrax~\cite{kidger2021neural} for numerical integration of the \ac{ODE} using the Dormand-Prince's 5/4 method~\cite{dormand1980family} (Dopri5). For the numerical integration of both the time-continuous and the time-discrete models (e.g., RNN, \ac{coRNN}, \ac{CFA-CON}), we use an integration time-step $\delta t$ of \SI{0.025}{s} and $\SI{0.01}{s}$ for the \emph{Toy Physics}~\cite{botev2021priors} and soft robotic datasets, respectively.

Because of the GPU memory constraints, we limit ourselves to a batch size of $30$ and $80$ trajectories for the \emph{Toy Physics}~\cite{botev2021priors} and soft robotic datasets, respectively.
We implement a learning rate schedule consisting of a warm-up (5 epochs) and a cosine annealing~\cite{loshchilov2016sgdr} period (remaining epochs). 
We employ an AdamW optimizer~\cite{kingma2014adam, loshchilov2018decoupled} with $\beta_1 = 0.9$, $\beta_2 = 0.999$ for updating both neural network weights (e.g., encoder, decoder) and parameters of the dynamical model (e.g., $K_\mathrm{}, D_\mathrm{w}$, $M_\mathrm{w}$, etc.).

Before training, we conduct a hyperparameter selection study using Optuna~\cite{akiba2019optuna}. For this, we leverage a Tree-Structured Parzen Estimator~\cite{watanabe2023tree} for identifying hyperparameters such as the base learning rate, the weight decay, the loss function weights, and model-specific hyperparameters such as the number of \ac{MLP} layers, the hidden dimension of the \ac{MLP} layers, the $\gamma$ and $\epsilon$ values for the \ac{coRNN} model etc. that minimize the \ac{RMSE} of the predicted images.
To reduce computational requirements, we employ the Asynchronous Successive Halving Algorithm~\cite{li2020system} to stop unpromising trials early.

\subsection{Evaluation metrics}
Similar to other publications in the field~\cite{liu2018image, yu2018generative, stolzle2022reconstructing}, 
we state the \ac{RMSE}, the \ac{PSNR} and the \ac{SSIM}~\cite{wang2004image} between the ground-truth image $o \in \mathbb{R}^{h_\mathrm{o} \times w_\mathrm{o}}$ and the predicted image image $\hat{o} \in \mathbb{R}^{h_\mathrm{o} \times w_\mathrm{o}}$.
We use the separated test set for all evaluation results.

\subsubsection{Root Mean-Square Error}
The \ac{RMSE} between the two images is given by
\begin{equation}
    \text{RMSE}(o, \hat{o}) = \sqrt{\sum_{u = 1}^{h_\mathrm{o}} \sum_{v = 1}^{w_\mathrm{o}} \frac{(o_{uv} - \hat{o}_{uv})^2}{h_\mathrm{o} \, w_\mathrm{o}}}.
\end{equation}

\subsubsection{Peak Signal-to-Noise Ratio}
The \ac{PSNR} is a function of the total \ac{MSE} loss and the maximum dynamic range of the image $L$. 
\begin{equation}
    \mathrm{PSNR}(o, \hat{o}) = 20 \, \log_{10} (L) - 10 \, \log_{10} \left (\sum_{u = 1}^{h_\mathrm{o}} \sum_{v = 1}^{w_\mathrm{o}} \frac{(o_{uv} - \hat{o}_{uv})^2}{h_\mathrm{o} \, w_\mathrm{o}} \right ).
\end{equation}
As we work with normalized images with pixels in the interval $[-1, 1]$, the dynamic range is $L = 2$.

\subsubsection{Structural Similarity Index Measure}
As simple pixel-by-pixel metrics such as \ac{RMSE} or \ac{PSNR} tend to average out any encountered errors, this could lead to a situation in which a significant reconstruction error in a part of the image is not seen in the \ac{RMSE} metric but has a huge impact on the visual appearance of the reconstruction. \ac{SSIM}~\cite{wang2004image} incorporates not just the \emph{absolute errors}, but also the strong inter-dependencies between pixels, especially when they are spatially close.
The \ac{SSIM} metric between two observations $o$ and $\hat{o}$ is given by
\begin{equation}
    \mathrm{SSIM}(o, \hat{o}) = l^\alpha(o, \hat{o}) \, c^\beta(o, \hat{o}) \, s^\gamma(o, \hat{o}),
\end{equation}
where
\begin{equation}
    l(o, \hat{o}) = \frac{2 \mu_o \mu_{\hat{o}} + C_1}{\mu_o^2 + \mu_{\hat{o}}^2 + C_1},
    \quad
    c(o, \hat{o}) = \frac{2 \sigma_o \sigma_{\hat{o}} + C_2}{\sigma_o^2 + \sigma_{\hat{o}}^2 + C_2},
    \quad 
    s(o, \hat{o}) = \frac{\sigma_{o\hat{o}} + C_3}{\sigma_o \sigma_{\hat{o}} + C_3}.
\end{equation}
We use the constants $C_1 = (k_1 L)^2$, $C_2 = (k_2 L)^2$ and $C_3 = C_2 / 2$, where $L$ signifies the dynamic range as previously used for the \ac{PSNR} metric,
and $k_1 = 0.01$ and $k_2 = 0.03$. The average $\mu$ and the variance $\sigma^2$ is computed with a Gaussian filter with a 1D kernel of size $11$ and sigma $1.5$.
We set the weight exponents $\alpha$, $\beta$, and $\gamma$ for the luminance, contrast, and structure comparisons all to one.
We rely on the PIX library~\cite{deepmind2020jax} for efficiently computing the \ac{SSIM} metric.

\subsection{Compute resources}\label{apx:sub:compute_resource}

We trained the models on several desktop workstations for a total duration of roughly \SI{150}{h}.
In total, we relied on 10x RTX 3090/4090 GPUs, each with 24 GB of VRAM, training the models in parallel.
Each workstation contained between 64 and 128 GB of RAM, and we used roughly 100 GB of total storage. 
Training each model on one random seed took between \SI{45}{min} and \SI{4}{h} depending on the model type, the integration time constant, and the number of trainable parameters.
The hyperparameter tuning we conducted beforehand (only on one random seed) took roughly the same time and computational resources as generating the final results.

For the control experiments, we additionally used a laptop with a 16-core Intel Core i7-10870H CPU and 32 GB RAM. We did not need to use a GPU for evaluating the model during closed-loop control.
\section{Appendix on learning latent dynamics}\label{apx:sec:latent_dynamics_results}
We report the full set of quantitative results, including the additional evaluation metrics \ac{PSNR} and \ac{SSIM}, in Tables~\ref{tab:apx:latent_dynamics_results:cs} to \ref{tab:apx:latent_dynamics_results:pcc_ns-3}. For \emph{PCC-NS-2} in Tab.~\ref{tab:apx:latent_dynamics_results:pcc_ns-2}, we additionally also recorded the training steps per second on an Nvidia RTX 3090 GPU with a batch size of $80$ (which leads to $8080$ images per batch).
We plot the results of a sweep across the latent dimensions for the additional evaluation metrics \ac{PSNR} and \ac{SSIM} in Fig.~\ref{fig:apx:latent_dynamics_sweep_pcc_ns-2}. Correspondingly, we visualize the number of trainable parameters of each model vs. the latent dimension in Fig.~\ref{fig:apx:num_trainable_params_vs_n_z}.
In Figs.~\ref{fig:apx:latent_dynamics:sequence_of_stills:m-sp+f:rollout3}-~\ref{fig:apx:latent_dynamics:sequence_of_stills:pcc_ns-3:rollout4}, we present sequences of stills for the rollout of the trained \emph{CON}(-M) models on the various datasets. 

\begin{table}[ht]
    \centering
    \begin{small}
    \begin{tabular}{c c c c c}
         \toprule
         \textbf{Model} & \textbf{RMSE} $\downarrow$ & \textbf{PSNR} $\uparrow$ & \textbf{SSIM} $\uparrow$ & \textbf{\# Parameters} $\downarrow$ \\
         \midrule
         RNN & $0.2739 \pm 0.0057$ & $4.16 \pm 0.02$ & $0.6958 \pm 0.0122$ & $88$\\
         GRU~\cite{cho2014learning} & $0.0267 \pm 0.0033$ & $\mathbf{6.13 \pm 0.09}$ & $0.9861 \pm 0.0022$ & $248$\\
         coRNN~\cite{rusch2020coupled} & $\mathbf{0.0265 \pm 0.0002}$ & $\mathbf{6.13 \pm 0.01}$ & $\mathbf{0.9853 \pm 0.0006}$ & $40$\\
         NODE~\cite{chen2018neural} & $\mathbf{0.0264 \pm 0.0010}$ & $\mathbf{6.14 \pm 0.03}$ & $\mathbf{0.9858 \pm 0.0009}$ & $3368$\\
         MECH-NODE & $0.0328 \pm 0.0034$ & $5.99 \pm 0.07$ & $0.9821 \pm 0.0024$ & $3244$\\
         CON (our) & $0.0303 \pm 0.0053$ & $6.05 \pm 0.13$ & $0.9847 \pm 0.0027$ & $\mathbf{34}$\\
         CFA-CON (our) & $0.0313 \pm 0.0026$ & $6.02 \pm 0.06$ & $0.9843 \pm 0.0008$ & $\mathbf{34}$\\
         \bottomrule
    \end{tabular}
    \end{small}
    \vspace{0.5cm}
    \caption{Benchmarking of \ac{CON} and \ac{CFA-CON} at learning latent dynamics on the \textbf{\emph{M-SP+F} (mass-spring with friction) dataset}. For all models, a latent dimension of $n_z=4$ is chosen.
    As this dataset does not consider any inputs, we remove all parameters in the RNN, GRU, coRNN, CON, and CFA-CON models related to the input mapping.
    \emph{MECH-NODE} is a \ac{NODE} with prior knowledge about the mechanical structure of the system (i.e., $\frac{\mathrm{d}x}{\mathrm{d}t} = \dot{x}$). We report the mean and standard deviation over three different random seeds and the number of parameters of each latent dynamics model.
    }
    \label{tab:apx:latent_dynamics_results:m_sp_f}
\end{table}

\begin{table}[ht]
    \centering
    \begin{small}
    \begin{tabular}{c c c c c}
         \toprule
         \textbf{Model} & \textbf{RMSE} $\downarrow$ & \textbf{PSNR} $\uparrow$ & \textbf{SSIM} $\uparrow$ & \textbf{\# Parameters} $\downarrow$ \\
         \midrule
         RNN & $0.2378 \pm 0.0352$ & $4.31 \pm 0.15$ & $0.7568 \pm 0.0350$ & $88$\\
         GRU~\cite{cho2014learning} & $0.1457 \pm 0.0078$ & $4.78 \pm 0.05$ & $0.9168 \pm 0.0093$ & $248$\\
         coRNN~\cite{rusch2020coupled} & $0.1333 \pm 0.0044$ & $4.86 \pm 0.03$ & $0.9194 \pm 0.0055$ & $40$\\
         NODE~\cite{chen2018neural} & $\mathbf{0.1260 \pm 0.0013}$ & $\mathbf{4.91 \pm 0.01}$ & $\mathbf{0.9379 \pm 0.0009}$ & $3368$\\
         MECH-NODE & $0.1650 \pm 0.0205$ & $4.67 \pm 0.12$ & $0.8985 \pm 0.0153$ & $3244$\\
         CON (our) & $0.1303 \pm 0.0064$ & $4.88 \pm 0.04$ & $0.9175 \pm 0.0095$ & $\mathbf{34}$\\
         CFA-CON (our) & $0.1352 \pm 0.0073$ & $4.85 \pm 0.05$ & $0.9133 \pm 0.0052$ & $\mathbf{34}$\\
         \bottomrule
    \end{tabular}
    \end{small}
    \vspace{0.5cm}
    \caption{Benchmarking of \ac{CON} and \ac{CFA-CON} at learning latent dynamics on the \textbf{\emph{S-P+F} (single pendulum with friction) dataset}. For all models, a latent dimension of $n_z=4$ is chosen. 
    As this dataset does not consider any inputs, we remove all parameters in the RNN, GRU, coRNN, CON, and CFA-CON models related to the input mapping.
    \emph{MECH-NODE} is a \ac{NODE} with prior knowledge about the mechanical structure of the system (i.e., $\frac{\mathrm{d}x}{\mathrm{d}t} = \dot{x}$). We report the mean and standard deviation over three different random seeds and the number of parameters of each latent dynamics model.
    }
    \label{tab:apx:latent_dynamics_results:s_p_f}
\end{table}

\begin{table}[ht]
    \centering
    \begin{small}
    \begin{tabular}{c c c c c}
         \toprule
         \textbf{Model} & \textbf{RMSE} $\downarrow$ & \textbf{PSNR} $\uparrow$ & \textbf{SSIM} $\uparrow$ & \textbf{\# Parameters} $\downarrow$ \\
         \midrule
         RNN & $0.1694 \pm 0.0004$ & $4.631 \pm 0.002$ & $0.7082 \pm 0.0032$ & $672$\\
         GRU~\cite{cho2014learning} & $0.1329 \pm 0.0005$ & $4.858 \pm 0.003$ & $\mathbf{0.8340 \pm 0.0021}$ & $1968$\\
         coRNN~\cite{rusch2020coupled} & $0.1324 \pm 0.0016$ & $4.862 \pm 0.012$ & $0.8229 \pm 0.0039$ & $348$\\
         NODE~\cite{chen2018neural} & $0.1324 \pm 0.0024$ & $4.861 \pm 0.016$ & $0.8101 \pm 0.0024$ & $4404$\\
         MECH-NODE & $0.1710 \pm 0.0111$ & $4.624 \pm 0.063$ & $0.7170 \pm 0.0439$ & $4032$\\
         CON (our) & $0.1323 \pm 0.0018$ & $4.862 \pm 0.013$ & $0.8067 \pm 0.0038$ & $\mathbf{246}$\\
         CFA-CON (our) & $\mathbf{0.1307 \pm 0.0012}$ & $\mathbf{4.873 \pm 0.008}$ & $0.8147 \pm 0.0034$ & $\mathbf{246}$\\
         \bottomrule
    \end{tabular}
    \end{small}
    \vspace{0.5cm}
    \caption{Benchmarking of \ac{CON} and \ac{CFA-CON} at learning latent dynamics on the \textbf{\emph{D-P+F} (double pendulum with friction) dataset}. For all models, a latent dimension of $n_z=12$ is chosen. 
    As this dataset do not consider any inputs, we remove all parameters in the RNN, GRU, coRNN, CON, and CFA-CON models related to the input mapping.
    \emph{MECH-NODE} is a \ac{NODE} with prior knowledge about the mechanical structure of the system (i.e., $\frac{\mathrm{d}x}{\mathrm{d}t} = \dot{x}$). We report the mean and standard deviation over three different random seeds and the number of parameters of each latent dynamics model.
    }
    \label{tab:apx:latent_dynamics_results:d_p_f}
\end{table}

\begin{table}[ht]
    \centering
    \begin{small}
    \begin{tabular}{c c c c c}
         \toprule
         \textbf{Model} & \textbf{RMSE} $\downarrow$ & \textbf{PSNR} $\uparrow$ & \textbf{SSIM} $\uparrow$ & \textbf{\# Parameters} $\downarrow$ \\
         \midrule
         RNN & $\mathbf{0.1011 \pm 0.0009}$ & $\mathbf{25.92 \pm 0.08}$ & $\mathbf{0.9777 \pm 0.0004}$ & $696$\\
         GRU~\cite{cho2014learning} & $0.1125 \pm 0.0100$ & $24.99 \pm 0.74$ & $0.9730 \pm 0.0040$ & $2040$\\
         coRNN~\cite{rusch2020coupled} & $0.2537 \pm 0.0018$ & $17.93 \pm 0.06$ & $0.8820 \pm 0.0024$ & $\mathbf{336}$\\
         NODE~\cite{chen2018neural} & $0.2415 \pm 0.0021$ & $18.36 \pm 0.08$ & $0.8946 \pm 0.0023$ & $4374$\\
         MECH-NODE & $0.2494 \pm 0.0028$ & $18.08 \pm 0.10$ & $0.8898 \pm 0.0016$ & $4002$\\
         CON-S (our) & $0.1993 \pm 0.0646$ & $20.03 \pm 2.44$ & $0.9218 \pm 0.0380$ & $1386$\\
         CON-M (our) & $0.1063 \pm 0.0027$ & $25.49 \pm 0.22$ & $0.9758 \pm 0.0011$ & $8568$\\
         CFA-CON (our) & $0.1462 \pm 0.0211$ & $22.72 \pm 1.17$ & $0.9573 \pm 0.0103$ & $8568$\\
         \bottomrule
    \end{tabular}
    \end{small}
    \vspace{0.5cm}
    \caption{Benchmarking of \ac{CON} and \ac{CFA-CON} at learning latent dynamics on the \textbf{\emph{CS} (soft robot with one constant strain segment) dataset}. For all models, a latent dimension of $n_z=12$ is chosen. \emph{CON-S} and \emph{CON-M} are small and medium-sized versions of the \ac{CON} model, respectively. \emph{MECH-NODE} is a \ac{NODE} with prior knowledge about the mechanical structure of the system (i.e., $\frac{\mathrm{d}x}{\mathrm{d}t} = \dot{x}$). We report the mean and standard deviation over three different random seeds and the number of parameters of each latent dynamics model.
}
    \label{tab:apx:latent_dynamics_results:cs}
\end{table}

\begin{table}[ht]
    \centering
    \begin{scriptsize}
    \setlength\tabcolsep{2.0pt}
    \begin{tabular}{c c c c c c c c}
         \toprule
         \textbf{Model} & \textbf{RMSE} $\downarrow$ & \textbf{PSNR} $\uparrow$ & \textbf{SSIM} $\uparrow$ & \textbf{\# Parameters} $\downarrow$ & $\mathbf{\frac{\text{Train. steps}}{\text{second}}}$ $\uparrow$ & \textbf{Inf. time} [ms] $\uparrow$\\
         \midrule
         RNN & $0.1373 \pm 0.0185$ & $23.27 \pm 1.10$ & $0.9643 \pm 0.0077$ & $320$ & $1.87$ & $02.6$\\
         GRU~\cite{cho2014learning} & $0.0951 \pm 0.0021$ & $\mathbf{26.45 \pm 0.19}$ & $0.9730 \pm 0.0040$ & $928$ & $1.83$ & $03.2$\\
         coRNN~\cite{rusch2020coupled} & $0.2504 \pm 0.0899$ & $18.05 \pm 2.66$ & $0.9814 \pm 0.0006$ & $\mathbf{152}$ & $\mathbf{1.89}$ & $02.7$\\
         NODE~\cite{chen2018neural} & $0.1867 \pm 0.0561$ & $20.60 \pm 2.28$ & $0.8774 \pm 0.0857$ & $3856$ & $0.79$ & $50.2$\\
         MECH-NODE & $0.1035 \pm 0.0012$ & $25.07 \pm 0.06$ & $0.9778 \pm 0.0004$ & $3062$ & $0.79$ & $50.3$\\
         CON-S (our) & $0.0996 \pm 0.0012$ & $26.05 \pm 0.11$ & $\mathbf{0.9792 \pm 0.0007}$ & $676$ & $0.78$ & $50.2$\\
         CON-M (our) & $0.1008 \pm 0.0006$ & $25.95 \pm 0.05$ & $0.9786 \pm 0.0003$ & $7048$ & $0.60$ & $60.1$\\
         CFA-CON (our) & $0.1124 \pm 0.0025$ & $25.01 \pm 0.19$ & $0.9734 \pm 0.0012$ & $7048$ & $1.12$ & $13.6$\\
         \bottomrule
    \end{tabular}
    \end{scriptsize}
    \vspace{0.5cm}
    \caption{Benchmarking of \ac{CON} and \ac{CFA-CON} at learning latent dynamics on the \textbf{\emph{PCC-NS-2} (soft robot with two constant curvature segments) dataset}. For all models, a latent dimension of $n_z=8$ is chosen. \emph{CON-S} and \emph{CON-M} are small and medium-sized versions of the \ac{CON} model, respectively. \emph{MECH-NODE} is a \ac{NODE} with prior knowledge about the mechanical structure of the system (i.e., $\frac{\mathrm{d}x}{\mathrm{d}t} = \dot{x}$). We report the mean and standard deviation over three different random seeds. Furthermore, we state the number of parameters of each latent dynamics model and the training steps per second on a Nvidia RTX 3090 GPU. Each batch contains $80$ trajectories and $8080$ images of resolution 32x32px in total. Finally, we report the inference time averaged over $5000$ runs for performing a rollout of \SI{2.02}{s} (while encoding and decoding all images along the trajectory) on an Nvidia RTX 3090 GPU with a batch size of $1$.
}
    \label{tab:apx:latent_dynamics_results:pcc_ns-2}
\end{table}

\begin{table}[ht]
    \centering
    \begin{small}
    \begin{tabular}{c c c c c}
         \toprule
         \textbf{Model} & \textbf{RMSE} $\downarrow$ & \textbf{PSNR} $\uparrow$ & \textbf{SSIM} $\uparrow$ & \textbf{\# Parameters} $\downarrow$ \\
         \midrule
         RNN & $0.2232 \pm 0.0075$ & $19.05 \pm 0.29$ & $0.8955 \pm 0.0083$ & $696$\\
         GRU~\cite{cho2014learning} & $0.2148 \pm 0.0196$ & $19.38 \pm 0.76$ & $0.9039 \pm 0.0223$ & $2040$\\
         coRNN~\cite{rusch2020coupled} & $0.2474 \pm 0.0018$ & $18.15 \pm 0.06$ & $0.8877 \pm 0.0011$ & $\mathbf{336}$\\
         NODE~\cite{chen2018neural} & $0.3373 \pm 0.0565$ & $15.46 \pm 1.34$ & $0.7432 \pm 0.0935$ & $4374$\\
         MECH-NODE & $0.1900 \pm 0.0024$ & $20.45 \pm 0.11$ & $0.9315 \pm 0.0011$ & $4002$\\
         CON-S (our) & $\mathbf{0.1792 \pm 0.0038}$ & $\mathbf{20.96 \pm 0.18}$ & $\mathbf{0.9392 \pm 0.0023}$ & $1386$\\
         CON-M (our) & $\mathbf{0.1785 \pm 0.0023}$ & $\mathbf{20.99 \pm 0.11}$ & $\mathbf{0.9395 \pm 0.0018}$ & $8568$\\
         CFA-CON (our) & $0.1803 \pm 0.0003$ & $20.90 \pm 0.01$ & $0.9366 \pm 0.0004$ & $8568$\\
         \bottomrule
    \end{tabular}
    \end{small}
    \vspace{0.5cm}
    \caption{Benchmarking of \ac{CON} and \ac{CFA-CON} at learning latent dynamics on the \textbf{\emph{PCC-NS-3} (soft robot with three constant curvature segments) dataset}. For all models, a latent dimension of $n_z=12$ is chosen. \emph{CON-S} and \emph{CON-M} are small and medium-sized versions of the \ac{CON} model, respectively. \emph{MECH-NODE} is a \ac{NODE} with prior knowledge about the mechanical structure of the system (i.e., $\frac{\mathrm{d}x}{\mathrm{d}t} = \dot{x}$). We report the mean and standard deviation over three different random seeds and the number of parameters of each latent dynamics model.
}
    \label{tab:apx:latent_dynamics_results:pcc_ns-3}
\end{table}

\begin{figure}[hb]
    \centering
    \subfigure[PSNR vs. latent dimension $n_z$]{\includegraphics[width=0.49\textwidth, trim={5, 5, 5, 5}]{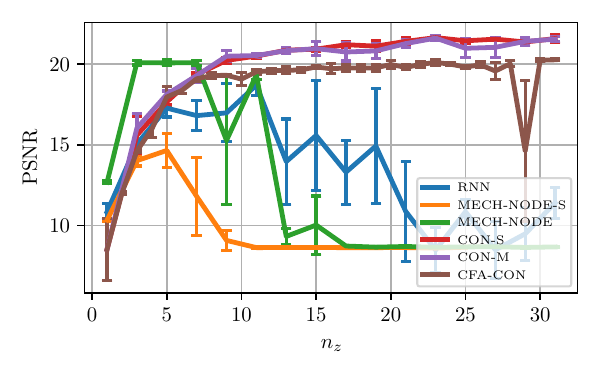}}
    \subfigure[PSNR vs. model parameters]{\includegraphics[width=0.49\textwidth, trim={5, 5, 5, 5}]{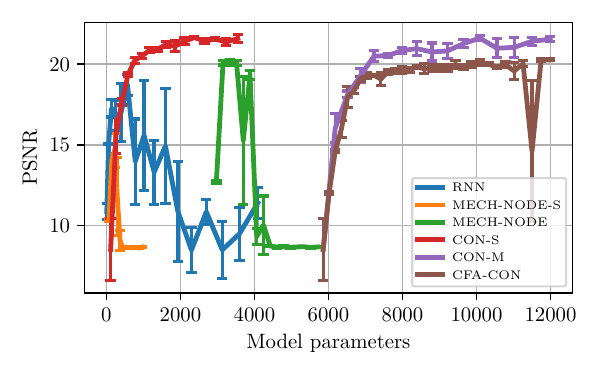}}
    \\
    \subfigure[SSIM vs. latent dimension $n_z$]{\includegraphics[width=0.49\textwidth, trim={5, 5, 5, 5}]{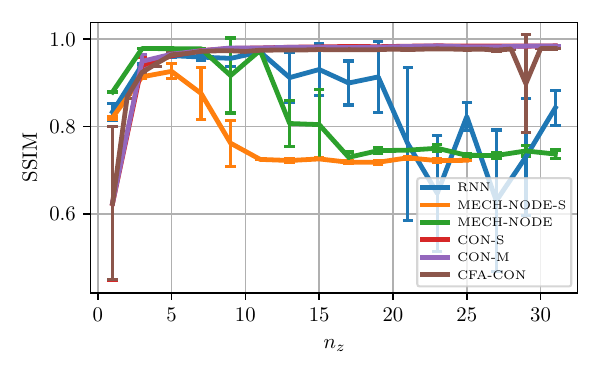}}
    \subfigure[SSIM vs. model parameters]{\includegraphics[width=0.49\textwidth, trim={5, 5, 5, 5}]{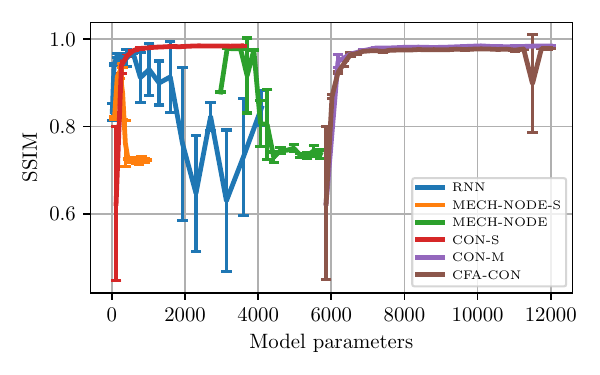}}
    \caption{Evaluation of prediction performance of the various models vs. the dimension of their latent representation $n_z$ and the number of trainable parameters of the dynamics model, respectively. We optimize the hyperparameters for the case of $n_z=8$, and execute the tuning separately for each model and dataset.}
    \label{fig:apx:latent_dynamics_sweep_pcc_ns-2}
\end{figure}

\begin{figure}
    \centering
    \includegraphics[width=0.6\columnwidth]{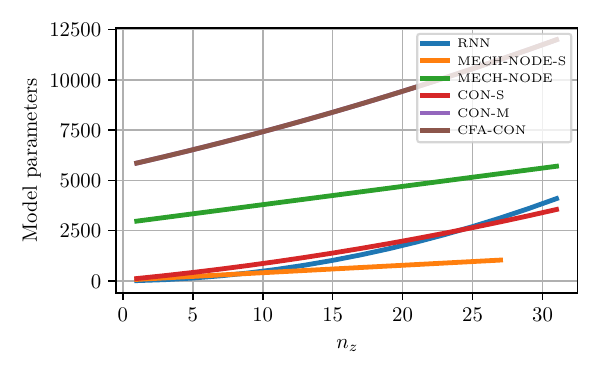}
    \caption{Plot of number of trainable parameters vs. the latent dimension $n_z$ of various models trained on the \emph{PCC-NS-2} dataset. As we have configured them, \emph{CON-M} and \emph{CFA-CON} always have the same number of parameters (i.e., overlaying lines).}
    \label{fig:apx:num_trainable_params_vs_n_z}
\end{figure}

\begin{figure}[hb]
    \centering
    \subfigure{\includegraphics[width=0.160\columnwidth]{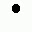}}
    \subfigure{\includegraphics[width=0.160\columnwidth]{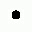}}
    \subfigure{\includegraphics[width=0.160\columnwidth]{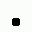}}
    \subfigure{\includegraphics[width=0.160\columnwidth]{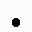}}
    \subfigure{\includegraphics[width=0.160\columnwidth]{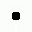}}
    \subfigure{\includegraphics[width=0.160\columnwidth]{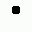}}
    \\
    \setcounter{subfigure}{0}
    \subfigure[t=\SI{0.0}{s}]{\includegraphics[width=0.160\columnwidth]{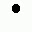}}
    \subfigure[t=\SI{0.55}{s}]{\includegraphics[width=0.160\columnwidth]{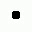}}
    \subfigure[t=\SI{1.10}{s}]{\includegraphics[width=0.160\columnwidth]{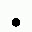}}
    \subfigure[t=\SI{1.65}{s}]{\includegraphics[width=0.160\columnwidth]{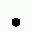}}
    \subfigure[t=\SI{2.20}{s}]{\includegraphics[width=0.160\columnwidth]{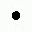}}
    \subfigure[t=\SI{2.75}{s}]{\includegraphics[width=0.160\columnwidth]{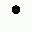}}
    \caption{Prediction sequence of a \ac{CON} model with latent dimension $n_z=4$ trained on the damped harmonic oscillator (\emph{M-SP+F}) dataset~\cite{botev2021priors}. 
    \textbf{Top row:} Ground-truth evolution of the system. \textbf{Bottom row:} Predictions of the \emph{CON} model. \newline
    The prediction model is given three images centered around $t=0$ for encoding the initial latent $z(0)$ and estimation of the initial latent velocity $\dot{z}(0)$. Subsequently, we roll out the autonomous network dynamics (i.e., unforced) and compare the decoded predictions with the ground-truth evolution of the system.  
    }\label{fig:apx:latent_dynamics:sequence_of_stills:m-sp+f:rollout3}
\end{figure}

\begin{figure}[hb]
    \centering
    \subfigure{\includegraphics[width=0.160\columnwidth]{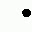}}
    \subfigure{\includegraphics[width=0.160\columnwidth]{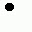}}
    \subfigure{\includegraphics[width=0.160\columnwidth]{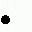}}
    \subfigure{\includegraphics[width=0.160\columnwidth]{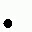}}
    \subfigure{\includegraphics[width=0.160\columnwidth]{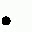}}
    \subfigure{\includegraphics[width=0.160\columnwidth]{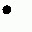}}
    \\
    \setcounter{subfigure}{0}
    \subfigure[t=\SI{0.0}{s}]{\includegraphics[width=0.160\columnwidth]{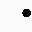}}
    \subfigure[t=\SI{0.55}{s}]{\includegraphics[width=0.160\columnwidth]{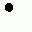}}
    \subfigure[t=\SI{1.10}{s}]{\includegraphics[width=0.160\columnwidth]{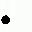}}
    \subfigure[t=\SI{1.65}{s}]{\includegraphics[width=0.160\columnwidth]{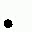}}
    \subfigure[t=\SI{2.20}{s}]{\includegraphics[width=0.160\columnwidth]{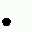}}
    \subfigure[t=\SI{2.75}{s}]{\includegraphics[width=0.160\columnwidth]{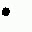}}
    \caption{Prediction sequence of a \ac{CON} model with latent dimension $n_z=4$ trained on the single pendulum with friction (\emph{S-P+F}) dataset~\cite{botev2021priors}. 
    \textbf{Top row:} Ground-truth evolution of the system. \textbf{Bottom row:} Predictions of the \emph{CON} model. \newline
    The prediction model is given three images centered around $t=0$ for encoding the initial latent $z(0)$ and estimation of the initial latent velocity $\dot{z}(0)$. Subsequently, we roll out the autonomous network dynamics (i.e., unforced) and compare the decoded predictions with the ground-truth evolution of the system.  
    }\label{fig:apx:latent_dynamics:sequence_of_stills:s-p+f:rollout6}
\end{figure}

\begin{figure}[hb]
    \centering
    \subfigure{\includegraphics[width=0.160\columnwidth]{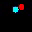}}
    \subfigure{\includegraphics[width=0.160\columnwidth]{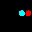}}
    \subfigure{\includegraphics[width=0.160\columnwidth]{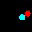}}
    \subfigure{\includegraphics[width=0.160\columnwidth]{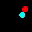}}
    \subfigure{\includegraphics[width=0.160\columnwidth]{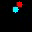}}
    \subfigure{\includegraphics[width=0.160\columnwidth]{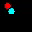}}
    \\
    \setcounter{subfigure}{0}
    \subfigure[t=\SI{0.0}{s}]{\includegraphics[width=0.160\columnwidth]{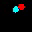}}
    \subfigure[t=\SI{0.55}{s}]{\includegraphics[width=0.160\columnwidth]{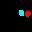}}
    \subfigure[t=\SI{1.10}{s}]{\includegraphics[width=0.160\columnwidth]{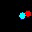}}
    \subfigure[t=\SI{1.65}{s}]{\includegraphics[width=0.160\columnwidth]{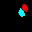}}
    \subfigure[t=\SI{2.20}{s}]{\includegraphics[width=0.160\columnwidth]{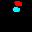}}
    \subfigure[t=\SI{2.75}{s}]{\includegraphics[width=0.160\columnwidth]{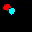}}
    \caption{Prediction sequence of a \ac{CON} model with latent dimension $n_z=12$ trained on the double pendulum with friction (\emph{D-P+F}) dataset~\cite{botev2021priors}. 
    \textbf{Top row:} ground-truth evolution of the system. \textbf{Bottom row:} predictions of the \emph{CON} model. \newline
    The prediction model is given three images centered around $t=0$ for encoding the initial latent $z(0)$ and estimation of the initial latent velocity $\dot{z}(0)$. Subsequently, we roll out the autonomous network dynamics (i.e., unforced) and compare the decoded predictions with the ground-truth evolution of the system.  
    }\label{fig:apx:latent_dynamics:sequence_of_stills:d-p+f:rollout7}
\end{figure}

\begin{figure}[hb]
    \centering
    \subfigure{\includegraphics[width=0.160\columnwidth]{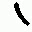}}
    \subfigure{\includegraphics[width=0.160\columnwidth]{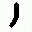}}
    \subfigure{\includegraphics[width=0.160\columnwidth]{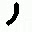}}
    \subfigure{\includegraphics[width=0.160\columnwidth]{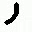}}
    \subfigure{\includegraphics[width=0.160\columnwidth]{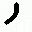}}
    \subfigure{\includegraphics[width=0.160\columnwidth]{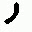}}
    \\
    \setcounter{subfigure}{0}
    \subfigure[t=\SI{0.00}{s}]{\includegraphics[width=0.160\columnwidth]{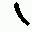}}
    \subfigure[t=\SI{0.04}{s}]{\includegraphics[width=0.160\columnwidth]{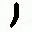}}
    \subfigure[t=\SI{0.08}{s}]{\includegraphics[width=0.160\columnwidth]{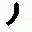}}
    \subfigure[t=\SI{0.12}{s}]{\includegraphics[width=0.160\columnwidth]{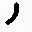}}
    \subfigure[t=\SI{0.16}{s}]{\includegraphics[width=0.160\columnwidth]{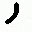}}
    \subfigure[t=\SI{0.20}{s}]{\includegraphics[width=0.160\columnwidth]{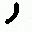}}
    \caption{Prediction sequence of a forced \ac{CON} model with latent dimension $n_z=12$ trained on the soft robotic \emph{CS} dataset containing trajectories of a simulated constant strain robot with one segment. 
    \textbf{Top row:} Ground-truth evolution of the system. \textbf{Bottom row:} Predictions of the \emph{CON-M} model. \newline
    The prediction model is given three images centered around $t=0$ for encoding the initial latent $z(0)$ and estimation of the initial latent velocity $\dot{z}(0)$. Subsequently, we roll out the autonomous network dynamics (i.e., unforced) and compare the decoded predictions with the ground-truth evolution of the system.  
    }\label{fig:apx:latent_dynamics:sequence_of_stills:cs:rollout19}
\end{figure}

\begin{figure}[hb]
    \centering
    \subfigure{\includegraphics[width=0.192\columnwidth]{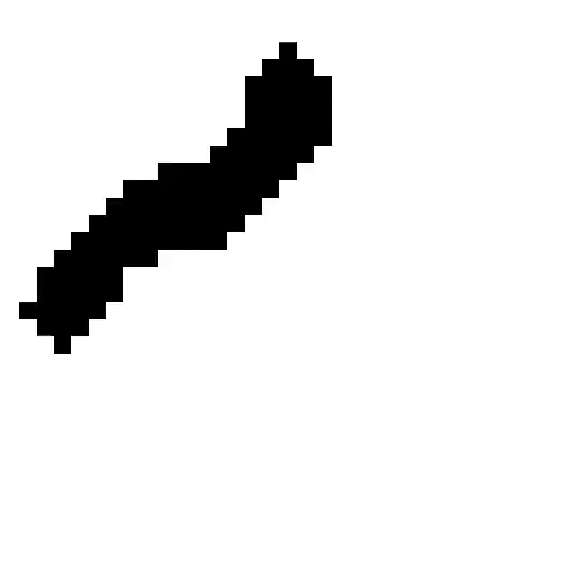}}
    \subfigure{\includegraphics[width=0.192\columnwidth]{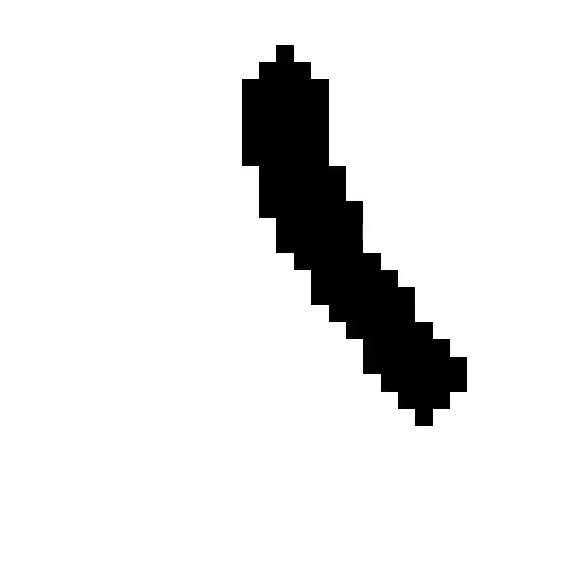}}
    \subfigure{\includegraphics[width=0.192\columnwidth]{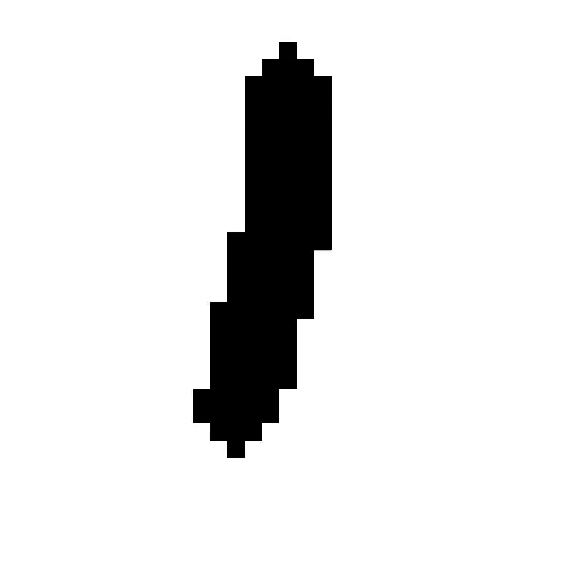}}
    \subfigure{\includegraphics[width=0.192\columnwidth]{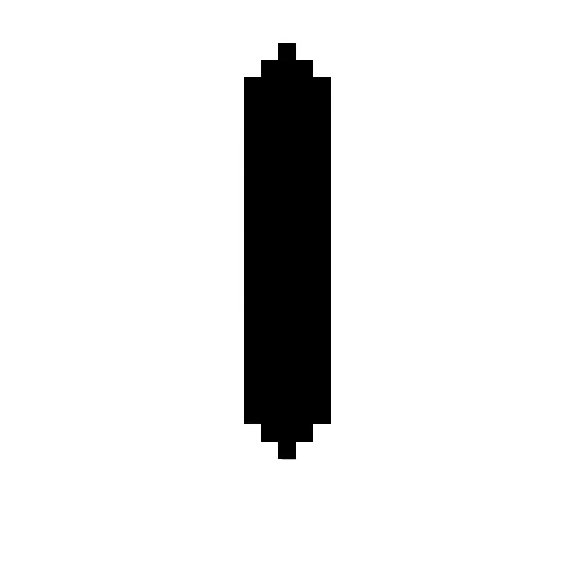}}
    \subfigure{\includegraphics[width=0.192\columnwidth]{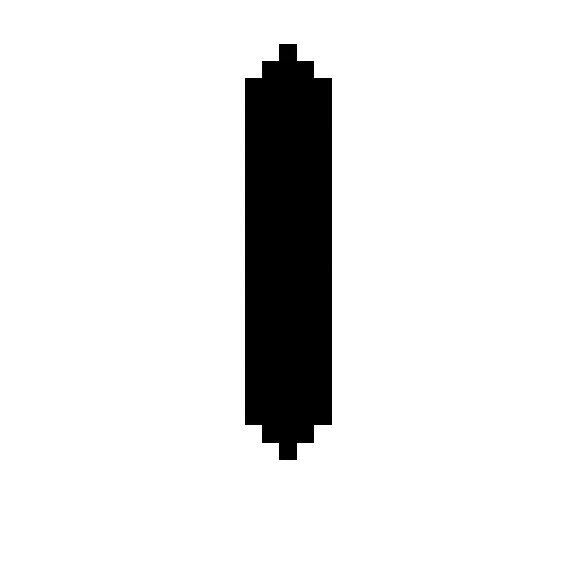}}
    \\
    \setcounter{subfigure}{0}
    \subfigure[t=\SI{0.0}{s}]{\includegraphics[width=0.192\columnwidth]{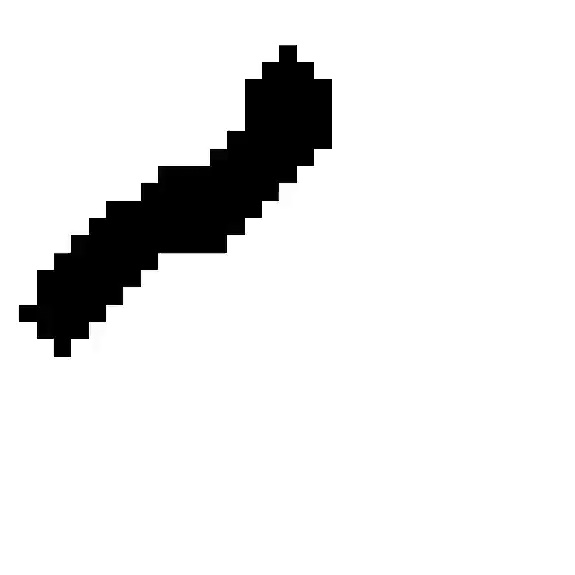}}
    \subfigure[t=\SI{0.3}{s}]{\includegraphics[width=0.192\columnwidth]{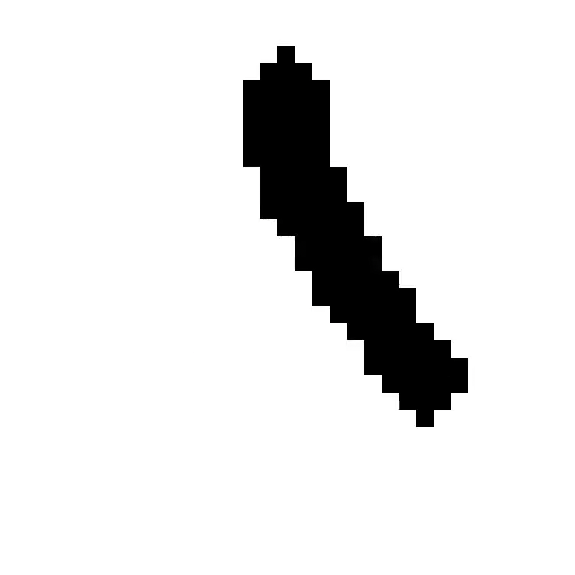}}
    \subfigure[t=\SI{0.6}{s}]{\includegraphics[width=0.192\columnwidth]{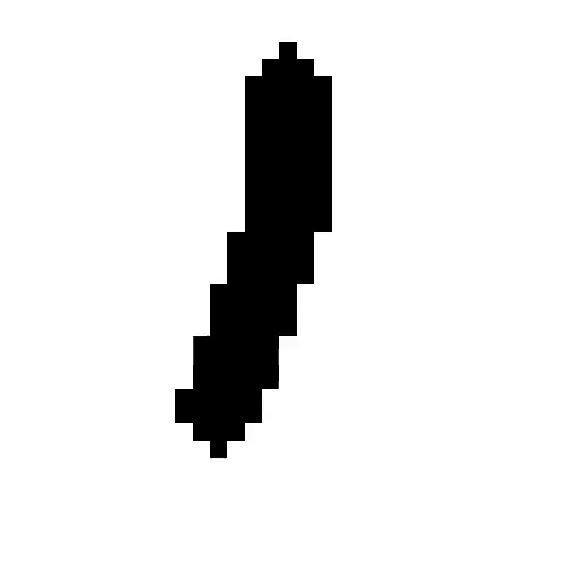}}
    \subfigure[t=\SI{0.9}{s}]{\includegraphics[width=0.192\columnwidth]{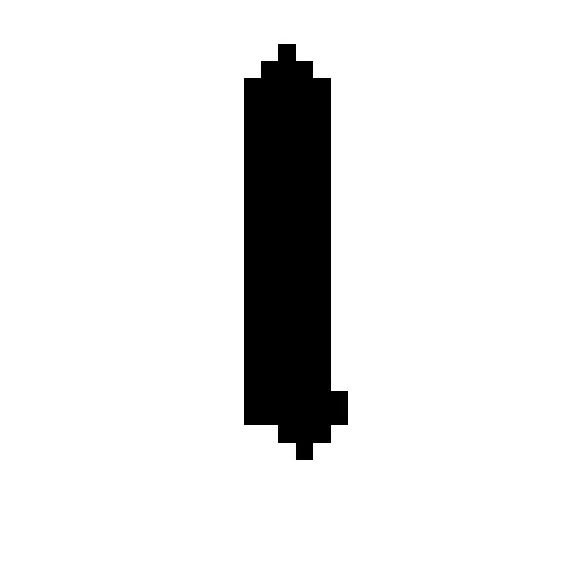}}
    \subfigure[t=\SI{1.2}{s}]{\includegraphics[width=0.192\columnwidth]{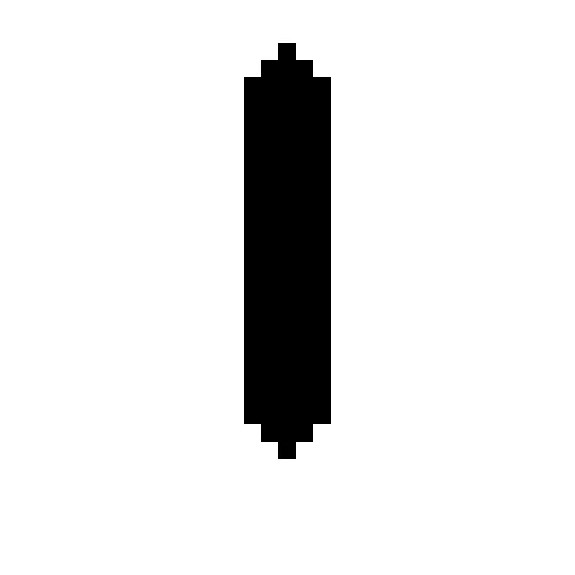}}
    \caption{Prediction sequence of an unforced \ac{CON} model with latent dimension $n_z=8$ trained on the \emph{PCC-NS-2} dataset. 
    \textbf{Top row:} Ground-truth evolution of the system. \textbf{Bottom row:} Predictions of the \emph{CON-M} model. \newline
    The prediction model is given three images centered around $t=0$ for encoding the initial latent $z(0)$ and estimation of the initial latent velocity $\dot{z}(0)$. Subsequently, we roll out the autonomous network dynamics (i.e., unforced) and compare the decoded predictions with the ground-truth evolution of the system.  
    }\label{fig:apx:latent_dynamics:sequence_of_stills:pcc_ns-2:rollout2}
\end{figure}

\begin{figure}[hb]
    \centering
    \subfigure{\includegraphics[width=0.192\columnwidth]{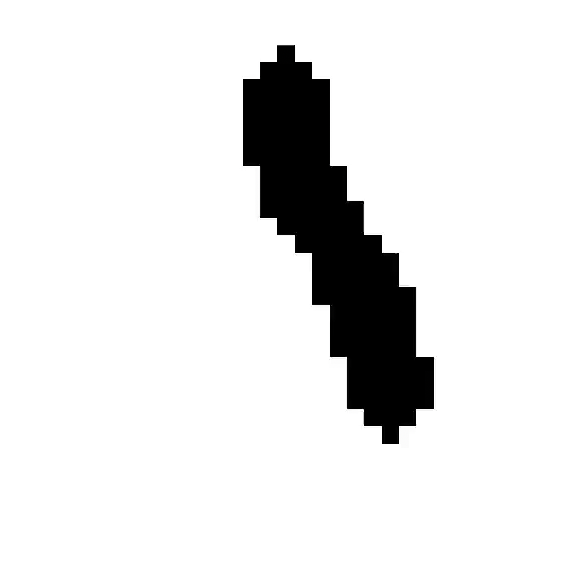}}
    \subfigure{\includegraphics[width=0.192\columnwidth]{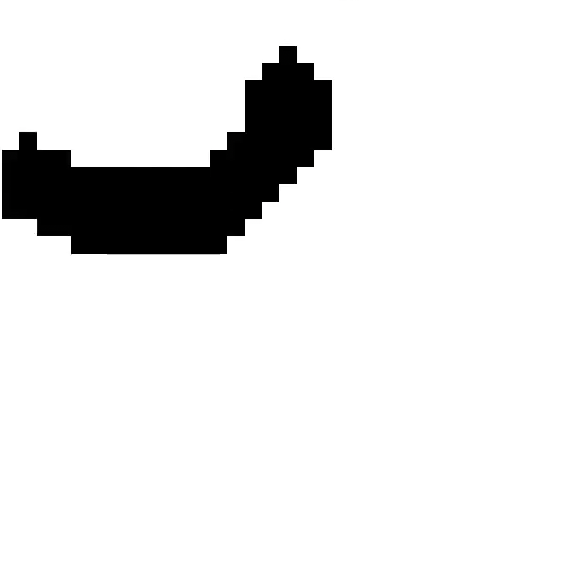}}
    \subfigure{\includegraphics[width=0.192\columnwidth]{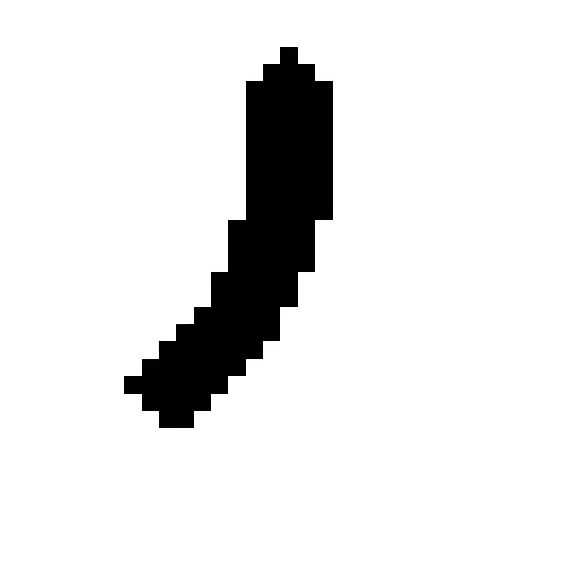}}
    \subfigure{\includegraphics[width=0.192\columnwidth]{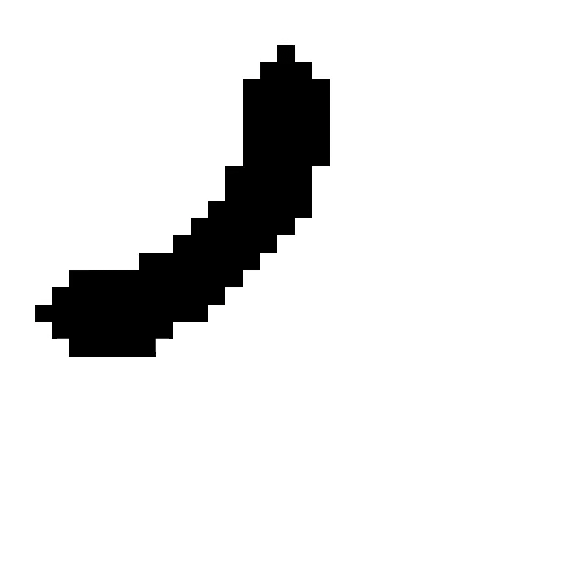}}
    \subfigure{\includegraphics[width=0.192\columnwidth]{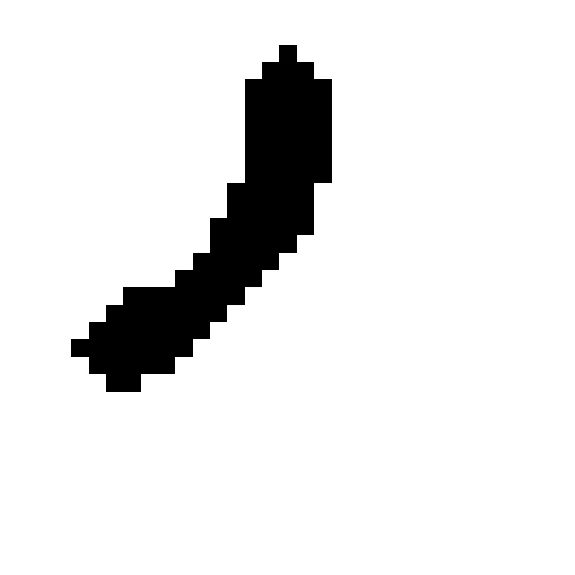}}
    \\
    \setcounter{subfigure}{0}
    \subfigure[t=\SI{0.0}{s}]{\includegraphics[width=0.192\columnwidth]{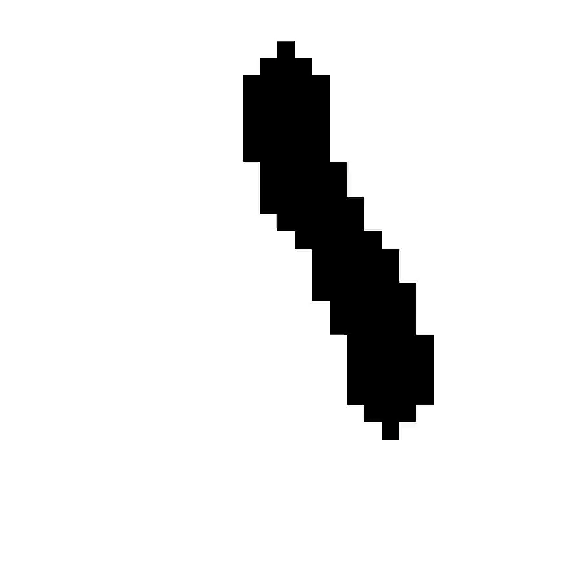}}
    \subfigure[t=\SI{0.3}{s}]{\includegraphics[width=0.192\columnwidth]{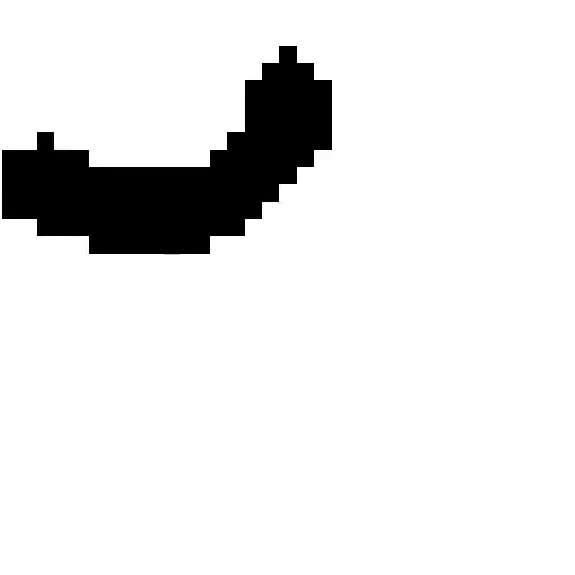}}
    \subfigure[t=\SI{0.6}{s}]{\includegraphics[width=0.192\columnwidth]{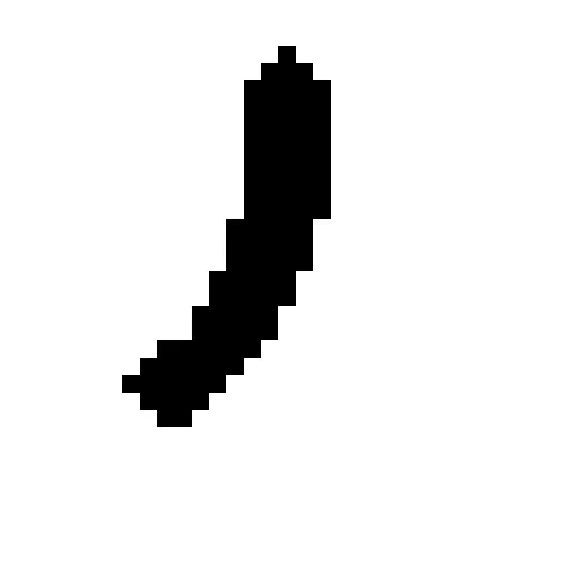}}
    \subfigure[t=\SI{0.9}{s}]{\includegraphics[width=0.192\columnwidth]{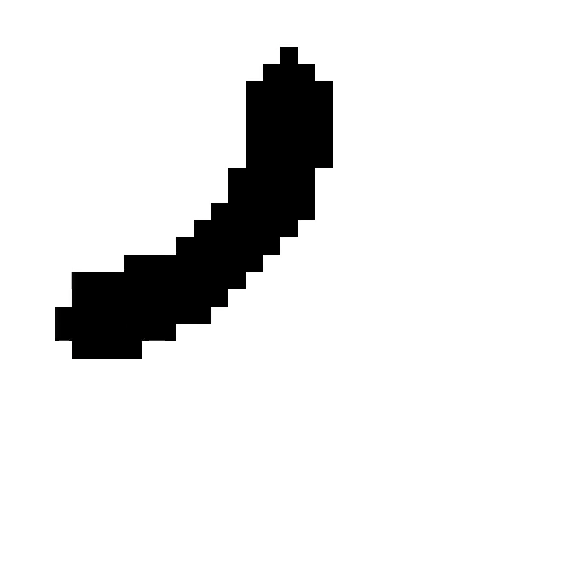}}
    \subfigure[t=\SI{1.2}{s}]{\includegraphics[width=0.192\columnwidth]{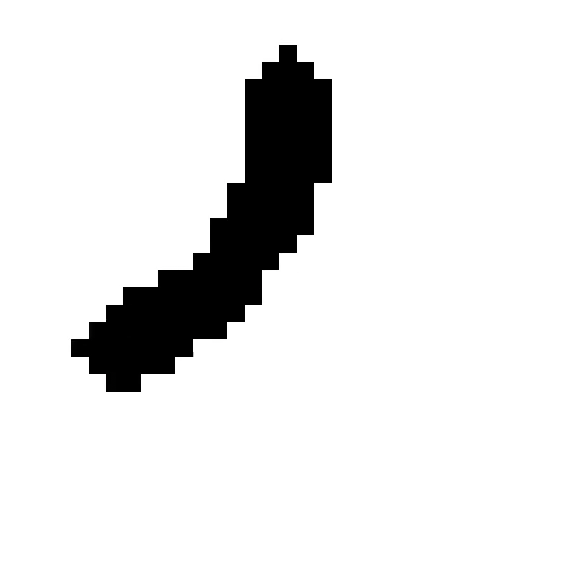}}
    \caption{Prediction sequence of a forced \ac{CON} model with latent dimension $n_z=8$ trained on the \emph{PCC-NS-2} dataset. 
    \textbf{Top row:} Ground-truth evolution of the system. \textbf{Bottom row:} Predictions of the \emph{CON-M} model. \newline
    The prediction model is given three images centered around $t=0$ for encoding the initial latent $z(0)$ and estimation of the initial latent velocity $\dot{z}(0)$. Subsequently, we provide the same constant input $u$ to both the simulator and the network dynamics (i.e., unforced) and compare the decoded predictions with the ground-truth evolution of the system.  
    }\label{fig:apx:latent_dynamics:sequence_of_stills:pcc_ns-2:rollout1}
\end{figure}

\begin{figure}[hb]
    \centering
    \subfigure{\includegraphics[width=0.160\columnwidth]{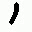}}
    \subfigure{\includegraphics[width=0.160\columnwidth]{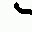}}
    \subfigure{\includegraphics[width=0.160\columnwidth]{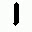}}
    \subfigure{\includegraphics[width=0.160\columnwidth]{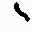}}
    \subfigure{\includegraphics[width=0.160\columnwidth]{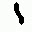}}
    \subfigure{\includegraphics[width=0.160\columnwidth]{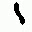}}
    \\
    \setcounter{subfigure}{0}
    \subfigure[t=\SI{0.0}{s}]{\includegraphics[width=0.160\columnwidth]{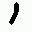}}
    \subfigure[t=\SI{0.36}{s}]{\includegraphics[width=0.160\columnwidth]{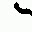}}
    \subfigure[t=\SI{0.72}{s}]{\includegraphics[width=0.160\columnwidth]{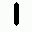}}
    \subfigure[t=\SI{1.08}{s}]{\includegraphics[width=0.160\columnwidth]{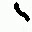}}
    \subfigure[t=\SI{1.44}{s}]{\includegraphics[width=0.160\columnwidth]{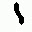}}
    \subfigure[t=\SI{1.80}{s}]{\includegraphics[width=0.160\columnwidth]{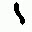}}
    \caption{Prediction sequence of a forced \ac{CON} model with latent dimension $n_z=12$ trained on the soft robotic \emph{PCC-NS-3} dataset containing trajectories of a simulated piecewise constant curvature robot with three segments. 
    \textbf{Top row:} Ground-truth evolution of the system. \textbf{Bottom row:} Predictions of the \emph{CON-M} model. \newline
    The prediction model is given three images centered around $t=0$ for encoding the initial latent $z(0)$ and estimation of the initial latent velocity $\dot{z}(0)$. Subsequently, we roll out the autonomous network dynamics (i.e., unforced) and compare the decoded predictions with the ground-truth evolution of the system.  
    }\label{fig:apx:latent_dynamics:sequence_of_stills:pcc_ns-3:rollout4}
\end{figure}

\subsection{Results for Reaction-diffusion dataset}\label{apx:sub:reaction_diffusion_latent_dynamics_results}
As all previous examples exampled \acp{ODE}, we strive to test the proposed approach also on a system that is governed by \acp{PDE}. Specifically, we consider the Reaction-diffusion (\emph{R-D}) dataset as introduced in Apx.~\ref{apx:ssub:reaction_diffusion_dataset}.
To address the unactuated nature of the dataset, we remove, analog to the \emph{M-SP+F}, \emph{S-P+F}, and \emph{D-P+F} datasets, the input-to-state mapping parameters of the dynamical models (e.g., the $B(u)$ and $E(\tau)$ \acp{MLP} for the \ac{CON} models).
Furthermore, the \ac{PDE} describing the system dynamics is of \nth{1}-order. Therefore, we leverage the \nth{1}-order versions of the latent dynamics as specified in Apx.~\ref{apx:sub:1st_order_latent_dynamics}.

We report the metrics of the test set evaluations in Tab.~\ref{tab:apx:latent_dynamics_results:r-d}. Furthermore, we also present a sequence of stills of the rollout of a trained latent dynamics \ac{CON} model in Fig.~\ref{fig:apx:latent_dynamics:sequence_of_stills:r_d:rollout2}.
We find it impressive that \ac{CON} with its strong stability guarantees can accurately model the dynamics of a high-dimensional \ac{PDE} system.


\begin{table}[ht]
    \centering
    \begin{small}
    \begin{tabular}{c c c c c}
         \toprule
         \textbf{Model} & \textbf{RMSE} $\downarrow$ & \textbf{PSNR} $\uparrow$ & \textbf{SSIM} $\uparrow$ & \textbf{\# Parameters} $\downarrow$ \\
         \midrule
         RNN & $0.3763 \pm 0.0374$ & $3.82 \pm 0.12$ & $0.4463 \pm 0.1358$ & $\mathbf{20}$\\
         GRU~\cite{cho2014learning} & $0.3232 \pm 0.0368$ & $\mathbf{3.99 \pm 0.13}$ & $0.6798 \pm 0.0949$ & $52$\\
         \nth{1}-order coRNN~\cite{rusch2020coupled} & $\mathbf{0.0741 \pm 0.0001}$ & $\mathbf{5.35 \pm 0.00}$ & $\mathbf{0.9724 \pm 0.0014}$ & $\mathbf{20}$\\
         NODE~\cite{chen2018neural} & $\mathbf{0.0738 \pm 0.0007}$ & $\mathbf{5.36 \pm 0.01}$ & $\mathbf{0.9683 \pm 0.0022}$ & $3064$\\
         CON (our) & $0.1110 \pm 0.0160$ & $5.03 \pm 0.12$ & $0.9372 \pm 0.0109$ & $24$\\
         CFA-CON (our) & $0.1068 \pm 0.0059$ & $5.05 \pm 0.05$ & $0.9418 \pm 0.0026$ & $24$\\
         \bottomrule
    \end{tabular}
    \end{small}
    \vspace{0.5cm}
    \caption{Benchmarking of \ac{CON} and \ac{CFA-CON} at learning latent dynamics on the \textbf{\emph{R-D} (reaction-diffusion) dataset}. For all models, a latent dimension of $n_z=4$ is chosen.
    As this dataset does not consider inputs, we remove all parameters in the RNN, GRU, coRNN, CON, and CFA-CON models related to the input mapping.
    Also, as the \emph{reaction-diffusion} system is governed by \nth{1}-order \ac{PDE} dynamics, we use specialized, \nth{1}-order version of the \emph{CON}, \emph{CFA-CON}, and \emph{coRNN} dynamics.
    We report the mean and standard deviation over three different random seeds and the number of parameters of each latent dynamics model.
    }
    \label{tab:apx:latent_dynamics_results:r-d}
\end{table}

\begin{figure}[hb]
    \centering
    \subfigure{\includegraphics[width=0.160\columnwidth]{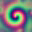}}
    \subfigure{\includegraphics[width=0.160\columnwidth]{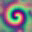}}
    \subfigure{\includegraphics[width=0.160\columnwidth]{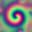}}
    \subfigure{\includegraphics[width=0.160\columnwidth]{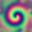}}
    \subfigure{\includegraphics[width=0.160\columnwidth]{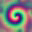}}
    \subfigure{\includegraphics[width=0.160\columnwidth]{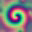}}
    \\
    \setcounter{subfigure}{0}
    \subfigure[t=\SI{0.0}{s}]{\includegraphics[width=0.160\columnwidth]{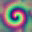}}
    \subfigure[t=\SI{1.0}{s}]{\includegraphics[width=0.160\columnwidth]{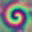}}
    \subfigure[t=\SI{2.0}{s}]{\includegraphics[width=0.160\columnwidth]{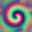}}
    \subfigure[t=\SI{3.0}{s}]{\includegraphics[width=0.160\columnwidth]{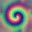}}
    \subfigure[t=\SI{4.0}{s}]{\includegraphics[width=0.160\columnwidth]{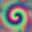}}
    \subfigure[t=\SI{5.0}{s}]{\includegraphics[width=0.160\columnwidth]{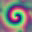}}
    \caption{Prediction sequence of an unforced, \nth{1}-order \ac{CON} model with latent dimension $n_z=4$ trained on the reaction-diffusion (\emph{R-D}) dataset. 
    \textbf{Top row:} Ground-truth evolution of the system. \textbf{Bottom row:} Predictions of the \emph{CON-M} model.
    We roll out the autonomous, \nth{1}-order network dynamics (i.e., unforced) and compare the decoded predictions with the ground-truth evolution of the system.  
    }\label{fig:apx:latent_dynamics:sequence_of_stills:r_d:rollout2}
\end{figure}
\section{Appendix on latent-space control}\label{apx:sec:latent_space_control}


\subsection{Latent-space control of a damped harmonic oscillator}

We consider an actuated version of the \emph{M-SP+F} dataset (i.e., a damped harmonic oscillator) and denote it with \emph{M-SP+F+A}. All system, trajectory sampling and rendering parameters remain the same, except that for each trajectory in the dataset we randomly sample a forcing $u \sim \mathcal{U}(-\SI{1}{N}, \SI{1}{N})$.

We train a \ac{CON} model with latent dimension $n_z = 1$ over three random seeds on the \emph{M-SP+F+A} dataset. This means that the network consists of a single oscillator.
From the three different random seeds, we choose the model that achieves the best validation loss, which results in an RMSE of $0.0327$, a PSNR of $5.99$, and SSIM of $0.9796$ on the test set.

Fig.~\ref{fig:apx:control:m-sp+f+a:analysis} shows how the encoder learns an almost linear relationship between the actual configuration of the system and the predicted latent space representation. Furthermore, we notice that both the ground-truth and the learned potential energy are convex and exhibit a global minimum at $q=\SI{0}{m}$.

We compare the performance of \emph{P-satI-D}, \emph{D+FF}, and \emph{P-satI-D+FF} controllers based on the \ac{CON} model in Fig.~\ref{fig:apx:control:m-sp+f+a:sequences_comparison}.
For the \emph{P-satI-D} controller, we choose the control gains $K_\mathrm{p} = 10, K_\mathrm{i}=10, K_\mathrm{d} = \num{5}, \upsilon = 1$. The \emph{D+FF} controller uses $K_\mathrm{d} = \num{3.5}$. Finally, the \emph{P-satI-D+FF} is configured with $K_\mathrm{p} = 2, K_\mathrm{i}=0.3, K_\mathrm{d} = \num{3.5}, \upsilon = 1$.
The results show that the \emph{P-satI-D+FF} controller exhibits thanks to its feedforward term no overshooting and a faster response time than the pure feedback controller \emph{P-satI-D}. The high accuracy of the feedfoward term can be seen from the performance of the \emph{D+FF} controller, that only exhibits relatively small steady-state error. Adding small proportional and integral feedback actions in the \emph{P-satI-D+FF} controller keeps the compliance high while removing the steady-state error and reducing the response time.

Finally, we visualize the behavior of the \emph{P-satI-D+FF} controller as a sequence of stills in Fig.~\ref{fig:apx:control:m-sp+f+a:con_PsatID+FF:sequence_of_stills}.

\begin{figure}[ht]
    \centering
    \subfigure[Configuration $q$ to latent $z$ mapping]{\includegraphics[width=0.49\columnwidth, trim={5, 10, 5, 5}]{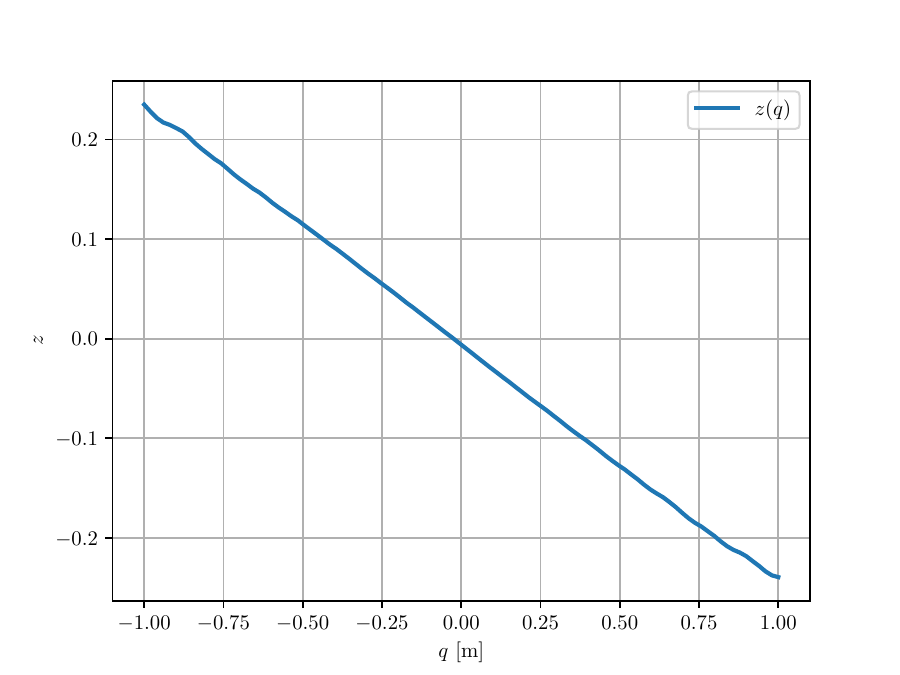}}
    \subfigure[Ground-truth and learned potential energy $\mathcal{U}$]{\includegraphics[width=0.49\columnwidth, trim={5, 10, 5, 5}]{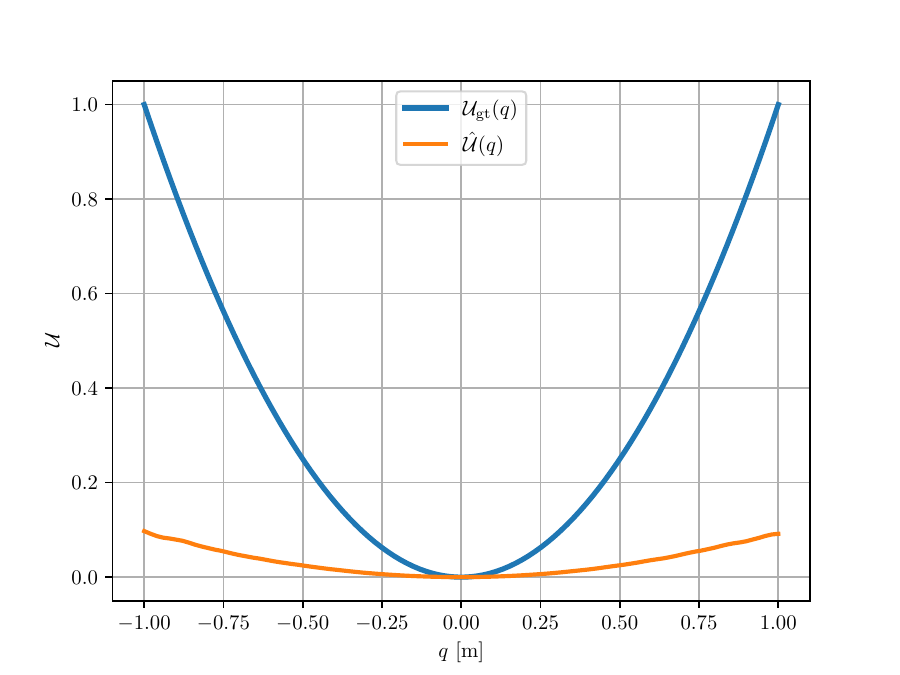}}
    \caption{
    \textbf{Panel (a):} Learned mapping from configuration to latent space for the CON model with $n_z$ (i.e., consisting of a single oscillator) trained on the actuated damped harmonic oscillator (\emph{M-SP+F+A}) dataset. \textbf{Panel (b):} The blue line represents the ground-truth potential energy of the damped harmonic oscillator. The orange line represents the learned potential energy of the CON model evaluated vs. the system configuration by rendering and subsequently encoding into latent space each configuration value.
    }\label{fig:apx:control:m-sp+f+a:analysis}
\end{figure}

\begin{figure}[ht]
    \centering
    \subfigure[Configuration $q(t) \in \mathbb{R}^2$]{\includegraphics[width=0.49\columnwidth, trim={5, 10, 5, 5}]{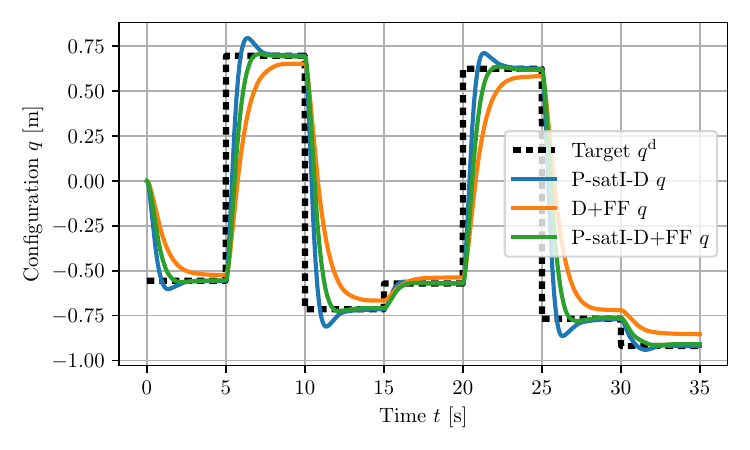}}
    \subfigure[Latent representation $z(t) \in \mathbb{R}^2$]{\includegraphics[width=0.49\columnwidth, trim={5, 10, 5, 5}]{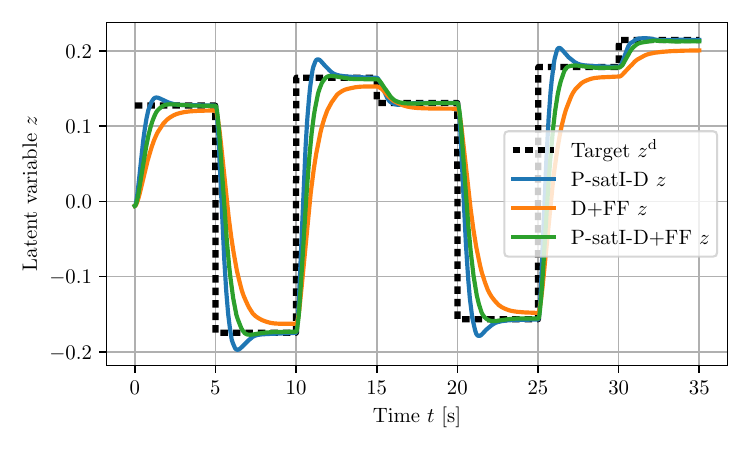}}\\
    \subfigure[Control input $u(t) \in \mathbb{R}^2$]{\includegraphics[width=0.49\columnwidth, trim={5, 10, 5, 5}]{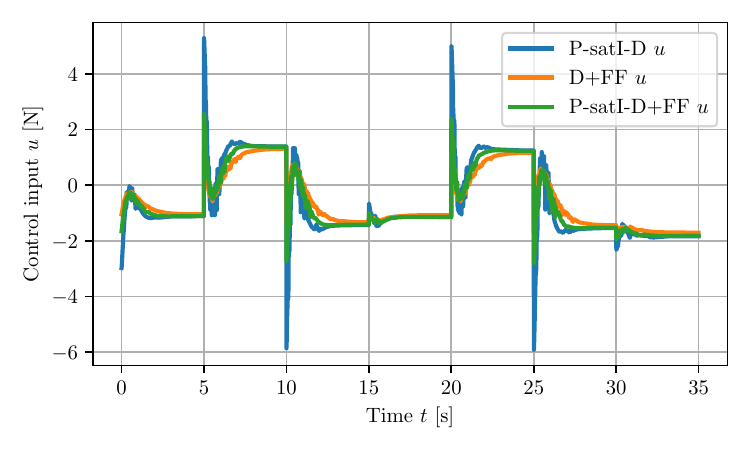}}
    \subfigure[Potential energy $U(t) \in \mathbb{R}$]{\includegraphics[width=0.49\columnwidth, trim={5, 10, 5, 5}]{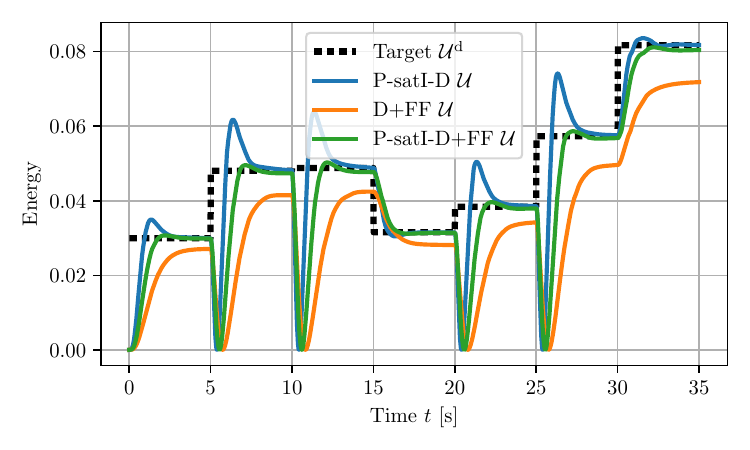}}
    \caption{Latent-space control of an actuated damped harmonic oscillator (\emph{M-SP+F+A}) following a sequence of setpoints. 
    We compare multiple controllers based on a trained \ac{CON} network with $n_z=1$. The \ac{CON} model weights are initialized using a random seed of 0.
    The blue line represents a pure feedback controller (\emph{P-satI-D}). The orange line visualizes the behavior of a feedforward controller with only a damping term applied in feedback (\emph{D+FF}). The green line shows the performance of our proposed combination of feedback and feedforward terms (\emph{P-satI-D+FF}).
    The dotted and solid lines show the reference and actual values, respectively.
    For each setpoint, we randomly sample a desired shape $q^\mathrm{d}$ and render the corresponding image $o^\mathrm{d}$. This image is then encoded to a target latent $z^\mathrm{d}$. The controller then computes a latent-space torque $F^\mathrm{d}$, which is decoded to an input $u$. Finally, we provide this input to the simulator, which performs a roll-out of the closed-loop dynamics.
    Important: The robot's configuration (i.e., the first-principle, minimal-order state) is solely used for generating a target image and simulating the closed-loop system. 
    }\label{fig:apx:control:m-sp+f+a:sequences_comparison}
\end{figure}

\begin{figure}[hb]
    \centering
    \subfigure[t=\SI{0.0}{s}]{\includegraphics[width=0.16\columnwidth]{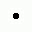}}
    \subfigure[t=\SI{0.5}{s}]{\includegraphics[width=0.16\columnwidth]{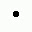}}
    \subfigure[t=\SI{1.0}{s}]{\includegraphics[width=0.16\columnwidth]{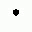}}
    \subfigure[t=\SI{1.5}{s}]{\includegraphics[width=0.16\columnwidth]{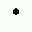}}
    \subfigure[t=\SI{2.0}{s}]{\includegraphics[width=0.16\columnwidth]{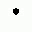}}
    \subfigure[Target $0-4.9\si{s}$]{\includegraphics[width=0.16\columnwidth]{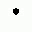}}
    \\
    \subfigure[t=\SI{5.00}{s}]{\includegraphics[width=0.16\columnwidth]{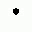}}
    \subfigure[t=\SI{5.5}{s}]{\includegraphics[width=0.16\columnwidth]{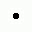}}
    \subfigure[t=\SI{6.0}{s}]{\includegraphics[width=0.16\columnwidth]{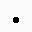}}
    \subfigure[t=\SI{6.5}{s}]{\includegraphics[width=0.16\columnwidth]{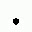}}
    \subfigure[t=\SI{7.5}{s}]{\includegraphics[width=0.16\columnwidth]{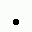}}
    \subfigure[Target $5-9.9\si{s}$]{\includegraphics[width=0.16\columnwidth]{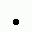}}
    \caption{Sequence of closed-loop control of an actuated damped harmonic oscillator (\emph{M-SP+F+A}) with a \emph{P-satI-D+FF}controller  based on a trained \ac{CON} with $n_z=1$.
    \textbf{Columns 1-4:} show the actual behavior of the closed-loop system. \textbf{Column 5:} demonstrates the target image that the control sees for all time instances in the row.
    }\label{fig:apx:control:m-sp+f+a:con_PsatID+FF:sequence_of_stills}
\end{figure}

\subsection{Latent-space control of a two segment PCC soft robot}

\subsubsection{Potential energy landscape}
When leveraging (learned) dynamical models for setpoint regulation, it is essential to accurately estimate the potential energy as this dictates the efficacy of the feedforward terms. Therefore, we qualitatively evaluate the potential energy landscape of the \ac{CON} latent dynamic model.

In Fig.~\ref{fig:apx:control:pcc_ns-2:potential_energy_landscape_z}, we can see how \ac{CON} contains a single, isolated, and globally asymptotically stable equilibrium as proven in Appendix~\ref{apx:sub:proof_conw_single_equilbrium} and Section~\ref{sec:con}, respectively. 

Furthermore, we want to verify that the learned potential corresponds to the actual potential energy of the simulated system.
An autonomous continuum soft robot with the tip pointing downwards in a straight configuration exhibits an isolated, globally asymptotically stable equilibrium at $q = 0$ (i.e., zero strains)~\cite{stolzle2021piston}.
For this purpose, we can compare the learned potential energy field in Fig.~\ref{fig:apx:control:pcc_ns-2:potential_energy_landscape_q} with the ground-truth potential energy field in Fig.~\ref{fig:apx:control:pcc_ns-2:potential_energy_landscape_q_gt}.
We confirm, based on Fig.~\ref{fig:apx:control:pcc_ns-2:potential_energy_landscape_q}, that, indeed, the learned potential also has its minimum close to/at $q=0$. Although the field is shaped slightly differently, the potential forces are clearly pointing inwards towards the global attractor.

\begin{figure}[ht]
    \centering
    \subfigure[Learned potential energy as a function of $z$]{\includegraphics[width=0.6\columnwidth, trim={5, 10, 5, 5}]{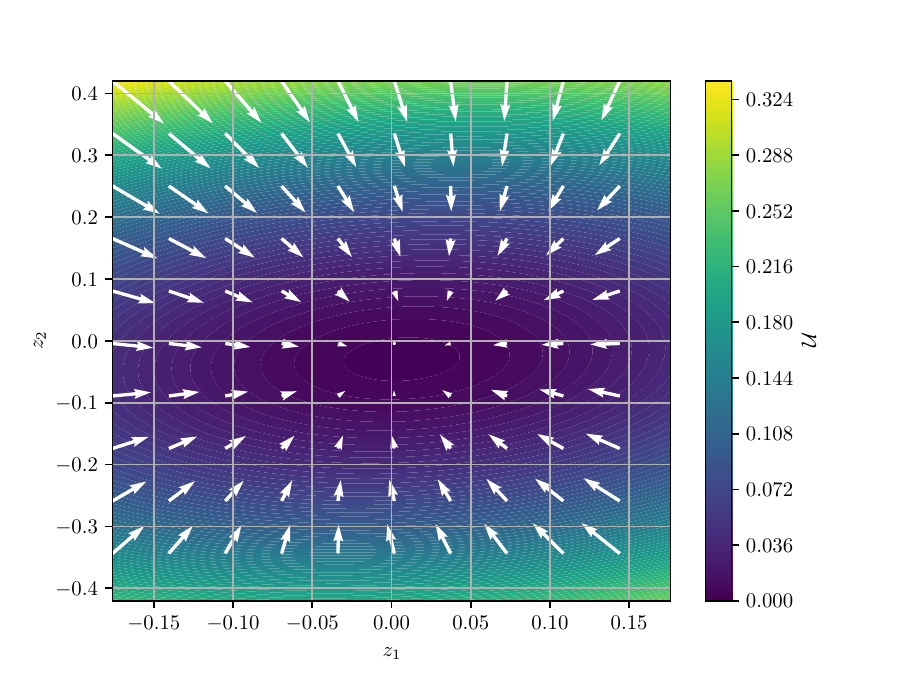}\label{fig:apx:control:pcc_ns-2:potential_energy_landscape_z}}\\
    \subfigure[Learned potential energy as a function of $q$]{\includegraphics[width=0.49\columnwidth, trim={20, 10, 20, 20}]{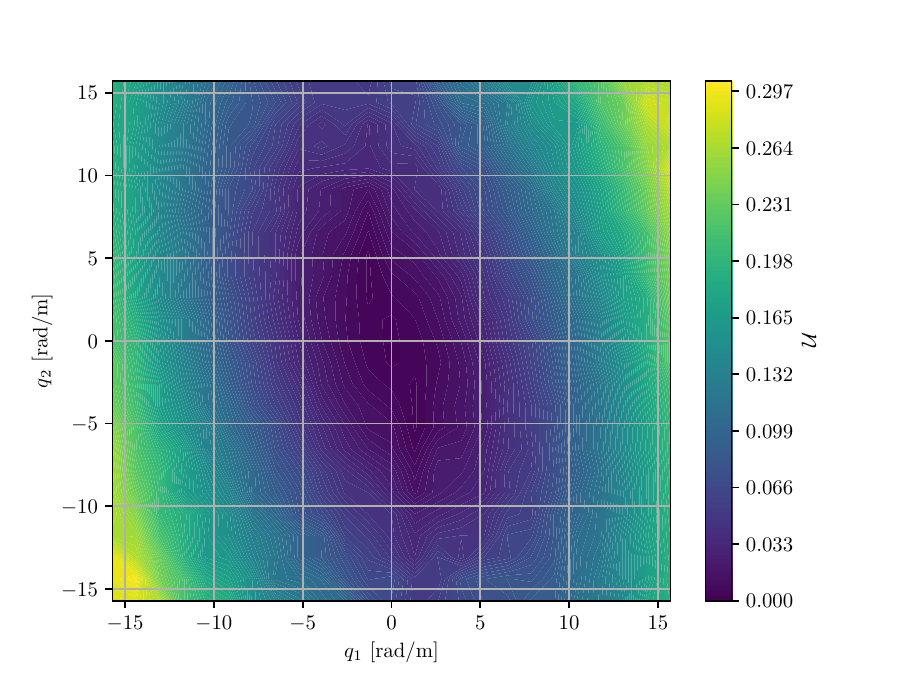}\label{fig:apx:control:pcc_ns-2:potential_energy_landscape_q}}
    \subfigure[Ground-truth potential energy as a function of $q$]{\includegraphics[width=0.49\columnwidth, trim={20, 10, 20, 20}]{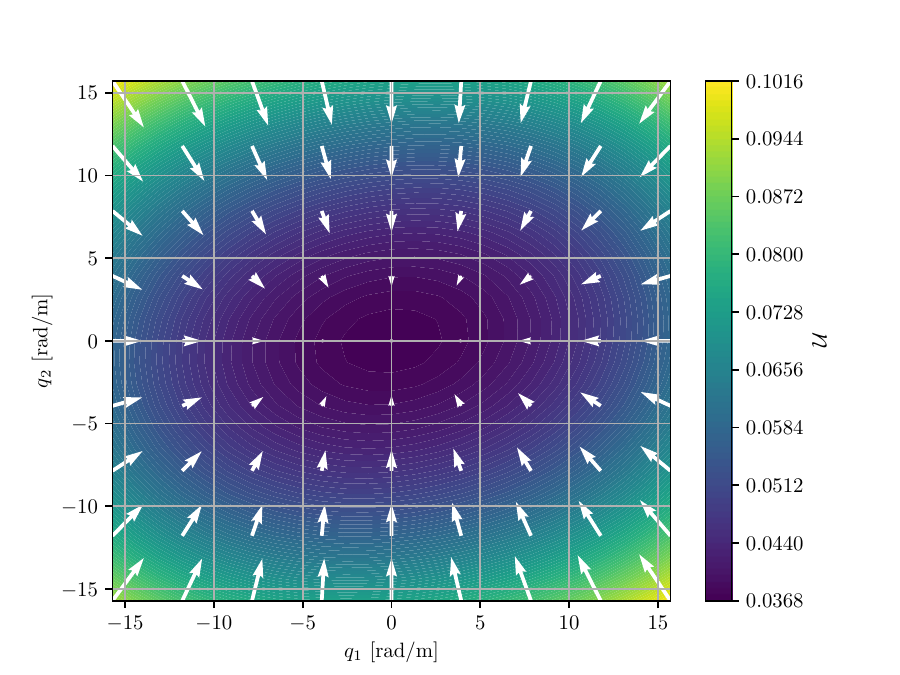}\label{fig:apx:control:pcc_ns-2:potential_energy_landscape_q_gt}}
    \caption{Potential energy landscapes of a \ac{CON} with $n_z=2$ trained to learn the latent space dynamics of a continuum soft robots (simulated with two \ac{PCC} segments). \textbf{Panel (a):} Here, we visualize the learned potential energy of \ac{CON} using the color scale as a function of the latent representation $z = x_\mathrm{w} \in \mathbb{R}^2$. The arrows denote the gradient of the potential field $\frac{\partial \mathcal{U}}{\partial z}$ (i.e., the potential force), with the magnitude of the gradient expressed as the length of the arrow. \textbf{Panel (b):} Again, we display the learned potential energy of \ac{CON} using the color scale, but in this case, as a function of the configuration $q \in \mathbb{R}$ of the robot (that is hidden from the model). First, we render an image $o$ of the shape of the robot for each configuration $q = \begin{bmatrix}
        q_1 & q_2
    \end{bmatrix}^\mathrm{T} \in \mathbb{R}^2$. Then, we encode the image into latent space as $z = \Phi(o)$. This allows us then to compute the potential energy $\mathcal{U}(z)$ of the \ac{CON} latent dynamics model. \textbf{Panel (c):} Here, we display the potential energy and its associated potential forces of the actual (i.e., simulated) system.}
\end{figure}

\subsubsection{Model selection}
For the control experiments, we train instances of the \emph{MECH-NODE} and \emph{CON-M} models with latent dimension $n_z = 2$ and with neural network weights initialized with three different random seeds. For \emph{MECH-NODE}, we choose the model with the lowest validation loss (seed $0$). 

For the \emph{CON} network, we found that model-based control does not perform as well when the latent stiffness $\Gamma_\mathrm{w}$ (as visualized in Fig.~\ref{fig:apx:control:pcc_ns-2:potential_energy_landscape_z}) is significantly larger along one of the Eigenvectors than along the other one. Therefore, we evaluate the Eigenvalues of the learned stiffness matrix in $\mathcal{W}$-coordinates after training: $\lambda_{1,2}(\Gamma_\mathrm{w})$. Particularly, we choose the seed that minimizes the normalized standard deviation of the Eigenvalues
\begin{equation}
\begin{split}
    \mu_\lambda =& \: \frac{\lambda_1(\Gamma_\mathrm{w}) + \lambda_2(\Gamma_\mathrm{w})}{2},\\
    \sigma_\lambda =& \: \sqrt{\frac{(\lambda_1(\Gamma_\mathrm{w})-\mu_\lambda)^2 + (\lambda_2(\Gamma_\mathrm{w})-\mu_\lambda)^2}{2}},\\
    \mathrm{seed} =& \: \arg\min \frac{\sigma_\lambda}{\mu_\lambda}.
\end{split}
\end{equation}

\subsubsection{Additional control results}
Additional results for the \emph{P-satI-D} feedback controller based on the MECH-NODE and \ac{CON} models are provided in Fig.~\ref{fig:apx:control:pcc_ns-2:mech_node_psatid_results}, Fig.~\ref{fig:apx:control:pcc_ns-2:con_PsatID_results}, respectively and for the \emph{P-satI-D+FF} controller based on the \ac{CON} model in Fig.~\ref{fig:apx:control:pcc_ns-2:con_PsatID+FF_results}.
Sequences of stills for the \emph{CON P-satI-D+FF} controller are provided in Fig.~\ref{fig:apx:control:pcc_ns-2:con_PsatID+FF:sequence_of_stills}.


\begin{figure}[hb]
    \centering
    \subfigure[t=\SI{0.0}{s}]{\includegraphics[width=0.16\columnwidth]{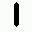}}
    \subfigure[t=\SI{0.2}{s}]{\includegraphics[width=0.16\columnwidth]{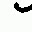}}
    \subfigure[t=\SI{0.4}{s}]{\includegraphics[width=0.16\columnwidth]{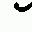}}
    \subfigure[t=\SI{0.6}{s}]{\includegraphics[width=0.16\columnwidth]{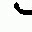}}
    \subfigure[t=\SI{0.8}{s}]{\includegraphics[width=0.16\columnwidth]{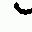}}
    \subfigure[Target $0-4.9\si{s}$]{\includegraphics[width=0.16\columnwidth]{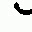}}
    \\
    \subfigure[t=\SI{5.00}{s}]{\includegraphics[width=0.16\columnwidth]{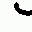}}
    \subfigure[t=\SI{5.2}{s}]{\includegraphics[width=0.16\columnwidth]{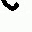}}
    \subfigure[t=\SI{5.4}{s}]{\includegraphics[width=0.16\columnwidth]{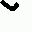}}
    \subfigure[t=\SI{5.6}{s}]{\includegraphics[width=0.16\columnwidth]{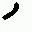}}
    \subfigure[t=\SI{5.8}{s}]{\includegraphics[width=0.16\columnwidth]{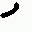}}
    \subfigure[Target $5-9.9\si{s}$]{\includegraphics[width=0.16\columnwidth]{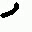}}
    \caption{Sequence of closed-loop control of a continuum soft robot consisting of two constant curvature segments with the \emph{P-satI-D+FF} based on a trained \ac{CON} with $n_z=2$.
    \textbf{Columns 1-4:} show the actual behavior of the closed-loop system. \textbf{Column 5:} demonstrates the target image that the control sees for all time instances in the row.
    }\label{fig:apx:control:pcc_ns-2:con_PsatID+FF:sequence_of_stills}
\end{figure}

\begin{figure}[ht]
    \centering
    \subfigure[Configuration $q(t) \in \mathbb{R}^2$]{\includegraphics[width=0.49\columnwidth, trim={5, 10, 5, 5}]{figures/results/control/pcc_ns-2/mech_node_psatid/setpoint_control_sequence_q.pdf}}
    \subfigure[Latent representation $z(t) \in \mathbb{R}^2$]{\includegraphics[width=0.49\columnwidth, trim={5, 10, 5, 5}]{figures/results/control/pcc_ns-2/mech_node_psatid/setpoint_control_sequence_z.pdf}}\\
    \subfigure[Control input $u(t) \in \mathbb{R}^2$]{\includegraphics[width=0.49\columnwidth, trim={5, 10, 5, 5}]{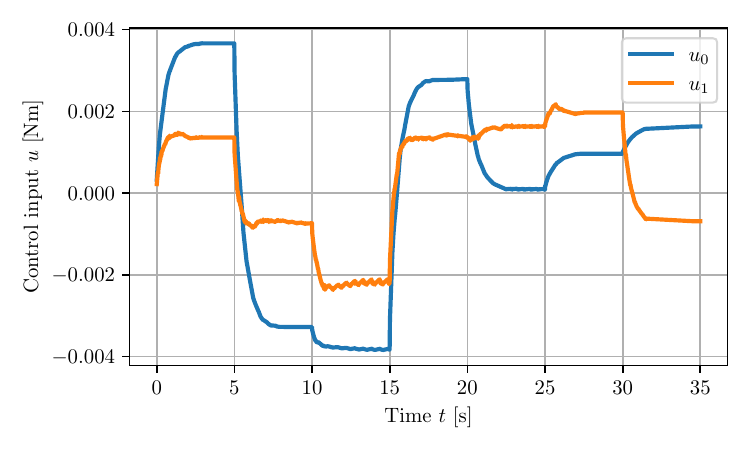}}
    \caption{Latent-space control of a continuum soft robot (simulated using two \ac{PCC} segments) following a sequence of setpoints with a pure \textbf{P-satI-D} feedback controller operating in a 2D latent space learned with the \textbf{MECH-NODE} model. The \ac{CON} model weights are initialized using a \textbf{random seed of 0}.
    The dotted and solid lines show the reference and actual values, respectively.
    For each setpoint, we randomly sample a desired shape $q^\mathrm{d}$ and render the corresponding image $o^\mathrm{d}$. This image is then encoded to a target latent $z^\mathrm{d}$. The controller then computes a latent-space torque $F^\mathrm{d}$, which is decoded to an input $u$. Finally, we provide this input to the simulator, which performs a roll-out of the closed-loop dynamics.
    Important: The robot's configuration (i.e., the first-principle, minimal-order state) is solely used for generating a target image and simulating the closed-loop system. 
    }\label{fig:apx:control:pcc_ns-2:mech_node_psatid_results}
\end{figure}

\begin{figure}[ht]
    \centering
    \subfigure[Configuration $q(t) \in \mathbb{R}^2$]{\includegraphics[width=0.49\columnwidth, trim={5, 10, 5, 5}]{figures/results/control/pcc_ns-2/con_psatid/setpoint_control_sequence_q.pdf}}
    \subfigure[Latent representation $z(t) \in \mathbb{R}^2$]{\includegraphics[width=0.49\columnwidth, trim={5, 10, 5, 5}]{figures/results/control/pcc_ns-2/con_psatid/setpoint_control_sequence_z.pdf}}\\
    \subfigure[Control input $u(t) \in \mathbb{R}^2$]{\includegraphics[width=0.49\columnwidth, trim={5, 10, 5, 5}]{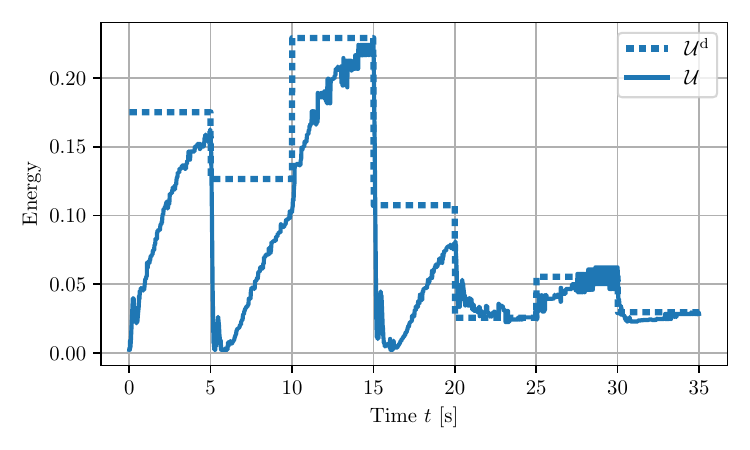}}
    \subfigure[Potential energy $U(t) \in \mathbb{R}$]{\includegraphics[width=0.49\columnwidth, trim={5, 10, 5, 5}]{figures/results/control/pcc_ns-2/con_psatid/setpoint_control_sequence_U.pdf}}
    \caption{Latent-space control of a continuum soft robot (simulated using two \ac{PCC} segments) following a sequence of setpoints with a pure \textbf{P-satI-D} feedback controller operating in a 2D latent space learned with the \textbf{\ac{CON}} model. The \ac{CON} model weights are initialized using a random seed of 0.
    The dotted and solid lines show the reference and actual values, respectively.
    For each setpoint, we randomly sample a desired shape $q^\mathrm{d}$ and render the corresponding image $o^\mathrm{d}$. This image is then encoded to a target latent $z^\mathrm{d}$. The controller then computes a latent-space torque $F^\mathrm{d}$, which is decoded to an input $u$. Finally, we provide this input to the simulator, which performs a roll-out of the closed-loop dynamics.
    Important: The robot's configuration (i.e., the first-principle, minimal-order state) is solely used for generating a target image and simulating the closed-loop system. 
    }\label{fig:apx:control:pcc_ns-2:con_PsatID_results}
\end{figure}

\begin{figure}[ht]
    \centering
    \subfigure[Configuration $q(t) \in \mathbb{R}^2$]{\includegraphics[width=0.49\columnwidth, trim={5, 10, 5, 5}]{figures/results/control/pcc_ns-2/con_psatid+ff/setpoint_control_sequence_q.pdf}}
    \subfigure[Latent representation $z(t) \in \mathbb{R}^2$]{\includegraphics[width=0.49\columnwidth, trim={5, 10, 5, 5}]{figures/results/control/pcc_ns-2/con_psatid+ff/setpoint_control_sequence_z.pdf}}\\
    \subfigure[Control input $u(t) \in \mathbb{R}^2$]{\includegraphics[width=0.49\columnwidth, trim={5, 10, 5, 5}]{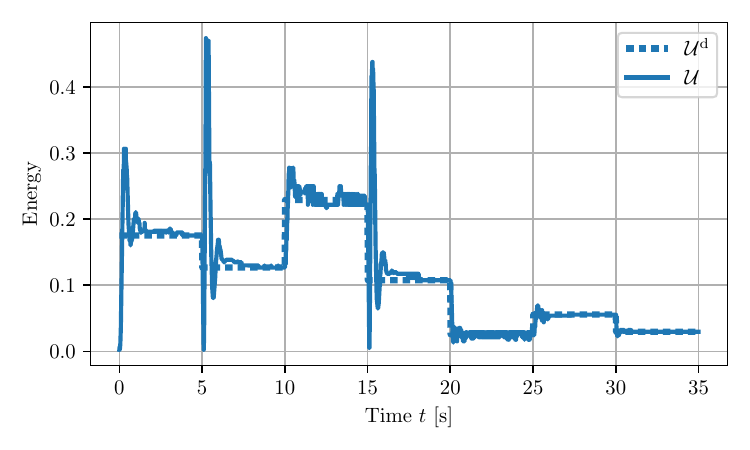}}
    \subfigure[Potential energy $U(t) \in \mathbb{R}$]{\includegraphics[width=0.49\columnwidth, trim={5, 10, 5, 5}]{figures/results/control/pcc_ns-2/con_psatid+ff/setpoint_control_sequence_U.pdf}}
    \caption{Latent-space control of a continuum soft robot (simulated using two \ac{PCC} segments) following a sequence of setpoints with a pure \textbf{P-satI-D+FF} feedback \& feedforward controller operating in a 2D latent space learned with the \textbf{\ac{CON}} model. The \ac{CON} model weights are initialized using a random seed of 0.
    The dotted and solid lines show the reference and actual values, respectively.
    For each setpoint, we randomly sample a desired shape $q^\mathrm{d}$ and render the corresponding image $o^\mathrm{d}$. This image is then encoded to a target latent $z^\mathrm{d}$. The controller then computes a latent-space torque $F^\mathrm{d}$, which is decoded to an input $u$. Finally, we provide this input to the simulator, which performs a roll-out of the closed-loop dynamics.
    Important: The robot's configuration (i.e., the first-principle, minimal-order state) is solely used for generating a target image and simulating the closed-loop system. 
    }\label{fig:apx:control:pcc_ns-2:con_PsatID+FF_results}
\end{figure}

\section{Extended discussion on future applications and limitations}\label{apx:sec:limitations}

\subsection{Systems for which we would expect the proposed method to work}
\paragraph{Mechanical systems with continuous dynamics, dissipation, and a single, attractive equilibrium point.}
The proposed method is a very good fit for mechanical systems with continuous dynamics, dissipation, and a single, attractive equilibrium point. In this case, the real system and the latent dynamics share the energetic structure and stability guarantees. Examples of such systems include many soft robots, deformable objects with dominant elastic behavior, and other mechanical structures with elasticity.

\paragraph{Local modeling of (mechanical) systems that do not meet the global assumptions.}
Even if the global assumptions of the proposed method are not met, the method can still be applied to model the local behavior around a local asymptotic equilibrium point of the system (i.e., in the case of multi-stability). For example, the method could be used to model the behavior of a robotic leg locally in contact with the ground, a cobot's interaction with its environment, etc.

\subsection{Systems for which we could envision the proposed method to work under (minor) modifications}
\paragraph{Mechanical systems without dissipation.}
The proposed method would currently not work well for mechanical systems without any dissipation, as (a) the original system will likely not have a globally asymptotically stable equilibrium point, and more importantly, (b) we currently force the damping learned in latent space to be positive definite. However, these systems are not common in practice as friction and other dissipation mechanisms are omnipresent, and the proposed method can learn very small damping values (e.g., the mass-spring+friction system). A possible remedy could be to relax the positive definiteness of the damping matrix in the latent space, allowing for zero damping. This would allow the method to work for systems without dissipation, such as conservative systems. Examples of such systems include a mass-spring system without damping, the n-body problem, etc.

\paragraph{Systems with discontinuous dynamics.}
The proposed method might underperform for systems with highly discontinuous dynamics, such as systems with impacts, friction, or other discontinuities. In these cases, the latent dynamics might not capture the real system's behavior accurately, and the control performance of feedforward + feedback will very likely be worse than pure feedback. Again, the method should be able to capture local behavior well. A possible remedy for learning global dynamics could be to augment the latent dynamics with additional terms that capture the discontinuities, such as contact and friction models (e.g., stick-slip friction).

\paragraph{Systems with multiple equilibrium points.}
The original system having multiple equilibria conflicts with the stability assumptions underlying the proposed CON latent dynamics. In this case, as, for example, seen on the pendulum+friction and double pendulum + friction results, the method might work locally but will not be able to capture the global behavior of the system. A possible remedy could be to relax the global stability assumptions of the CON network. For example, the latent dynamics could be learned in the original coordinates of CON while allowing $W$ also to be negative definite. This would allow the system to have multiple equilibria \& attractors. Examples of such systems include a robotic arm under gravity, pendula under gravity, etc.

\paragraph{Systems with periodic behavior.}
The proposed method will likely not work well for systems with periodic behavior, as they do not have a single, attractive equilibrium point. Examples of such systems include a mass-spring system with a periodic external force, a pendulum with a periodic external force, some chemical reactions, etc. Again, it is likely possible to apply the presented method to learning a local behavior (i.e., not completing the full orbit). A possible remedy could be to augment the latent dynamics with additional terms that capture the periodic behavior, such as substituting the harmonic oscillators with Van der Pol oscillators to establish a limit cycle or a supercritical Hopf bifurcation.

\subsection{Systems for which we would not expect the proposed method to work}
\paragraph{Nonholonomic systems.}
The proposed method likely would not work well for nonholonomic systems, as both structure (e.g., physical constraints) and stability characteristics would not be shared between the real system and the latent dynamics. Examples of such systems include vehicles, a ball rolling on a surface, and many mobile robots.

\paragraph{Partially observable and non-markovian systems.}
As the CON dynamics are evaluated based on the latent position and velocity encoded by the observation of the current time step and the observation-space velocity, we implicitly assume that the system is (a) fully observable and (b) fulfills the Markov property. This assumption might not hold for partially observable systems, such as systems with hidden states or systems with delayed observations. Examples of such cases include settings where the system is partially occluded or in situations without sufficient (camera) perspectives covering the system. Furthermore, time-dependent material properties, such as viscoelasticity or hysteresis, that are present and significant in some soft robots and deformable objects are not captured by the method in its current formulation.

\newpage
\section*{NeurIPS Paper Checklist}

\begin{enumerate}

\item {\bf Claims}
    \item[] Question: Do the main claims made in the abstract and introduction accurately reflect the paper's contributions and scope?
    \item[] Answer: \answerYes{} 
    \item[] Justification: The claims and contributions made in the abstract and introduction are all supported by theoretical and/or experimental results included in the paper. 
    \item[] Guidelines:
    \begin{itemize}
        \item The answer NA means that the abstract and introduction do not include the claims made in the paper.
        \item The abstract and/or introduction should clearly state the claims made, including the contributions made in the paper and important assumptions and limitations. A No or NA answer to this question will not be perceived well by the reviewers. 
        \item The claims made should match theoretical and experimental results and reflect how much the results can be expected to generalize to other settings. 
        \item It is fine to include aspirational goals as motivation as long as it is clear that these goals are not attained by the paper. 
    \end{itemize}

\item {\bf Limitations}
    \item[] Question: Does the paper discuss the limitations of the work performed by the authors?
    \item[] Answer: \answerYes{} 
    \item[] Justification: The known limitations of the proposed method are listed and presented in multiple places in the manuscript: the last paragraph of the introduction, in Section~\ref{sec:conclusion_and_limitations}, and an extended version in Appendix~\ref{apx:sec:limitations}.
    \item[] Guidelines:
    \begin{itemize}
        \item The answer NA means that the paper has no limitation while the answer No means that the paper has limitations, but those are not discussed in the paper. 
        \item The authors are encouraged to create a separate "Limitations" section in their paper.
        \item The paper should point out any strong assumptions and how robust the results are to violations of these assumptions (e.g., independence assumptions, noiseless settings, model well-specification, asymptotic approximations only holding locally). The authors should reflect on how these assumptions might be violated in practice and what the implications would be.
        \item The authors should reflect on the scope of the claims made, e.g., if the approach was only tested on a few datasets or with a few runs. In general, empirical results often depend on implicit assumptions, which should be articulated.
        \item The authors should reflect on the factors that influence the performance of the approach. For example, a facial recognition algorithm may perform poorly when image resolution is low or images are taken in low lighting. Or a speech-to-text system might not be used reliably to provide closed captions for online lectures because it fails to handle technical jargon.
        \item The authors should discuss the computational efficiency of the proposed algorithms and how they scale with dataset size.
        \item If applicable, the authors should discuss possible limitations of their approach to address problems of privacy and fairness.
        \item While the authors might fear that complete honesty about limitations might be used by reviewers as grounds for rejection, a worse outcome might be that reviewers discover limitations that aren't acknowledged in the paper. The authors should use their best judgment and recognize that individual actions in favor of transparency play an important role in developing norms that preserve the integrity of the community. Reviewers will be specifically instructed to not penalize honesty concerning limitations.
    \end{itemize}

\item {\bf Theory Assumptions and Proofs}
    \item[] Question: For each theoretical result, does the paper provide the full set of assumptions and a complete (and correct) proof?
    \item[] Answer: \answerYes{} 
    \item[] Justification: All equations, Theorems, and Lemmas are numbered and cross-referenced. In each Theorem/Lemma, we clearly state the assumptions under which the proof is valid (e.g., positive definite matrices for the \ac{ISS} stability proof). All Theorems are included in the main paper: for the global asymptotic stability proof, we directly detail the proof in the main paper, and for the \ac{ISS} proof, we provide a sketch with the full proof appearing in the Appendix. We also include auxiliary Lemmas in the Appendix.
    \item[] Guidelines:
    \begin{itemize}
        \item The answer NA means that the paper does not include theoretical results. 
        \item All the theorems, formulas, and proofs in the paper should be numbered and cross-referenced.
        \item All assumptions should be clearly stated or referenced in the statement of any theorems.
        \item The proofs can either appear in the main paper or the supplemental material, but if they appear in the supplemental material, the authors are encouraged to provide a short proof sketch to provide intuition. 
        \item Inversely, any informal proof provided in the core of the paper should be complemented by formal proofs provided in appendix or supplemental material.
        \item Theorems and Lemmas that the proof relies upon should be properly referenced. 
    \end{itemize}

    \item {\bf Experimental Result Reproducibility}
    \item[] Question: Does the paper fully disclose all the information needed to reproduce the main experimental results of the paper to the extent that it affects the main claims and/or conclusions of the paper (regardless of whether the code and data are provided or not)?
    \item[] Answer: \answerYes{} 
    \item[] Justification: All the code for the experiments has been open-sourced on GitHub. Furthermore, we provide a detailed description of the implementation details in Appendix~\ref{apx:sec:experimental_setup}.
    \item[] Guidelines:
    \begin{itemize}
        \item The answer NA means that the paper does not include experiments.
        \item If the paper includes experiments, a No answer to this question will not be perceived well by the reviewers: Making the paper reproducible is important, regardless of whether the code and data are provided or not.
        \item If the contribution is a dataset and/or model, the authors should describe the steps taken to make their results reproducible or verifiable. 
        \item Depending on the contribution, reproducibility can be accomplished in various ways. For example, if the contribution is a novel architecture, describing the architecture fully might suffice, or if the contribution is a specific model and empirical evaluation, it may be necessary to either make it possible for others to replicate the model with the same dataset, or provide access to the model. In general. releasing code and data is often one good way to accomplish this, but reproducibility can also be provided via detailed instructions for how to replicate the results, access to a hosted model (e.g., in the case of a large language model), releasing of a model checkpoint, or other means that are appropriate to the research performed.
        \item While NeurIPS does not require releasing code, the conference does require all submissions to provide some reasonable avenue for reproducibility, which may depend on the nature of the contribution. For example
        \begin{enumerate}
            \item If the contribution is primarily a new algorithm, the paper should make it clear how to reproduce that algorithm.
            \item If the contribution is primarily a new model architecture, the paper should describe the architecture clearly and fully.
            \item If the contribution is a new model (e.g., a large language model), then there should either be a way to access this model for reproducing the results or a way to reproduce the model (e.g., with an open-source dataset or instructions for how to construct the dataset).
            \item We recognize that reproducibility may be tricky in some cases, in which case authors are welcome to describe the particular way they provide for reproducibility. In the case of closed-source models, it may be that access to the model is limited in some way (e.g., to registered users), but it should be possible for other researchers to have some path to reproducing or verifying the results.
        \end{enumerate}
    \end{itemize}

\item {\bf Open access to data and code}
    \item[] Question: Does the paper provide open access to the data and code, with sufficient instructions to faithfully reproduce the main experimental results, as described in supplemental material?
    \item[] Answer: \answerYes{} 
    \item[] Justification: The code associated with this paper is available on GitHub\footnote{\scriptsize \url{https://github.com/tud-phi/uncovering-iss-coupled-oscillator-networks-from-pixels}}. It allows the user to generate the datasets, run the hyperparameters selection, train the models, and generats result plots based on training checkpoints.
    \item[] Guidelines:
    \begin{itemize}
        \item The answer NA means that paper does not include experiments requiring code.
        \item Please see the NeurIPS code and data submission guidelines (\url{https://nips.cc/public/guides/CodeSubmissionPolicy}) for more details.
        \item While we encourage the release of code and data, we understand that this might not be possible, so “No” is an acceptable answer. Papers cannot be rejected simply for not including code, unless this is central to the contribution (e.g., for a new open-source benchmark).
        \item The instructions should contain the exact command and environment needed to run to reproduce the results. See the NeurIPS code and data submission guidelines (\url{https://nips.cc/public/guides/CodeSubmissionPolicy}) for more details.
        \item The authors should provide instructions on data access and preparation, including how to access the raw data, preprocessed data, intermediate data, and generated data, etc.
        \item The authors should provide scripts to reproduce all experimental results for the new proposed method and baselines. If only a subset of experiments are reproducible, they should state which ones are omitted from the script and why.
        \item At submission time, to preserve anonymity, the authors should release anonymized versions (if applicable).
        \item Providing as much information as possible in supplemental material (appended to the paper) is recommended, but including URLs to data and code is permitted.
    \end{itemize}

\item {\bf Experimental Setting/Details}
    \item[] Question: Does the paper specify all the training and test details (e.g., data splits, hyperparameters, how they were chosen, type of optimizer, etc.) necessary to understand the results?
    \item[] Answer: \answerYes{} 
    \item[] Justification: In Appendix~\ref{apx:sec:experimental_setup}, we include implementation details such as the used libraries, algorithms, optimizers, and evaluation procedures. 
    All hyperparameters (e.g., learning rate, loss weights, weight decay, etc.) can be found in the code on GitHub (specifically, in the \texttt{sweep\_generic\_dynamics\_autoencoder.py} Python script that is placed in the \texttt{examples/sweeping} folder).
    \item[] Guidelines:
    \begin{itemize}
        \item The answer NA means that the paper does not include experiments.
        \item The experimental setting should be presented in the core of the paper to a level of detail that is necessary to appreciate the results and make sense of them.
        \item The full details can be provided either with the code, in the appendix, or as supplemental material.
    \end{itemize}

\item {\bf Experiment Statistical Significance}
    \item[] Question: Does the paper report error bars suitably and correctly defined or other appropriate information about the statistical significance of the experiments?
    \item[] Answer: \answerYes{} 
    \item[] Justification: We run all experiments with various initializations (i.e., different random seeds), and each result table/plot is accompanied by a description of how the variability of results is captured.
    \item[] Guidelines:
    \begin{itemize}
        \item The answer NA means that the paper does not include experiments.
        \item The authors should answer "Yes" if the results are accompanied by error bars, confidence intervals, or statistical significance tests, at least for the experiments that support the main claims of the paper.
        \item The factors of variability that the error bars are capturing should be clearly stated (for example, train/test split, initialization, random drawing of some parameter, or overall run with given experimental conditions).
        \item The method for calculating the error bars should be explained (closed form formula, call to a library function, bootstrap, etc.)
        \item The assumptions made should be given (e.g., Normally distributed errors).
        \item It should be clear whether the error bar is the standard deviation or the standard error of the mean.
        \item It is OK to report 1-sigma error bars, but one should state it. The authors should preferably report a 2-sigma error bar than state that they have a 96\% CI, if the hypothesis of Normality of errors is not verified.
        \item For asymmetric distributions, the authors should be careful not to show in tables or figures symmetric error bars that would yield results that are out of range (e.g. negative error rates).
        \item If error bars are reported in tables or plots, The authors should explain in the text how they were calculated and reference the corresponding figures or tables in the text.
    \end{itemize}

\item {\bf Experiments Compute Resources}
    \item[] Question: For each experiment, does the paper provide sufficient information on the computer resources (type of compute workers, memory, time of execution) needed to reproduce the experiments?
    \item[] Answer: \answerYes{} 
    \item[] Justification: In Section~\ref{apx:sub:compute_resource}, we report details about the necessary compute for performing the experiments reported in this paper.
    \item[] Guidelines:
    \begin{itemize}
        \item The answer NA means that the paper does not include experiments.
        \item The paper should indicate the type of compute workers CPU or GPU, internal cluster, or cloud provider, including relevant memory and storage.
        \item The paper should provide the amount of compute required for each of the individual experimental runs as well as estimate the total compute. 
        \item The paper should disclose whether the full research project required more compute than the experiments reported in the paper (e.g., preliminary or failed experiments that didn't make it into the paper). 
    \end{itemize}
    
\item {\bf Code Of Ethics}
    \item[] Question: Does the research conducted in the paper conform, in every respect, with the NeurIPS Code of Ethics \url{https://neurips.cc/public/EthicsGuidelines}?
    \item[] Answer: \answerYes{} 
    \item[] Justification: The authors have reviewed the NeurIPS Code of Ethics, and this research conforms, in every respect, with this code.
    \item[] Guidelines:
    \begin{itemize}
        \item The answer NA means that the authors have not reviewed the NeurIPS Code of Ethics.
        \item If the authors answer No, they should explain the special circumstances that require a deviation from the Code of Ethics.
        \item The authors should make sure to preserve anonymity (e.g., if there is a special consideration due to laws or regulations in their jurisdiction).
    \end{itemize}

\item {\bf Broader Impacts}
    \item[] Question: Does the paper discuss both potential positive societal impacts and negative societal impacts of the work performed?
    \item[] Answer: \answerNA{} 
    \item[] Justification: This paper primarily involves fundamental research, and the presented application of predicting and controlling the future evolutions of dynamical systems does not directly have any broader societal impact.
    \item[] Guidelines:
    \begin{itemize}
        \item The answer NA means that there is no societal impact of the work performed.
        \item If the authors answer NA or No, they should explain why their work has no societal impact or why the paper does not address societal impact.
        \item Examples of negative societal impacts include potential malicious or unintended uses (e.g., disinformation, generating fake profiles, surveillance), fairness considerations (e.g., deployment of technologies that could make decisions that unfairly impact specific groups), privacy considerations, and security considerations.
        \item The conference expects that many papers will be foundational research and not tied to particular applications, let alone deployments. However, if there is a direct path to any negative applications, the authors should point it out. For example, it is legitimate to point out that an improvement in the quality of generative models could be used to generate deepfakes for disinformation. On the other hand, it is not needed to point out that a generic algorithm for optimizing neural networks could enable people to train models that generate Deepfakes faster.
        \item The authors should consider possible harms that could arise when the technology is being used as intended and functioning correctly, harms that could arise when the technology is being used as intended but gives incorrect results, and harms following from (intentional or unintentional) misuse of the technology.
        \item If there are negative societal impacts, the authors could also discuss possible mitigation strategies (e.g., gated release of models, providing defenses in addition to attacks, mechanisms for monitoring misuse, mechanisms to monitor how a system learns from feedback over time, improving the efficiency and accessibility of ML).
    \end{itemize}
    
\item {\bf Safeguards}
    \item[] Question: Does the paper describe safeguards that have been put in place for responsible release of data or models that have a high risk for misuse (e.g., pretrained language models, image generators, or scraped datasets)?
    \item[] Answer: \answerNA{} 
    \item[] Justification: This work by itself does not pose any risk for misuse.
    \item[] Guidelines:
    \begin{itemize}
        \item The answer NA means that the paper poses no such risks.
        \item Released models that have a high risk for misuse or dual-use should be released with necessary safeguards to allow for controlled use of the model, for example by requiring that users adhere to usage guidelines or restrictions to access the model or implementing safety filters. 
        \item Datasets that have been scraped from the Internet could pose safety risks. The authors should describe how they avoided releasing unsafe images.
        \item We recognize that providing effective safeguards is challenging, and many papers do not require this, but we encourage authors to take this into account and make a best faith effort.
    \end{itemize}

\item {\bf Licenses for existing assets}
    \item[] Question: Are the creators or original owners of assets (e.g., code, data, models), used in the paper, properly credited and are the license and terms of use explicitly mentioned and properly respected?
    \item[] Answer: \answerYes{} 
    \item[] Justification: The code accompanying this submission is original. While we do rely on common open-source 3rd-party packages (e.g., JAX, flax, diffrax, etc.), we clearly document these dependencies in the \texttt{requirements.txt} file of the accompanying code archive. Furthermore, we leverage the datasets that are part of the \emph{DeepMind Hamiltonian Dynamics Suite}~\cite{botev2021priors}\footnote{\scriptsize \url{https://github.com/mstoelzle/dm_hamiltonian_dynamics_suite}} and have been open-sourced with an Apache 2.0 license.
    \item[] Guidelines:
    \begin{itemize}
        \item The answer NA means that the paper does not use existing assets.
        \item The authors should cite the original paper that produced the code package or dataset.
        \item The authors should state which version of the asset is used and, if possible, include a URL.
        \item The name of the license (e.g., CC-BY 4.0) should be included for each asset.
        \item For scraped data from a particular source (e.g., website), the copyright and terms of service of that source should be provided.
        \item If assets are released, the license, copyright information, and terms of use in the package should be provided. For popular datasets, \url{paperswithcode.com/datasets} has curated licenses for some datasets. Their licensing guide can help determine the license of a dataset.
        \item For existing datasets that are re-packaged, both the original license and the license of the derived asset (if it has changed) should be provided.
        \item If this information is not available online, the authors are encouraged to reach out to the asset's creators.
    \end{itemize}

\item {\bf New Assets}
    \item[] Question: Are new assets introduced in the paper well documented and is the documentation provided alongside the assets?
    \item[] Answer: \answerYes{} 
    \item[] Justification: We release the necessary code to generate the datasets that we used in this paper alongside the submission. 
    \item[] Guidelines:
    \begin{itemize}
        \item The answer NA means that the paper does not release new assets.
        \item Researchers should communicate the details of the dataset/code/model as part of their submissions via structured templates. This includes details about training, license, limitations, etc. 
        \item The paper should discuss whether and how consent was obtained from people whose asset is used.
        \item At submission time, remember to anonymize your assets (if applicable). You can either create an anonymized URL or include an anonymized zip file.
    \end{itemize}

\item {\bf Crowdsourcing and Research with Human Subjects}
    \item[] Question: For crowdsourcing experiments and research with human subjects, does the paper include the full text of instructions given to participants and screenshots, if applicable, as well as details about compensation (if any)? 
    \item[] Answer: \answerNA{} 
    \item[] Justification: This research did not involve any crowdsourcing experiments or trials with human subjects.
    \item[] Guidelines:
    \begin{itemize}
        \item The answer NA means that the paper does not involve crowdsourcing nor research with human subjects.
        \item Including this information in the supplemental material is fine, but if the main contribution of the paper involves human subjects, then as much detail as possible should be included in the main paper. 
        \item According to the NeurIPS Code of Ethics, workers involved in data collection, curation, or other labor should be paid at least the minimum wage in the country of the data collector. 
    \end{itemize}

\item {\bf Institutional Review Board (IRB) Approvals or Equivalent for Research with Human Subjects}
    \item[] Question: Does the paper describe potential risks incurred by study participants, whether such risks were disclosed to the subjects, and whether Institutional Review Board (IRB) approvals (or an equivalent approval/review based on the requirements of your country or institution) were obtained?
    \item[] Answer: \answerNA{} 
    \item[] Justification: This research did not involve any trials with human subjects.
    \item[] Guidelines:
    \begin{itemize}
        \item The answer NA means that the paper does not involve crowdsourcing nor research with human subjects.
        \item Depending on the country in which research is conducted, IRB approval (or equivalent) may be required for any human subjects research. If you obtained IRB approval, you should clearly state this in the paper. 
        \item We recognize that the procedures for this may vary significantly between institutions and locations, and we expect authors to adhere to the NeurIPS Code of Ethics and the guidelines for their institution. 
        \item For initial submissions, do not include any information that would break anonymity (if applicable), such as the institution conducting the review.
    \end{itemize}

\end{enumerate}


\newpage

\end{document}